\def\ps@pprintTitle{%
\let\@oddhead\@empty
\let\@evenhead\@empty
\def\@oddfoot{\scriptsize{\it Preprint accepted in Neurocomputing}\phantom{\scriptsize 5 nov 2019} \hfill {\normalsize \thepage} \hfill  \phantom{\it\scriptsize Preprint accepted in Neurocomputing}5 nov 2019}%
\let\@evenfoot\@oddfoot}
\newtheorem{corollary}{Corollary}
\newtheorem{lemma}{Lemma}
\newtheorem{theorem}{Theorem}
\newtheorem{definition}{Definition}
\newcommand{\xbf}{\ensuremath{\mathbf{x}}}
\newcommand{\xb}{\xbf}
\newcommand{\sbf}{\xbf}
\newcommand{\tbf}{\xbf}
\newcommand{\exbfs}{(\sbf,y)\sim\PS}
\newcommand{\exbft}{(\tbf,y)\sim\PT}
\newcommand{\xbfs}{\sbf\sim\DS}
\newcommand{\exbf}{(\xbf,y)}
\newcommand{\zerobf}{{\mathbf 0}}
\newcommand{\mt}{{m_t}}
\newcommand{\ms}{{m_s}}
\newcommand{\m}{\textsc{m}}
\newcommand{\wbf}{\mathbf{w}}
\newcommand{\wb}{\wbf}
\newcommand{\ab}{{\pmb{\alpha}}}
\newcommand{\alphab}{\ab}
\newcommand{\Hcal}{\ensuremath{\mathcal{H}}}
\newcommand{\hdh}{\Hcal\!\Delta\!\Hcal}
\newcommand{\X}{{\bf X}}
\newcommand{\Y}{Y}
\newcommand{\XY}{\X\times\Y}
\newcommand{\R}{\mathbb{R}}
\newcommand{\Rbb}{\R}
\newcommand{\TminusS}{{\PT{\setminus}\PS}}
\newcommand{\posterior}{\rho}
\newcommand{\prior}{\pi}
\newcommand{\Q}{\posterior}
\newcommand{\PT}{{\cal T}}
\newcommand{\PS}{{\cal S}}
\newcommand{\DS}{{\cal S}_{\X}}
\newcommand{\DT}{{\cal T}_{\X}}
\newcommand{\D}{{\cal D}_{\X}}
\renewcommand{\P}{{\cal D}}
\newcommand{\Ncal}{{\mathcal N}}
\newcommand{\BQ}{B_\posterior}
\newcommand{\GQ}{G_\posterior}
\newcommand{\Risk}{{\rm R}}
\newcommand{\RP}{\Risk_\P}
\newcommand{\RD}{\Risk_{\D}}
\newcommand{\RPT}{\Risk_{\PT}}
\newcommand{\RPS}{\Risk_{\PS}}
\newcommand{\RS}{\widehat{\Risk}_S}
\newcommand{\RT}{\widehat{\Risk}_T}
\newcommand{\RDS}{\Risk_{\DS}}
\newcommand{\RDT}{\Risk_{\DT}}
\newcommand{\loss}{\mathcal{L}}
\newcommand{\zoloss}{\loss_{\zo}}
\renewcommand{\d}{{\rm d}}
\newcommand{\dD}{\d_{\D}}
\newcommand{\dDT}{{\d}_{\DT}}
\newcommand{\dDS}{{\d}_{\DS}}
\newcommand{\dT}{\widehat{\d}_T}
\newcommand{\dS}{\widehat{\d}_S}
\newcommand{\e}{{\rm e}}
\newcommand{\eP}{\e_{\P}}
\newcommand{\ep}{\eP}
\newcommand{\ePT}{\e_{\PT}}
\newcommand{\ePS}{\e_{\PS}}
\newcommand{\eS}{\widehat{\e}_{S}}
\newcommand{\Phirisk}{\Phi_{\Risk}}
\newcommand{\Phic}{{\widetilde{\Phi}}_{\Risk}}
\newcommand{\Phidis}{\Phi_{\d}}
\newcommand{\Phierr}{\Phi_{\e}}
\newcommand{\disc}{\operatorname{disc}}
\newcommand{\disoperator}{\operatorname{dis}}
\newcommand{\des}{\disoperator_\posterior}
\newcommand{\dis}{\des} 
\newcommand{\desw}{\disoperator_{\posterior_\wb}}
\newcommand{\desST}{\widehat{\disoperator}_{\posterior}(S,T)}
\newcommand{\deswST}{\widehat{\disoperator}_{\posterior_\wb}(S,T)}
\newcommand{\betainf}{\beta_{\infty}}
\newcommand{\betaq}{\beta_{q}}
\newcommand{\bq}[1][q]{\beta_{#1}(\PT\|\PS)}
\newcommand{\binf}{\bq[\infty]}
\newcommand{\bqx}[1][q]{\beta_{#1}(\DT\|\DS)}
\newcommand{\PBDA}{{\sc pbda}\xspace}
\newcommand{\DALC}{\algo}
\newcommand{\algo}{{\sc dalc}\xspace}
\newcommand{\SVM}{{\sc svm}\xspace}
\newcommand{\CODA}{{\sc coda}\xspace}
\newcommand{\DASVM}{{\sc dasvm}\xspace}
\newcommand{\PBGD}{{\sc pbgd3}\xspace}
\newcommand{\eqdef}{=} 
\DeclareMathOperator*{\Esp}{\mathbf{E}}
\DeclareMathOperator*{\argmin}{\mathrm{argmin}}
\newcommand{\Ibf}{{\mathbf I}}
\newcommand{\I}{\ensuremath{\mathbf{I}}}
\renewcommand{\I}{\mathrm{I}}
\newcommand{\Erf}{\ensuremath{\textrm{Erf}}}
\newcommand{\sign}{\operatorname{sign}}
\newcommand{\sgn}{\sign}
\newcommand{\scriptsupport}{\mbox{\scriptsize\sc supp}}
\newcommand{\support}{\mbox{\small\sc supp}}
\newcommand{\KL}{{\rm KL}}
\newcommand{\kl}{{\rm kl}}
\newcommand{\zo}{0\textrm{-}\!1}
\newcommand{\pii}{{\boldsymbol{\uppi}}}
\newcommand{\ets}{\eta_\TminusS}
\newcommand{\Bcal}{\mathcal{B}}
\newcommand{\LP}{\left(}
\newcommand{\RPP}{\right)}
\newcommand{\ie}{{\em i.e.\/}}
\newcommand{\eg}{{\em e.g.\/}}
\newcommand{\eqdots}{\coloneqq}
\newcommand{\degree}{\ensuremath{^\circ}}
\begin{document}

\begin{frontmatter}

\title{PAC-Bayes and Domain Adaptation}
\author[inria,ulaval]{Pascal Germain}
\author[ujm]{Amaury Habrard}
\author[ulaval]{Francois Laviolette}
\author[ujm]{Emilie Morvant}

\address[ujm]{Univ Lyon, UJM-Saint-Etienne, CNRS, Institut d Optique Graduate School,
\\Laboratoire Hubert Curien UMR 5516, F-42023, Saint-Etienne, France}
\address[ulaval]{D\'epartement d'Informatique et de G\'enie Logiciel, Universit\'e Laval, Qu\'ebec, Canada}
\address[inria]{Inria Lille - Nord Europe, Modal Project-Team, 59650 Villeneuve d'Ascq, France}

\begin{abstract}
We provide two main contributions in PAC-Bayesian theory for domain adaptation where the objective is to learn, from a source distribution, a well-performing majority vote on a different, but related, target distribution. 
Firstly, we propose an improvement of the previous approach we proposed in~\cite{pbda}, which relies on a novel distribution pseudodistance based on a disagreement averaging, allowing us to derive a new tighter domain adaptation bound for the target risk.
While this bound stands in the spirit of common domain adaptation works, we derive a second bound (introduced in~\cite{dalc}) that brings a new perspective on domain adaptation by deriving an upper bound on the target risk where the distributions' divergence---expressed as a ratio---controls the trade-off between a source error measure and the target voters' disagreement. 
We discuss and compare both results, from which we obtain PAC-Bayesian generalization bounds. Furthermore, from the PAC-Bayesian specialization to linear classifiers, we infer two learning algorithms, and we evaluate them on real data.
\end{abstract}

\begin{keyword}
Domain Adaptation, PAC-Bayesian Theory
\end{keyword}

\end{frontmatter}

\section{Introduction}

As human beings, we learn from what we saw before. 
Think about our education process: When a student attends to a new course, the knowledge he has acquired from previous courses helps him to understand the current one.
However, traditional machine learning approaches assume that the learning and test data are drawn from the same probability distribution.
This assumption may be too strong for a lot of real-world tasks, in particular those where we desire to reuse a model from one task to another one.
For instance, a spam filtering system suitable for one user can be poorly adapted to another who receives significantly different emails.
In other words, the learning data associated with one or several users could be unrepresentative of the test data coming from another one.
This enhances the need to design methods for adapting a classifier from learning (source) data to test (target) data.
One solution to tackle this issue is to consider the {\it domain adaptation} framework,\footnote{The reader can refer to the surveys proposed in \citep{JiangSurvey08,Quionero-Candela:2009,Margolis2011,wang2018deep,wouter2019,BookDATheory} (domain adaptation is often associated with \emph{transfer learning}~\citep{Pan-TL-TKDE09}).} which arises when the distribution generating the target data (the {\it target domain}) differs from the one generating the source data (the {\it source domain}).
Note that, it is well known that domain adaptation is a hard and challenging task even under strong assumptions~\citep{David-AISTAT10,BenDavid12,Ben-DavidU14}. 

\subsection{Approaches to Address Domain Adaptation}

Many approaches exist in the literature to address domain adaptation, often with the same underlying idea: If we are able to apply a transformation in order to ``move closer'' the distributions, then we can learn a model with the available labels.
This process can be performed by reweighting the importance of labeled data~\citep{HuangSGBS-nips06,Sugiyama-NIPS07,CortesMM10,cortes-15}. 
This is one of the most popular methods when one wants to deal with the covariate-shift issue \citep[\emph{e.g.,}][]{HuangSGBS-nips06,sugiyama2008direct}, where source and target domains diverge only in their marginals, {\it i.e.}, when they share the same labeling function.
Another technique is to exploit self-labeling procedures, where the objective is to transfer the source labels to the target unlabeled points \citep[\eg,][]{BruzzoneM10S,habrard2013iterative,PVMinCq}.
A third solution is to learn a new common representation space from the unlabeled part of source and target data.
%Then, a standard supervised learning algorithm can be run on the source labeled instances~\citep[\eg,][]{glorot2011domain,Chen12,courty2016optimal,ganin-16,CourtyFTR17}.
Then, a standard supervised learning algorithm can be run on the source labeled \citep[\eg,][]{glorot2011domain,Chen12,courty2016optimal,CourtyFTR17,LiLH0S19}, or can be learned simultaneously with the new representation; the latter method being increasingly used in state-of-the-art deep neural networks learning strategies \citep[\eg,][]{ganin-16,DingF18,ShuBNE18,LiLHZS19,SebagHSSWA19}.
A slightly different approach, known as hypothesis transfer learning, aims at directly transferring the learned source model to the target domain~\cite{kuzborskij2013stability,ThesisKuzborskij2018theory}.

The work presented in this paper stands into a fifth %fourth
popular class of approaches, which has been especially explored to derive generalization bounds for domain adaptation. 
This kind of approach relies on the control of a measure of divergence/distance between the source distribution and target distribution \citep[\eg][]{BenDavid-NIPS06,BenDavid-MLJ2010,li2007bayesian,Zhang12,morvant12,CortesM14,redko2017theoretical}.
Such a distance usually depends on the set $\Hcal$ of hypotheses considered by the learning algorithm. 
The intuition is that one must look for a set $\Hcal$ that minimizes the distance between the distributions while preserving good performances on the source data; if the distributions are close under this measure, then generalization ability may be ``easier'' to quantify.
In fact, defining such a measure to quantify how much the domains are related is a major issue in domain adaptation.
For example, for binary classification with the $\zo$-loss function, \citeauthor{BenDavid-MLJ2010}~\cite{BenDavid-MLJ2010,BenDavid-NIPS06} 
have considered the \emph{\mbox{{\small $\hdh$}-divergence}} between the source and target marginal distributions. 
This quantity depends on the maximal disagreement between two classifiers, and  allowed them to deduce a domain adaptation generalization bound based on the \mbox{VC-dimension} theory. 
The \emph{discrepancy distance} proposed by \citet{Mansour-COLT09} generalizes this divergence to real-valued functions and more general losses, and is used to obtain a generalization bound based on the Rademacher complexity.
In this context, \citet{CortesM11,CortesM14} have specialized the minimization of the discrepancy to regression with kernels.
In these situations, domain adaptation can be viewed as a multiple trade-off between the complexity of the hypothesis class $\Hcal$, the adaptation ability of~$\Hcal$ according to the divergence between the marginals, and the empirical source risk.
Moreover, other measures have been exploited under different assumptions, such as the R{\'e}nyi divergence suitable for importance weighting \citep{MansourMR09}, or the measure proposed by \citet{Zhang12} which takes into account the source and target true labeling, or the Bayesian \emph{divergence prior}  \citep{li2007bayesian} which favors classifiers closer to the best source model, or the Wassertein distance~\cite{redko2017theoretical} that justifies the usefulness of optimal transport strategy in domain adaptation~\cite{courty2016optimal,CourtyFTR17}.
However, a majority of methods prefer to perform a two-step approach: {\it (i)}~First construct a suitable representation by minimizing the divergence, then {\it (ii)}~learn a model on the source domain in the new representation space.

\subsection{A PAC-Bayesian Standpoint}

Given the multitude of concurrent approaches for domain adaptation, and the nonexistence of a predominant one, we believe that the problem still needs to be studied from different perspectives for a global comprehension to emerge.
We aim to contribute to this study from a PAC-Bayesian standpoint.
One particularity of the {\it  PAC-Bayesian theory} (first set out by \citet{Mcallester99a}) is that it focuses on algorithms that output a {\it posterior distribution} $\posterior$ over a classifier set~$\Hcal$ ({\it i.e.}, a \mbox{$\posterior$-average} over $\Hcal$) rather than just a single predictor $h\in\Hcal$ (as in~\cite{BenDavid-NIPS06}, and other works cited above).
More specifically, we tackle the \emph{unsupervised domain adaptation} setting for binary classification, where no target labels are provided to the learner.
We propose two domain adaptation analyses, both
introduced separately in previous conference papers \citep{pbda,dalc}. We refine these  results, and provide in-depth comparison, full proofs and technical details.
Our analyses highlight different angles that one can adopt when studying domain adaptation.

Our first approach follows the philosophy of the seminal work of \citeauthor{BenDavid-MLJ2010}~\cite{BenDavid-MLJ2010,BenDavid-NIPS06} and~\citet{MansourMR09}: The risk of the target model is upper-bounded jointly by the model's risk on the source distribution, a divergence between the marginal distributions, and a non-estimable term\footnote{More precisely, this term can only be estimated in the presence of labeled data from both the source and the target domains.} related to the ability to adapt in the current space. 
To obtain such a result, we define a pseudometric which is ideal for the PAC-Bayesian setting by evaluating the domains' divergence according to the \mbox{$\posterior$-average} disagreement of the classifiers over the domains. 
Additionally, we prove that this domains' divergence is always lower than the popular \mbox{{\small $\hdh$}-divergence}, and is easily estimable from samples.
Note that, based on this disagreement measure, we derived in a previous work \citep{pbda} a first PAC-Bayesian domain adaptation bound expressed as a \mbox{$\posterior$-averaging}.
We provide here a new version of this result, that does not change the underlying philosophy supported by the previous bound, but clearly 
improves the theoretical result: The domain adaptation bound is now tighter and easier to interpret.

Our second analysis (introduced in~\cite{dalc}) consists in a target risk bound that brings an original way to think about domain adaptation problems.
Concretely, the risk of the target model is still upper-bounded by three terms, but they differ in the information they capture.
The first term is estimable from unlabeled data and relies on the disagreement of the classifiers only on the target domain.
The second term depends on the expected accuracy of the classifiers on the source domain. 
Interestingly, this latter is weighted by a divergence between the source and the target domains that enables controlling the relationship between domains.
The third term estimates the ``volume'' of the target domain living apart from the source one,\footnote{Here we do not focus on learning a new representation to help the adaptation: We directly aim at adapting in the current representation space.} which has to be small for ensuring adaptation.

Thanks to these results, we derive PAC-Bayesian generalization bounds for our two domain adaptation bounds.
Then, in contrast to the majority of methods that perform a two-step procedure, we design two algorithms tailored to linear classifiers, called \PBDA and \DALC, which jointly minimize the multiple trade-offs implied by the bounds.
On the one hand, \PBDA is inspired by our first analysis for which the first two quantities being, as usual in the PAC-Bayesian approach, the complexity of the \mbox{$\posterior$-weighted} majority vote measured by a Kullback-Leibler divergence and the empirical risk measured by the \mbox{$\posterior$-average} errors on the source sample. 
The third quantity corresponds to our domains' divergence and assesses the capacity of the posterior distribution to distinguish some structural difference between the source and target samples.
On the other hand, \DALC is inspired by our second analysis from which we deduce that a good adaptation strategy consists in finding a \mbox{$\posterior$-weighted} majority vote leading to a suitable trade-off---controlled by the domains' divergence---between the first two terms (and the usual Kullback-Leibler divergence): 
Minimizing the first one corresponds to look for classifiers that disagree on the target domain, and minimizing the second one to seek accurate classifiers on the source.

\subsection{Paper Structure and Novelties}

This paper aims at unifying contributions of our previous papers~\cite{pbda,dalc}. 
We have also added the following contributions.
\begin{itemize}
    \item Subsection~\ref{sec:pbgd} presents the explicit derivation of the algorithm {\small PBGD3}~\citep{germain2009pac} (see also the mathematical details of \ref{appendix:RSGw}). An original illustration of the algorithm optimized trade-off is also given (Subsection~\ref{section:pbgd_illustration} and Figure~\ref{fig:phiclassic}).
    \item Theorem~\ref{theo:pbda} introduces an improved version of the original PAC-Bayesian domain adaptation bound \cite{pbda}.  As discussed in Subsection~\ref{sec:pbda_quantities}, this new theorem provides tighter generalization guarantees and is easier to interpret. Moreover, the bound is not degenerated when the source and target distributions are the same or close, which was an undesirable behavior of the previous result.
    \item Section~\ref{sec:pb_da_bounds} presents comprehensive and reorganized proofs of the main PAC-Bayesian results. In this new version, both Theorem~\ref{theo:pbda} (improved version of \cite{pbda}) and Theorem~\ref{theo:dalc} (from \cite{dalc}) build on a common result, given by Corollary~\ref{theo:catoni_new}, instead of being proven independently.
    \item Section~\ref{sec:dapbgd} gives the extended mathematical details leading to the two learning algorithms (\PBDA~\cite{pbda} and \DALC~\cite{dalc}), including the equation of their kernelized version (Subsection~\ref{section:kernelDA}). Moreover, an original toy experiment illustrates the particularities of the two algorithms in regards of each other (Subsection~\ref{section:dalc_illustration} and Figure~\ref{fig:phida}).
\end{itemize}

%\subsection{Current Paper Structure}

The rest of the paper is structured as follows.
Section~\ref{sec:notations} deals with two seminal works on domain adaptation. 
The PAC-Bayesian framework is then recalled in Section~\ref{sec:pacbayes}, along with the details of {\small PBGD3} algorithm \citep{germain2009pac}.
%Note that for the sake of completeness, we provide for \emph{the first time} the explicit derivation of the algorithm {\small PBGD3} \citep{germain2009pac} tailored to linear classifiers in supervised learning.
Our main contribution, which consists in two domain adaptation bounds suitable for PAC-Bayesian learning, is presented in Section~\ref{sec:two_da_bounds}, the associated generalization bounds are derived in Section~\ref{sec:pb_da_bounds}.
Then, we design our new algorithms for PAC-Bayesian domain adaptation in Section~\ref{sec:dapbgd}, that we empirically evaluate in Section~\ref{sec:expe}.
We conclude in Section~\ref{sec:conclu}.

\section{Domain Adaptation Related Works}
\label{sec:notations}
\label{sec:da}

In this section, we review the two seminal works in domain adaptation that are based on a divergence measure between the domains~\cite{BenDavid-MLJ2010,BenDavid-NIPS06,Mansour-COLT09}.

\subsection{Notations and Setting}
We consider domain adaptation for binary classification\footnote{Our domain adaptation analysis is tailored for binary classification, and does not directly extend to multi-class and regression problems.} tasks where $\X  \subseteq   \mathbb{R}^d$ is the input space of dimension $d$, and \mbox{$\Y  =  \{-1, +1\}$} is the output/label set.
The {\it source domain} $\PS$ and the {\it target domain} $\PT$ are two different distributions (unknown and fixed) over $\X\times\Y$, $\DS$ and $\DT$ being the respective marginal distributions over $\X$. 
We tackle the challenging task where we have no target labels, known as \emph{unsupervised domain adaptation}. 
A learning algorithm is then provided with a {\it labeled source sample} $S = \{(\sbf_i,y_i)\}_{i=1}^{\ms}$ consisting of $\ms$ examples drawn {\it i.i.d.}\footnote{{\it i.i.d.} stands for {\it independent and identically distributed}.} from $\PS$, and an {\it unlabeled target sample} $T = \{\tbf_j\}_{j=1}^{\mt}$ consisting of $\mt$ examples drawn {\it i.i.d.} from $\DT$.
We denote the distribution $\P$ of a \mbox{$m$-sample} by $(\P)^m$.
We suppose that $\Hcal$ is a set of hypothesis functions for $\X$ to $Y$.
The {\it expected source error} and the {\it expected target error} of $h\in\Hcal$ over $\PS$, respectively~$\PT$, are the probability that~$h$ errs on the entire distribution $\PS$, respectively~$\PT$,
$$
\RPS(h) \ \eqdef \, \Esp_{(\sbf,y) \sim \PS} \zoloss\big( h(\sbf), y \big)\,,\quad\mbox{and}\quad \RPT(h) \ \eqdef \, \Esp_{(\tbf,y) \sim \PT} \zoloss\big( h(\tbf), y \big)\,,
$$
where $\zoloss(a,b)  \eqdef  \I[a {\ne} b]$ is the $\zo$-loss function which returns $1$ if $a {\ne}   b$ and $0$ otherwise.  The {\it empirical source error} $\RS(h)$ of $h$ on the learning source sample~$S$~is
$$
\RS(h)  \ \eqdef \ \frac{1}{\ms}\sum_{(\sbf,y) \in S} \zoloss\big( h(\sbf), y \big)\,.
$$
The main objective in domain adaptation is then to learn---without target labels---a classifier $h\in\Hcal$ leading to the lowest expected target error $\RPT(h)$. \\

Given two classifiers $(h',h)\in\Hcal^2$, we also introduce the notion of {\it expected source disagreement} $\RDS(h,h')$ and the {\it expected target disagreement} $\RDT(h,h')$, which measure the probability that $h$ and $h'$ do not agree on the respective marginal distributions, and are defined by
$$
\RDS(h,h') \ \eqdef\ \, \Esp_{\mathclap{\sbf\sim \DS}}\  \zoloss\big( h(\sbf), h'(\sbf) \big)\,\ 
\mbox{ and }\ \RDT(h,h') \ \eqdef\ \, \Esp_{\mathclap{\tbf\sim \DT}} \ \zoloss\big( h(\tbf), h'(\tbf) \big)\,.
$$
The {\it empirical source disagreement} $\RS(h, h')$ on $S$ and the {\it empirical target disagreements} $\RT(h, h')$ on $T$ are
$$
\RS(h,h')  \eqdef \frac{1}{\ms}\sum_{\sbf\in S}\! \zoloss\big( h(\sbf), h'(\sbf) \big)\,\mbox{ and }\RT(h,h') \eqdef  \frac{1}{\mt}\!\sum_{\tbf\in T} \zoloss\big( h(\tbf), h'(\tbf) \big).
$$
Note that, depending on the context, $S$ denotes either the source labeled sample $\{ (\sbf_i,y_i) \}_{i=1}^{\ms} $ or its unlabeled part $\{\sbf_i\}_{i=1}^{\ms} $.
We can remark that the expected error $R_{\P}(h)$ on a distribution $\P$ can be viewed as a shortcut notation for the expected disagreement between a hypothesis $h$ and a labeling function $f_{\P}:\X\to\Y$ that assigns the true label to an example description  with respect to $\P$. 
We have
\begin{align*}
 \Risk_{\P}(h) \ &=\ \Risk_{\D}(h, f_{\P}) \\ 
&=\  \Esp_{\xb\sim \D} \zoloss\big( h(\xb), f_{\P}(\xb) \big)\,. 
\end{align*}

\subsection{Necessity of a Domains' Divergence}
\label{sec:necessity_dist}

The domain adaptation objective is to find a low-error target hypothesis, even if the target labels are not available. 
Even under strong assumptions, this task can be impossible to solve \citep{David-AISTAT10,BenDavid12,Ben-DavidU14}. 
However, for deriving generalization ability in a domain adaptation situation (with the help of a domain adaptation bound), it is critical to make use of a divergence between the source and the target domains: The more similar the domains, the easier the adaptation appears.
Some previous works have proposed different quantities to estimate how a domain is close to another one \citep{BenDavid-NIPS06,li2007bayesian,Mansour-COLT09,MansourMR09,BenDavid-MLJ2010,Zhang12}.
Concretely, two domains $\PS$ and $\PT$ differ if their marginals $\DS$ and $\DT$ are different, or if the source labeling function differs from the target one, or if both happen.
This suggests taking into account two divergences: One between $\DS$ and $\DT$, and one between the labeling.
If we have some target labels, we can combine the two distances as done by \citet{Zhang12}.
Otherwise, we preferably consider two separate measures, since it is impossible to estimate the best target hypothesis in such a situation. 
Usually, we suppose that the source labeling function is somehow related to the target one, then we look for a representation where the marginals $\DS$ and $\DT$ appear closer without losing performances on the source domain. 

\subsection{Domain Adaptation Bounds for Binary Classification}

We now review the first two seminal works which propose domain adaptation bounds based on a divergence between the two domains.

First, under the assumption that there exists a hypothesis in $\Hcal$ that performs well on both the source and the target domain, \citeauthor{BenDavid-MLJ2010}~\cite{BenDavid-NIPS06,BenDavid-MLJ2010} have provided the following domain adaptation bound.
\begin{theorem}[\citet{BenDavid-MLJ2010,BenDavid-NIPS06}]
\label{theo:BenDavid}
Let ${\cal H}$ be a (symmetric\footnote{In a symmetric hypothesis space $\Hcal$, for every $h\in\Hcal$, its inverse $-h$ is also in $\Hcal$.}) hypothesis class. We have
\begin{equation}
\label{eq:da}
\forall h\in {\cal H},\  \RPT(h)\, \leq\, \RPS(h) + \tfrac{1}{2}d_{\hdh}(\DS,\DT) + \mu_{h^*}\, , 
\end{equation}
where  $$\tfrac{1}{2}d_{\hdh}(\DS,\DT) \ \eqdef \, \displaystyle  \sup_{\substack{(h,h')\in\mathcal{H}^2}}  \left|\RDT(h, h') - \RDS(h, h')\right|$$ 
is the \mbox{{\small $\hdh$}-distance} between  marginals $\DS$ and $\DT$, and
\mbox{$\mu_{h^*}  {\eqdef} \RPS(h^{ *})+\RPT(h^{ *})$}
is the error of the best hypothesis overall 
$h^{*}  {\eqdef}  \argmin_{h\in{\cal H}}  \big( \RPS (h) + \RPT (h) \big)\,.$ 
\end{theorem}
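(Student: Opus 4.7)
The plan is to deduce the bound from two applications of the triangle inequality for the $\zo$-loss chained together with the defining supremum of the $\hdh$-divergence. The guiding observation is the identity recalled just above the theorem, $\RPT(h) = \RDT(h, f_\PT)$, which turns any expected error into an expected disagreement with an appropriate labeling function; since $\zoloss(a,b) = \I[a\neq b]$ is a metric on $\Y$, the pointwise inequality $\I[a\neq c] \leq \I[a\neq b] + \I[b\neq c]$ integrates to a triangle inequality for both $\RDS(\cdot,\cdot)$ and $\RDT(\cdot,\cdot)$.

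First, I would insert the pivotal hypothesis $h^*$ on the target side,
$$\RPT(h) \ = \ \RDT(h, f_\PT) \ \leq \ \RDT(h, h^*) + \RDT(h^*, f_\PT) \ = \ \RDT(h, h^*) + \RPT(h^*).$$
Next, I would swap the target marginal for the source marginal inside $\RDT(h, h^*)$ by adding and subtracting $\RDS(h, h^*)$; because $(h,h^*)\in\Hcal^2$ and $\Hcal$ is symmetric, the pair is admissible in the supremum defining the $\hdh$-divergence, so
$$\RDT(h, h^*) \ \leq \ \RDS(h, h^*) + \left| \RDT(h, h^*) - \RDS(h, h^*) \right| \ \leq \ \RDS(h, h^*) + \tfrac{1}{2} d_{\hdh}(\DS,\DT).$$
Third, I would apply the triangle inequality once more, now on the source side, pivoting through $f_\PS$:
$$\RDS(h, h^*) \ \leq \ \RDS(h, f_\PS) + \RDS(h^*, f_\PS) \ = \ \RPS(h) + \RPS(h^*).$$
Chaining these three inequalities yields $\RPT(h) \leq \RPS(h) + \tfrac{1}{2} d_{\hdh}(\DS,\DT) + \RPS(h^*) + \RPT(h^*)$, which is the stated bound with $\mu_{h^*} = \RPS(h^*) + \RPT(h^*)$.

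The only subtle point is that the triangle inequality is invoked with endpoints equal to labeling functions $f_\PT$ and $f_\PS$ that need not belong to $\Hcal$; this is harmless because the $\zo$ triangle inequality holds pointwise for any three functions $\X \to \Y$, so class membership is exploited only where it is actually needed, namely when invoking the $\hdh$-supremum with the pair $(h,h^*)\in\Hcal^2$. Beyond that bookkeeping, no genuine obstacle is anticipated: the argument is essentially a two-step pivoting scheme, $h \to h^* \to f_\PT$ on the target and $h \to f_\PS$ and $h^* \to f_\PS$ on the source, glued together by the single marginal-distance step.
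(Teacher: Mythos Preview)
Your argument is correct and is essentially the standard proof from the original references. Note, however, that the present paper does not give its own proof of this theorem: it is stated in Section~\ref{sec:da} as a cited result of \citet{BenDavid-MLJ2010,BenDavid-NIPS06}, so there is no in-paper proof to compare against; your three-step pivoting scheme (target triangle inequality through $h^*$, marginal swap via the $\hdh$-supremum, source triangle inequality through $f_\PS$) is precisely the argument used in those works.
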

This bound relies on three terms. 
The first term $\RPS(h)$ is the classical source domain expected error. 
The second term $\tfrac12 d_{\Hcal \Delta \Hcal}(\DS,\DT)$ depends on $\Hcal$ and corresponds to the maximum deviation between the source and target disagreement between two hypotheses of $\Hcal$.
In other words, it quantifies how hypothesis from $\Hcal$ can ``detect'' differences between these marginals: The lower this measure is for a given $\Hcal$, the better are the generalization guarantees. 
The last term $\mu_{h^*}=\RPS(h^{ *})+\RPT(h^{ *})$  is related to the best hypothesis $h^{*}\in\Hcal$ over the domains and acts as a quality measure of $\Hcal$ in terms of labeling information. 
If $h^{*}$ does not have a good performance on both the source and the target domain, then there is no way one can adapt from this source to this target. 
Hence, as pointed out by the authors, Equation~\eqref{eq:da} expresses a multiple trade-off between the accuracy of some particular hypothesis $h$, the complexity of~$\Hcal$ (quantified in \cite{BenDavid-MLJ2010} with the usual \mbox{VC-bound} theory), and the ``incapacity'' of hypotheses of $\Hcal$ to detect difference between the source and the target domain.

Second, \citet{Mansour-COLT09} have extended the \mbox{{\small $\hdh$}-distance} to the discrepancy divergence for regression and any symmetric loss $\loss$ fulfilling the triangle inequality.
Given \mbox{$\loss : [-1,+1]^2  \to  \R^+$} such a loss, the discrepancy $\disc_{\loss}(\DS,\DT)$ between $\DS $ and~$\DT$~is  
$$\displaystyle \disc_{\loss}(\DS,\DT)  \eqdef    \sup_{\substack{(h,h')\in\Hcal^2}}  \Big| \Esp_{\tbf\sim\DT}     \loss(h(\tbf),h'(\tbf)) -     \Esp_{\sbf\sim\DS}      \loss(h(\sbf),h'(\sbf))\Big|\,.$$
Note that with the $\zo$-loss in binary classification, we have
$$\tfrac{1}{2}d_{\Hcal \Delta \Hcal}(\DS,\DT)   \,=\,  \disc_{\zoloss}(\DS,\DT)\,.$$
Even if these two divergences may coincide, the following domain adaptation bound of \citet{Mansour-COLT09} differs from Theorem~\ref{theo:BenDavid}.
\begin{theorem}[\citet{Mansour-COLT09}]
\label{theo:Mansour}
Let ${\cal H}$ be a (symmetric) hypothesis class. We have
\begin{align}
\label{eq:dabounddisc} \forall h\in {\cal H},\ \RPT(h) - \RPT(h_T^*) \ \leq \   \RDS(h_S^*,h)  +\disc_{\zoloss}(\DS,\DT) + \nu_{(h_S^*,h_T^*)}&\,,
\end{align}
with $\nu_{(h_\PS^*,h_\PT^*)}\eqdef\RDS(h_\PS^*,h_\PT^*)$
the disagreement between the ideal hypothesis on the target and source domains:
$h_\PT^* \eqdef \argmin_{h\in\Hcal}  \RPT(h),$ and $h_\PS^*\eqdef \argmin_{h\in\Hcal}\RPS(h).$
\end{theorem}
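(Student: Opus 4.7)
The proof should be an almost mechanical application of two tools: the triangle inequality for the $\zo$-loss (viewed as the probability of disagreement, which is a pseudometric on hypotheses under any fixed distribution), and the defining inequality of the discrepancy $\disc_{\zoloss}$. The key algebraic idea is to introduce $h_\PT^*$ to bound $\RPT(h)-\RPT(h_\PT^*)$ by a target disagreement, then to move from the target marginal $\DT$ to the source marginal $\DS$ via the discrepancy, and finally to introduce $h_\PS^*$ via a second triangle inequality so that the resulting terms match the statement.

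Concretely, I would proceed as follows. First, using the identity $\RPT(h)=\RDT(h,f_\PT)$ mentioned in the preliminaries, apply the triangle inequality of the $\zo$-disagreement on $\DT$ with pivot $h_\PT^*$:
\begin{align*}
\RPT(h) \;=\; \RDT(h,f_\PT) \;\le\; \RDT(h,h_\PT^*) + \RDT(h_\PT^*,f_\PT) \;=\; \RDT(h,h_\PT^*) + \RPT(h_\PT^*),
\end{align*}
which rearranges to $\RPT(h)-\RPT(h_\PT^*)\le \RDT(h,h_\PT^*)$. Second, use the definition of the discrepancy, which upper-bounds $|\RDT(h_1,h_2)-\RDS(h_1,h_2)|$ by $\disc_{\zoloss}(\DS,\DT)$ for every pair $(h_1,h_2)\in\Hcal^2$; applied to $(h,h_\PT^*)$ this yields
\begin{align*}
\RDT(h,h_\PT^*) \;\le\; \RDS(h,h_\PT^*) + \disc_{\zoloss}(\DS,\DT).
\end{align*}
Third, apply the triangle inequality again, this time under $\DS$ and with pivot $h_\PS^*$:
\begin{align*}
\RDS(h,h_\PT^*) \;\le\; \RDS(h,h_\PS^*) + \RDS(h_\PS^*,h_\PT^*) \;=\; \RDS(h_\PS^*,h) + \nu_{(h_\PS^*,h_\PT^*)}.
\end{align*}
Chaining the three displays gives exactly \eqref{eq:dabounddisc}.

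The only step requiring any care is the triangle inequality for $\RD(\cdot,\cdot)$: it holds because, for any fixed $\xb$, $\zoloss(a,b)=\I[a\ne b]$ is a $\{0,1\}$-valued metric on labels and therefore satisfies $\I[h(\xb)\ne f(\xb)] \le \I[h(\xb)\ne h'(\xb)] + \I[h'(\xb)\ne f(\xb)]$ pointwise; taking the expectation over $\xb\sim\D$ preserves the inequality. This is where we implicitly rely on the symmetry/triangle-inequality hypothesis on $\loss$ invoked when defining the discrepancy, and it is the main (very mild) obstacle in the argument. Everything else is bookkeeping: choosing the right pivots ($h_\PT^*$ on the target side, $h_\PS^*$ on the source side) so that, after invoking the discrepancy exactly once, the three residual quantities are precisely $\RDS(h_\PS^*,h)$, $\disc_{\zoloss}(\DS,\DT)$, and $\nu_{(h_\PS^*,h_\PT^*)}$.
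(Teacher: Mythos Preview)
Your proof is correct and is essentially the standard argument from \citet{Mansour-COLT09}. Note, however, that the present paper does not actually prove this theorem: it is stated in Section~\ref{sec:da} as a cited result from \citet{Mansour-COLT09}, with no proof given. So there is no ``paper's own proof'' to compare against here; your three-step chaining (triangle inequality on $\DT$ with pivot $h_\PT^*$, transfer from $\DT$ to $\DS$ via the discrepancy, triangle inequality on $\DS$ with pivot $h_\PS^*$) is exactly the argument one finds in the original reference.
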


Equation~\eqref{eq:dabounddisc} can be tighter than Equation \eqref{eq:da}\footnote{Equation~\eqref{eq:da} can lead to an error term three times higher than Equation~\eqref{eq:dabounddisc} in some cases (more details in~\cite{Mansour-COLT09}).} since it bounds the difference between the target error of a classifier and the one of the optimal $h_\PT^*$. 
Based on Theorem~\ref{theo:Mansour} and a Rademacher complexity analysis, \citet{Mansour-COLT09} provide a generalization bound on the target risk, that expresses a trade-off between the disagreement (between $h$ and the best source hypothesis $h_\PS^*$), the complexity of  $\Hcal$, and---again---the ``incapacity'' of hypotheses to detect differences between the domains.

To conclude, the domain adaptation bounds of Theorems~\ref{theo:BenDavid} and~\ref{theo:Mansour} suggest that if the divergence between the domains is low, a low-error classifier over the source domain might perform well on the target one.
These divergences compute the \emph{worst-case} of the disagreement between a pair of hypotheses. 
We propose in Section~\ref{sec:two_da_bounds} two \emph{average case} approaches by making use of the essence of the PAC-Bayesian theory, which is known to offer tight generalization bounds~\citep{Mcallester99a,germain2009pac,Parrado-Hernandez12}.
Our first approach (see Section~\ref{sec:first}) stands in the philosophy of these seminal works, and the second one (see Section~\ref{sec:second}) brings a different and novel point of view by taking advantages of the PAC-Bayesian framework we recall in the next section.

\section{PAC-Bayesian Theory in Supervised Learning}
\label{sec:pacbayes}

Let us now review the classical supervised binary classification framework called the PAC-Bayesian theory, first introduced by \citet{Mcallester99a}.
This theory succeeds to provide tight generalization guarantees---without relying on any validation set---on weighted majority votes, {\it i.e.}, for ensemble methods~\citep{dietterich2000ensemble,re2012ensemble} where several classifiers (or voters) are assigned a specific weight. 
Throughout this section, we adopt an algorithm design perspective. 
Indeed, the PAC-Bayesian analysis of domain adaptation provided in the forthcoming sections is oriented by the motivation of creating new adaptive algorithms.

\subsection{Notations and Setting}

Traditionally, PAC-Bayesian theory considers weighted majority votes over a set $\Hcal$ of binary hypothesis, often called voters.
Let $\P$ be a fixed yet unknown distribution over $\XY$, and~$S$ be a learning set where each example is drawn {\it i.i.d.}\ from $\P$.
Then, given a \emph{prior distribution}~$\prior$ over $\Hcal$ (independent from the learning set $S$), the ``PAC-Bayesian'' learner aims at finding a \emph{posterior distribution}~$\posterior$ over $\Hcal$ leading to a \emph{\mbox{$\posterior$-weighted} majority vote}  $B_\posterior$ (also called the Bayes classifier) with good generalization guarantees and defined by
$$B_\posterior(\xbf) \ \eqdef \ \sign\Big[\Esp_{h\sim \posterior} h(\xbf)\Big]\,.$$
However, minimizing the risk of $\BQ$, defined as 
$$
\RP(B_\posterior)\ \eqdef\  \Esp_{(\xbf,y) \sim \P} \zoloss\big( \BQ(\xbf), y \big)\,,
$$
 is known to be \mbox{NP-hard}. 
To tackle this issue, the PAC-Bayesian approach deals with the risk of the stochastic \emph{Gibbs classifier} $G_\posterior$ associated with $\posterior$ and closely related to $B_\posterior$. 
In order to predict the label of an example~$\xbf\in\X$, the Gibbs classifier first draws a hypothesis $h$ from $\Hcal$ according to $\posterior$, then returns $h(\xbf)$ as label. 
Then, the error of the Gibbs classifier on a domain $\P$ corresponds to the expectation of the errors over~$\posterior$:
\begin{align}
\label{eq:RGQ}
\RP(G_\posterior) \ \eqdef \ \Esp_{h\sim\posterior} \RP(h)\,.
\end{align}
In this setting, if $B_\posterior$ misclassifies $\xbf$, then at least half of the classifiers (under $ \posterior$) errs on $\xbf$. 
Hence, we have $$ \RP(B_\posterior) \ \leq  \  2\,\RP(G_\posterior)\,.$$ 
Another result on the relation between $\RP(B_\posterior)$ and $\RP(G_\posterior)$ is the \mbox{$C$-bound} of \citet{Lacasse07} expressed as
\begin{align}
\label{eq:C-bound}
\RP(B_\posterior) \ \leq\ 1-\frac{ \big(1-2\,\RP(G_\posterior)\big)^2}{1-2\,\dD(\Q)}\,,
\end{align}
where $\dD(\Q)$ corresponds to the {\it expected disagreement} of the classifiers over $\posterior$:
\begin{align}
\label{eq:RGQGQ}
\dD(\Q) \ \eqdef\, \Esp_{(h,h')\sim \posterior^2} \RD(h,h')\,.
\end{align}
Equation~\eqref{eq:C-bound} suggests that for a fixed numerator, {\it i.e.}, a fixed risk of the Gibbs classifier, the best \mbox{$\posterior$-weighted} majority vote is the one associated with the lowest denominator, {\it i.e.}, with the greatest disagreement between its voters 
(for further analysis, see \cite{graal-neverending}).

We now introduce the notion of {\it expected joint error} of a pair of classifiers $(h,h')\in\Hcal^2$ drawn according to the distribution $\posterior$, defined as
\begin{equation}
\label{eq:eP}
\eP(\Q) \  \eqdef\,   \Esp_{(h,h') \sim\posterior^2}\Esp_{(\xbf,y) \sim \P} \zoloss\big( h(\xb), y \big) \times \zoloss\big( h'(\xb), y \big)\,.
\end{equation}
From the definitions of the expected disagreement and the joint error, \citet{Lacasse07} (see also \cite{graal-neverending}) observed that in a binary classification context, given a domain~$\P$ on $\XY$ and a distribution $\posterior$ on~$\Hcal$, we can decompose the Gibbs risk as
\begin{equation} \label{eq:rde}
\RP(G_\posterior) \ = \ \frac{1}{2} \,\dD(\Q) + \eP(\Q)\,.
\end{equation}
Indeed, since $Y=\{-1,+1\}$, we have
\begin{align*}
2\, \RP(G_\posterior)
&=
\Esp_{(h,h') \sim\posterior^2}\Esp_{(\xbf,y) \sim \P} 
\Big[ \zoloss\big( h(\xb), y \big) + \zoloss\big(h'(\xb), y \big) \Big] \\
&=
\Esp_{(h,h') \sim\posterior^2}\Esp_{(\xbf,y) \sim \P} 
\Big[ 
%1{\times}
\zoloss\big( h(\xb), h'(\xb) \big) + 2{\times}\zoloss\big(h(\xb), y \big)\, \zoloss\big(h'(\xb), y \big) \Big]\\[1.5mm]
&=
\dD(\Q) + 2\times \eP(\Q)\,.
\end{align*}
Lastly, PAC-Bayesian theory allows one to bound the expected error $\RP(G_\posterior)$ in terms of two major quantities: The empirical error $$\RS(G_\posterior)  \  =\  \Esp_{h\sim\posterior} \RS(h)$$ estimated on a sample~$S\sim(\P)^m$,  and the Kullback-Leibler divergence
\begin{equation*}
\KL(\posterior\|\prior)  \,\eqdef\,  \Esp_{h\sim \posterior}  \ln \frac{\posterior(h)}{\prior(h)}\,.
\end{equation*}
In the next section, we introduce a PAC-Bayesian theorem proposed by \mbox{\citet{catoni2007pac}}.\footnote{Two other common forms of the PAC-Bayesian theorem are the one of \citet{Mcallester99a} and the one of \citet{Seeger02,Langford05}. We refer the reader to our research report \citep{pbda_long} for a larger variety of PAC-Bayesian theorems in a domain adaptation context.}

\subsection{A Usual PAC-Bayesian Theorem}
\label{sec:threepb}

Usual PAC-Bayesian theorems suggest that, in order to minimize the expected risk, a learning algorithm should perform a trade-off between the empirical risk minimization $\RS(G_{\posterior})$ and \mbox{KL-divergence} minimization $\KL(\posterior\,\|\,\prior)$ (roughly speaking the complexity term).
The nature of this trade-off can be explicitly controlled in Theorem~\ref{thm:pacbayescatoni} below.
This PAC-Bayesian result, first proposed by \citet{catoni2007pac}, is defined with a hyperparameter (here named $\omega$).
It appears to be a natural tool to design PAC-Bayesian algorithms.
We present this result in the simplified form suggested by~\citet{germain09b}.    
\begin{theorem}[\citet{catoni2007pac}] 
\label{thm:pacbayescatoni}
For any domain $\P$ over  $\X   \times   Y$, for  any set of hypotheses~$\Hcal$,  any prior distribution $\prior$ over $\Hcal$, any $\delta \in (0,1]$, and any real number $\omega>0$,  with a probability at least $1 - \delta$ over the random choice of $S \sim  (\P)^{m} $, for every posterior distribution $\posterior$ on $\Hcal$, we have
\begin{equation} \label{eq:catoniclassic}
\RP(G_{\posterior}) \ \leq\ \frac{\omega}{1 - e^{-\omega}}    \left[\RS(G_{\posterior})  +  \frac{\KL(\posterior\|\prior) + \ln  \frac{1}{\delta}}{m \times \omega}\right].
\end{equation}
\end{theorem}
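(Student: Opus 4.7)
The plan is to follow the standard PAC-Bayesian recipe, which combines a change-of-measure inequality with a moment-generating-function computation and Markov's inequality, and then to choose the auxiliary function so that Catoni's particular linear form comes out after rearrangement.

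First I would invoke the change of measure inequality (a direct consequence of the variational definition of the Kullback--Leibler divergence): for any measurable $\phi:\Hcal\to\R$ and any posterior $\posterior$ absolutely continuous with respect to $\prior$,
\begin{equation*}
\Esp_{h\sim\posterior}\phi(h)\ \leq\ \KL(\posterior\|\prior)\,+\,\ln\Esp_{h\sim\prior}e^{\phi(h)}\,.
\end{equation*}
The art is to pick $\phi(h)$ so that the left-hand side reproduces the linear combination of $\RP$ and $\RS$ that appears in Catoni's bound. Guided by the form of \eqref{eq:catoniclassic}, I would take
\begin{equation*}
\phi(h)\ =\ m\bigl[(1-e^{-\omega})\RP(h)-\omega\RS(h)\bigr]\,.
\end{equation*}

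Next I would control $\Esp_{h\sim\prior}e^{\phi(h)}$ in expectation over the sample. Because $S\sim(\P)^m$ is i.i.d., for a \emph{fixed} $h\in\Hcal$ the quantity $m\RS(h)$ is a sum of $m$ independent Bernoulli$(\RP(h))$ random variables, and its moment generating function at $-\omega$ equals $\bigl(1-\RP(h)(1-e^{-\omega})\bigr)^{m}$. Using the elementary inequality $1-x\leq e^{-x}$, this is at most $e^{-m(1-e^{-\omega})\RP(h)}$, from which
\begin{equation*}
\Esp_{S\sim(\P)^m}e^{\phi(h)}\ =\ e^{m(1-e^{-\omega})\RP(h)}\bigl(1-\RP(h)(1-e^{-\omega})\bigr)^{m}\ \leq\ 1\,.
\end{equation*}
Swapping expectations via Fubini (legitimate because $\prior$ is chosen independently of $S$) gives $\Esp_S\Esp_{h\sim\prior}e^{\phi(h)}\leq 1$, and then Markov's inequality yields, with probability at least $1-\delta$ over the draw of $S$,
\begin{equation*}
\Esp_{h\sim\prior}e^{\phi(h)}\ \leq\ \tfrac{1}{\delta}\,.
\end{equation*}

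Finally, I would plug this high-probability estimate into the change-of-measure inequality. Since $\Esp_{h\sim\posterior}\RP(h)=\RP(G_\posterior)$ and $\Esp_{h\sim\posterior}\RS(h)=\RS(G_\posterior)$ by definition \eqref{eq:RGQ}, the left-hand side becomes $m(1-e^{-\omega})\RP(G_\posterior)-m\omega\RS(G_\posterior)$, so
\begin{equation*}
m(1-e^{-\omega})\RP(G_\posterior)\ \leq\ m\omega\,\RS(G_\posterior)\,+\,\KL(\posterior\|\prior)\,+\,\ln\tfrac{1}{\delta}\,.
\end{equation*}
Dividing by $m(1-e^{-\omega})$ and factoring $\tfrac{\omega}{1-e^{-\omega}}$ yields exactly \eqref{eq:catoniclassic}. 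The main obstacle is not any single step but getting the direction/sign of the linear combination inside $\phi$ correct so that the MGF bound applies cleanly and the final rearrangement produces the specific affine form of Catoni's bound; once $\phi$ is chosen as above, the remaining manipulations are mechanical.
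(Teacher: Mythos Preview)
Your proof is correct and follows the same PAC-Bayesian template as the paper's argument (given in \ref{proof:catoni_general} for the more general Theorem~\ref{theo:catoni_genral}, of which Theorem~\ref{thm:pacbayescatoni} is the special case $\ell=\zoloss$): change of measure, Markov's inequality on the prior-averaged exponential, and an MGF computation for i.i.d.\ Bernoulli losses.

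The one difference worth noting is \emph{where} the linearization occurs. The paper works with the exact Catoni function $\Delta(q,p)=-\alpha q-\ln[1-p(1-e^{-\alpha})]$, for which the binomial sum equals~$1$ exactly; it needs Jensen's inequality to push $\Esp_{h\sim\posterior}$ through the convex $\Delta$, and only at the very last step applies $1-e^{-z}\leq z$ to obtain the linear form~\eqref{eq:catoniclassic}. You instead start with the already linear $\phi(h)=m[(1-e^{-\omega})\RP(h)-\omega\RS(h)]$ and use the same inequality (in the guise $1-x\leq e^{-x}$) inside the MGF computation to get expectation $\leq 1$. Your route is a little more elementary---no Jensen step, no exact binomial identity---while the paper's route yields the sharper intermediate ``true Catoni'' bound before the final relaxation; both arrive at the identical stated inequality.
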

Similarly to \citet{mcallester-keshet-11}, we could choose to restrict $\omega\in(0,2)$ to obtain a slightly looser but simpler bound.
Using $e^{-\omega}\leq1-\omega-\frac12\omega^2$ to upper-bound on the right-hand side of Equation~\eqref{eq:catoniclassic}, we obtain
\begin{equation}\label{eq:mckeshet}
\RP(G_{\posterior}) \ \leq\ 
\frac{1}{1{-}\frac12 \omega} \left[ \RS(G_{\posterior})   + \frac{\KL(\posterior\|\prior) + \ln \frac{1}{\delta}}{ m\times \omega}\right].
\end{equation}
The bound of Theorem~\ref{thm:pacbayescatoni}---in both forms of Equations~\eqref{eq:catoniclassic} and~\eqref{eq:mckeshet}---has two appealing characteristics. 
First, choosing $\omega = 1/\sqrt m$, the bound becomes consistent: It converges to $1 \times \big[\,\RS(G_{\posterior})  +  0\,\big]$ as $m$~grows.
Second, as described in Section~\ref{sec:pbgd}, its minimization is closely related to the minimization problem associated with the Support Vector Machine (\SVM) algorithm when $\posterior$ is an isotropic Gaussian over the space of linear classifiers~\citep{germain2009pac}. 
Hence, the value $\omega$ allows us to control the trade-off between the empirical risk $\RS(G_{\posterior})$ and the ``complexity term'' $\tfrac{1}{m}\,\KL(\posterior\|\prior)$.

\subsection{Supervised PAC-Bayesian Learning of Linear Classifiers} 
\label{sec:pbgd}

Let us consider $\Hcal$ as a set of linear classifiers in a \mbox{$d$-dimensional space}. 
Each $h_{\wbf'}\in \Hcal$ is defined by a weight vector ${\wbf'}\in\mathbb{R}^d$:
$$h_{\wbf'}(\xb)\ \eqdef \ \sgn\LP\wbf'\cdot\xb\RPP,$$ 
where $\,\cdot\,$ denotes the dot product.

By restricting the prior and the posterior distributions over $\Hcal$ to be Gaussian distributions, \citet{Langford02} have specialized the PAC-Bayesian theory in order to bound the expected risk of any linear classifier $h_\wb\in\Hcal$.
More precisely, given a prior $\prior_{\mathbf{0}}$ and a posterior $\posterior_\wb$ defined as spherical Gaussians with identity covariance matrix respectively centered on vectors $\mathbf{0}$ and~$\wb$, for any  $h_{\wbf'}\in\Hcal$, we have
\begin{align*}
\prior_\mathbf{0}(h_{\wbf'})\, &\eqdef\, \LP \frac{1}{\sqrt{2\pii}} \RPP^{ d}
  \exp\left({-\frac{1}{2}\|{\wbf'}\|^2}\right),\\
\mbox{ and } \quad 
    \posterior_\wb(h_{\wbf'})  \, &\eqdef\,   \LP \frac{1}{\sqrt{2\pii}} \RPP^{ d}
  \exp\left({-\frac{1}{2}\|{\wbf'}-\wb\|^2}\right) .
 \end{align*}
 An interesting property of these distributions---also seen as multivariate normal distributions, $\prior_\zerobf=\Ncal(\zerobf, \Ibf)$ and $\posterior_\wb= \Ncal(\wbf, \Ibf)$---is that the prediction of the \mbox{$\posterior_\wb$-weighted} majority vote $B_{\posterior_\wb}$ coincides with the one of the linear classifier $h_\wb$. 
Indeed, we have
 \begin{align*}
\forall\, \xb\in \X,\ \forall\,\wb\in \Hcal, \quad 
h_\wb(\xb) \ &= \ B_{\posterior_\wb}(\xb) \\
 \ &= \ \sign\left[ \Esp_{h_{\wb'} \sim \posterior_\wb} h_{\wb'}(\xb) \right].
 \end{align*} 
 Moreover, the expected risk of the Gibbs classifier $G_{\posterior_\wb}$ on a domain $\P$ is then given by\footnote{The calculations leading to Equation~\eqref{eq:RSGw} can be found in \citet{Langford05}. For sake of completeness, we provide a slightly different derivation in~\ref{appendix:RSGw}.}
\begin{equation} 
\label{eq:RSGw}
\RP(G_{\posterior_\wb}) \   =  \Esp_{(\xb,y)\sim P_S} \Phirisk\left(  y\, \frac{\wb \cdot \xb}{\|\xb\|}  \right),
\end{equation}
where
\begin{align}
\label{eq:eq:gibbs_risk_linear}
\Phirisk(x)  \ \eqdef  \ \tfrac{1}{2}  \left[1 - \Erf\left( \tfrac{x}{\sqrt{2}} \right) \right],
\end{align}
with $\Erf$ is the Gauss error function defined as
\begin{align}
\label{eq:erf}
\Erf\,(x)\ \eqdef\ \frac{2}{\sqrt{\pii}}\ \int_{0}^{x} \exp\left(-t^2\right) \text{d}t\,.
\end{align}
Here, $\Phirisk(x)$ can be seen as a \emph{smooth} surrogate of the \zo-loss function $\I\left[x\leq 0\right]$ relying on $y\, \frac{\wb \cdot \xbf}{\|\xbf\|}$.
This function $\Phirisk$ is sometimes called the \emph{probit--loss} \citep[\eg,][]{mcallester-keshet-11}.
It is worth noting that $\|\wb\|$ plays an important role on the value of $\Risk_\P (G_{\posterior_\wb})$, but not on $\Risk_\P (h_\wb)$. 
Indeed, $\Risk_\P (G_{\posterior_\wb})$ tends to $\Risk_\P (h_\wb)$ as $\|\wb\|$ grows, which can provide \emph{very tight bounds} (see the empirical analyses of \cite{AmbroladzePS06,germain2009pac}).
Finally, the \mbox{KL-divergence} between $\posterior_\wb$ and $\prior_\mathbf{0}$ becomes simply 
 \begin{align*}
\KL\big(\posterior_\wb \| \prior_\mathbf{0}\big) 
 \ &= \ \KL\big( \Ncal(\wbf, \Ibf) \,\|\, \Ncal(\zerobf, \Ibf) \big)\\
&= \ \tfrac{1}{2}\| \wb \|^2\,, 
\end{align*}
and turns out to be a measure of \emph{complexity} of the learned classifier.

\subsubsection{Objective Function and Gradient}
\label{sec:pbgd_objective}

Based on the specialization of the PAC-Bayesian theory to linear classifiers, \citet{germain2009pac} suggested minimizing a PAC-Bayesian bound on $\RP(G_{\posterior_\wb})$. 
For sake of completeness, we provide here more mathematical details than in the original conference paper~\citep{germain2009pac}. 
%We will build on this PAC-Bayesian learning algorithm (for supervised leaning) in our domain adaptation work.
In forthcoming Section~\ref{sec:pbda}, we will extend this supervised learning algorithm to the domain adaptation setting.

Given a sample $S = \{(\xbf_i, y_i)\}_{i=1}^{m}$ and a hyperparameter $\Omega>0$, the learning algorithm performs gradient descent in order to find an optimal weight vector $\wb$ that minimizes
\begin{eqnarray}
\label{eq:prob_pbgd_primal}
F(\wb) 
& = & 
\Omega\, m\, \RS (G_{\posterior_\wb})  + \KL(\posterior_\wb \| \prior_\mathbf{0}) \nonumber \\[1mm]
& = &
\Omega\,  \displaystyle\sum_{i=1}^{m} \Phirisk\left(  y \frac{\wb \cdot \xbf_i}{\|\xbf_i\|}  \right)  
 + \frac{1}{2}\|\wb\|^2\,.
\end{eqnarray}
It turns out that the optimal vector $\wb$ corresponds to the distribution $\posterior_\wb$ minimizing the value of the bound on $\RP(G_{\posterior_\wb})$ given by Theorem~\ref{thm:pacbayescatoni}, with the parameter $\omega$ of the theorem being the hyperparameter $\Omega$ of the learning algorithm. 
It is important to point out that PAC-Bayesian theorems bound simultaneously $\RP(G_{\posterior_\wb})$ \emph{for every $\posterior_\wb$ on $\Hcal$}. 
Therefore, one can ``freely'' explore the domain of objective function $F$ to choose a posterior distribution~$\posterior_\wb$ that gives, thanks to Theorem~\ref{thm:pacbayescatoni}, a bound valid with probability $1-\delta$.

The minimization of Equation~\eqref{eq:prob_pbgd_primal} by gradient descent corresponds to the learning algorithm called {\small PBGD3} of \citet{germain2009pac}.
The gradient of $F(\wb)$ is given the vector~$\nabla F (\wb)$:
\begin{align*}
\nabla F(\wb) \ = \ \Omega\, \sum_{i=1}^m 
\Phirisk' \LP y_i \frac{\wb\cdot\xb_i}{\|\xb_i\|} \RPP  \frac{y_i\,\xb_i}{\|\xb_i\|}
+ \wb\,,
\end{align*}
where $\Phirisk'(x) = -\tfrac{1}{\sqrt{2\pii}} \exp\left(-\tfrac{1}{2} x^2\right)$ is the derivative of $\Phirisk$ at point $x$.

\medskip
Similarly to {\small SVM}, the learning algorithm {\small PBGD3} realizes a trade-off between the empirical risk---expressed by the loss  $\Phirisk$---and the complexity of the learned linear classifier---expressed by the regularizer $\|\wb\|^2$.  
This similarity increases when we use a kernel function, as described next.

\subsubsection{Using a Kernel Function}
\label{section:pbgd_kernel}

The kernel trick allows substituting inner products by a kernel function $k:\R^d \times \R^d\rightarrow\R$ in Equation~\eqref{eq:prob_pbgd_primal}. 
If $k$ is a Mercer kernel, it implicitly represents a function $\phi:X\rightarrow\R^{d'}$ that maps an example of $\X$ into an arbitrary \mbox{$d'$-dimensional} space, such that 
$$\forall(\xbf,\xbf')\in \X^2,\quad k(\xb,\xb') \ =\ \phi(\xb)\cdot\phi(\xb')\,.$$ 
Then, a dual weight vector $\ab = (\alpha_1,\alpha_2, \ldots,\alpha_{m})\in\R^m$ encodes the linear classifier $\wb \in \R^{d'}$ as a linear combination of examples of $S$:
\begin{equation*}
\wb \ =\ 
\sum_{i=1}^m\alpha_i\, \phi(\xb_i)\,, 
\quad\mbox{ and thus } \quad
h_\wb(\xb) \ =\ 
\sgn\left[
\sum_{i=1}^m \alpha_i k(\xb_i, \xb)
\right].
\end{equation*}

By the representer theorem~\citep{scholkopf-01}, the vector $\wb$ minimizing Equation~\eqref{eq:prob_pbgd_primal} can be recovered by finding the vector $\ab$ that minimizes
\begin{align} \label{eq:prob_pbgd_dual}
F(\ab) \ = \ 
C \sum_{i=1}^m  \Phirisk\left(  y_i \frac{\sum_{j=1}^{m} \alpha_j K_{i,j}}{ \sqrt{K_{i,i}}}  \right)+
\frac{1}{2} \sum_{i=1}^{m} \sum_{j=1}^{m} \alpha_i \alpha_j K_{i,j}\,,
\end{align}
where $K$ is the kernel matrix of size $m\times m$\,.\footnote{It is non-trivial to show that the kernel trick holds when $\prior_\mathbf{0}$ and $\posterior_\wb$ are Gaussian over infinite-dimensional feature space. As mentioned by \citet{mcallester-keshet-11}, it is, however, the case provided we consider Gaussian processes as measure of distributions $\prior_\mathbf{0}$ and $\posterior_\wb$ over (infinite) $\Hcal$.
The same analysis holds for the kernelized versions of the two forthcoming domain adaptation algorithms (Section~\ref{section:kernelDA}).} 
That is, $K_{i,j} \eqdef \,k(\xb_i, \xb_j)\,.$
The gradient of $F(\ab)$ is simply given the vector $\nabla F (\ab) = (\alpha_1',\alpha_2', \ldots,\alpha_{m}')$, with 
\begin{align*} 
 \alpha'_\# \ = \ 
 \Omega\, \sum_{i=1}^m  \Phirisk\left(  y_i \frac{\sum_{j=1}^{m} \alpha_j K_{i,j}}{ \sqrt{K_{i,i}} }  \right)   \frac{y_i \,K_{i,\#}}{ \sqrt{K_{i,i}} } 
 + \sum_{j=1}^{m} \alpha_i K_{i,\#}\,,
 %\quad \mbox{ for } \# \in \{1,2,\ldots,m\,\}\,.
\end{align*}
for $\# \in \{1,2,\ldots,m\,\}\,.$

\subsubsection{Improving the Algorithm Using a Convex Objective }
\label{section:pbgd3_convex}
An annoying drawback of {\small PBGD3} is that the objective function is non-convex and the gradient descent implementation needs many random restarts. 
In fact, we made extensive empirical experiments after the ones described by~\citet{germain2009pac} and saw that {\small PBGD3} achieves an equivalent accuracy (and at a fraction of the running time) by replacing the loss function $\Phirisk$ of Equations~\eqref{eq:prob_pbgd_primal} and~\eqref{eq:prob_pbgd_dual} by its convex relaxation, which is
\begin{align} \label{eq:Phic}
\Phic(x) \ &\eqdef\ 
\max \left\{\Phirisk(x),\, \frac{1}{2} - \frac{x}{\sqrt{2\pii}} \right\}\\
\ &= \nonumber
\,\left\{
\begin{array}{ll}
 \displaystyle \frac{1}{2} - \frac{x}{\sqrt{2\pii}} & \mbox{if $x\leq 0$},\\
 \Phirisk(x) & \mbox{otherwise}.
\end{array}\right.
\end{align}
The derivative of $\Phic$ at point $x$ is then $\Phic'(x) = \Phirisk'\left(\max\{0,x\}\right)$\,; In other words,
\mbox{$\Phic'(x) = {-1}/{\sqrt{2\pii}}$} if $x<0$, and $\Phirisk'(x)$ otherwise.
Figure~\ref{fig:phiclassic_a} illustrates the functions $\Phirisk$ and $\Phic$\,.
Note that the latter can be interpreted as a \emph{smooth} version the \SVM's hinge loss, $\max\{0,1-x\}$.  The toy experiment of Figure~\ref{fig:phiclassic_d} (described in the next subsection) provides another empirical evidence that the minima of $\Phirisk$ and $\Phic$ tend to coincide.

\begin{figure}[t]
	%\centering
	\subfloat[Loss functions for linear classifiers.]{ \label{fig:phiclassic_a}
		\raisebox{-.5\height}{\includegraphics[width=0.4\textwidth]{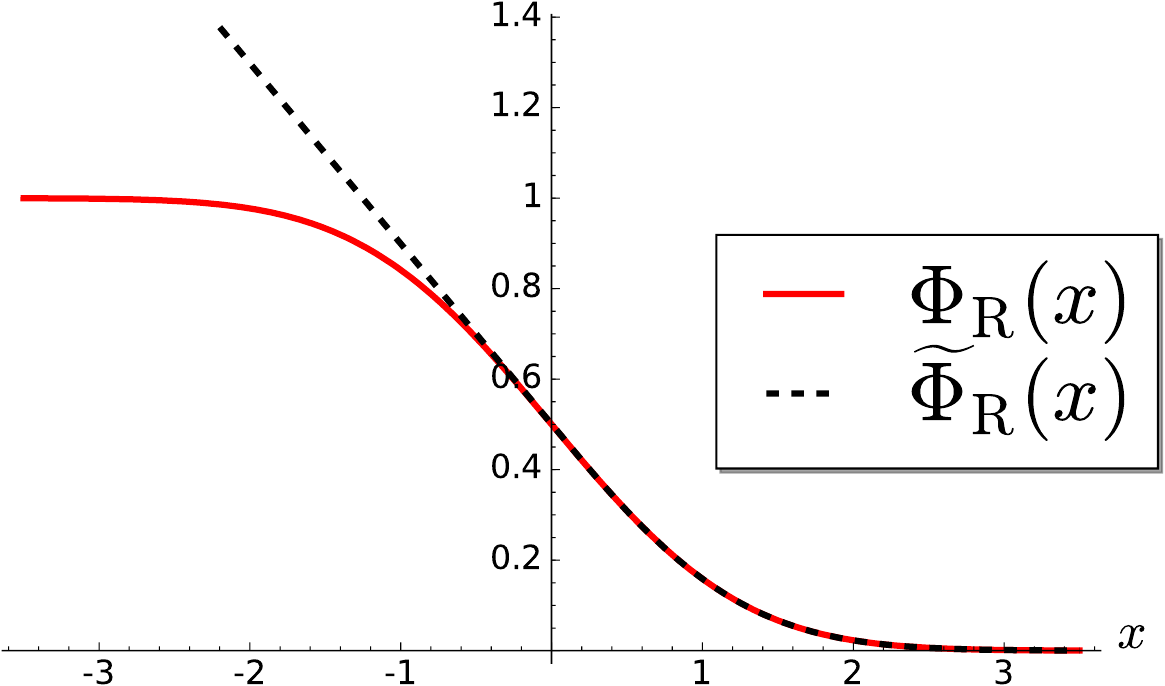}}
	}
\subfloat[Loss functions definitions and their derivatives.]{ \label{fig:phiclassic_b}
	\footnotesize \def\arraystretch{1.8}
	\begin{tabular}{|l@{\,}|l@{\,}|}
		\hline	
		\hfill\sc function & \hfill\sc derivative \\\hline
		$\Phirisk(x)=
		\frac{1}{2}  \big[1 - \Erf\big(\frac{x}{\sqrt{2}}\big) \big]$ 
		& 
		$\Phirisk'(x) = {-}\frac{1}{\sqrt{2\pii}} e^{{-}\frac{1}{2} x^2}$ 
		\\\hline
		\begin{minipage}[c]{3cm}
		$\Phic(x) =$\\
		\phantom.~~~$\max \big\{\Phirisk(x), \frac{1}{2} {-} \frac{x}{\sqrt{2\pii}} \big\}$
		\end{minipage}
		&
		\begin{minipage}[c]{2cm}
		$ \Phic'(x) =$\\  \phantom.~~~$\Phirisk'\big(\max\{0,x\}\,\big)$
		\end{minipage}
		\\\hline
	\end{tabular}
}\\
	\subfloat[Toy dataset, and the decision boundary for $\theta=0$ (matching the vertical line of Figure~(d)).]{  \label{fig:phiclassic_c}
	\raisebox{-.5\height}{\includegraphics[width=0.4\textwidth]{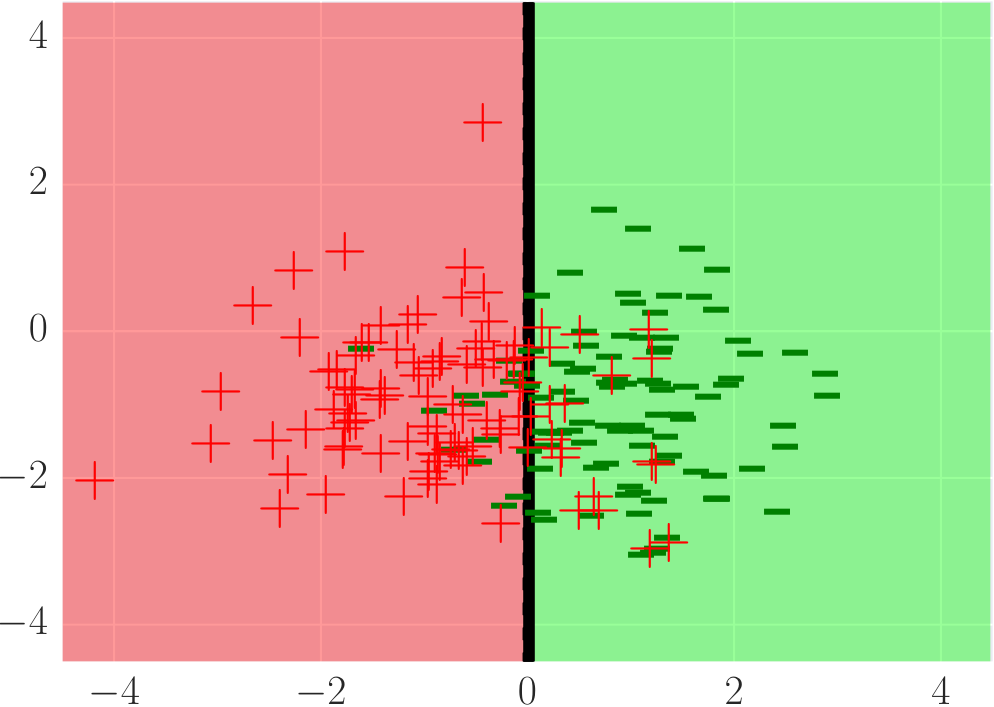}}
}
\quad
	\subfloat[Risk values according to $\theta$, for $\|\wb\|\in\{1,2,5\}$. Each dashed line shows convex counterpart of the continuous line of the same color.]{ \label{fig:phiclassic_d}
	\raisebox{-.5\height}{\includegraphics[width=0.55\textwidth]{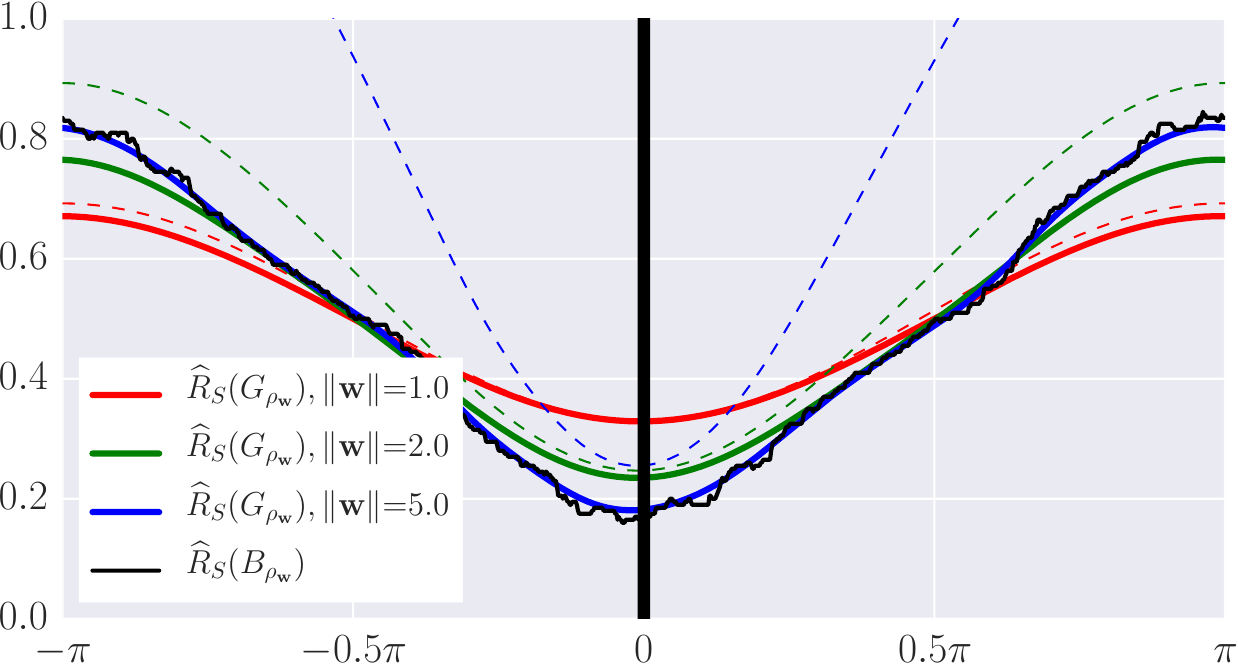}}
}

	\caption{Understanding PBGD3 supervised learning algorithm in terms of loss functions. Upper Figures (a-b) show the loss functions, and lower Figures (c-d) illustrate the behavior on a toy dataset. }
	\label{fig:phiclassic} 
\end{figure}
\subsubsection{Illustration on a Toy Dataset}
\label{section:pbgd_illustration}

To illustrate the trade-off coming into play in {\small PBGD3} algorithm (and its convexified version), we conduct a small experiment on a two-dimensional toy dataset. 
That is, we generate $100$ positive examples according to a Gaussian of mean $(-1,-1)$ and $100$ negative examples generated by a Gaussian of mean $(-1,+1)$ (both of these Gaussian have a unit variance), as shown by Figure~\ref{fig:phiclassic_c}.
We then compute the risks associated with linear classifiers $h_\wb$, with \mbox{$\wb=\|\wb\|(\cos\theta, \sin\theta)\in\Rbb^2$}. 
Figure~\ref{fig:phiclassic_d} shows the risks of three different classifiers for  $\|\wb\|\in\{1,2,5\}$, while rotating the decision boundary $\theta\in[-\pi,+\pi]$ around the origin.
The \zo-loss associated with the majority vote classifier $\RS(B_{\posterior_\wb})$ does not rely on the norm $\|\wb\|$.
However, we clearly see that probit-loss of the Gibbs classifier $\RS(G_{\posterior_\wb})$ converges to $\RS(B_{\posterior_\wb})$ as $\|\wb\|$ increases (the dashed lines correspond to the convex surrogate of the probit-loss given by Equation~\eqref{eq:Phic}). 
Thus, thanks to the specialization of to the linear classifier, the \emph{smoothness} of the surrogate loss is regularized by the norm $\|\wb\|^2$.

\section{Two New Domain Adaptation Bounds}
\label{sec:two_da_bounds}

The originality of our contribution is to theoretically design two domain adaptation frameworks suitable for the PAC-Bayesian approach. 
In Section~\ref{sec:first}, we first follow the spirit of the seminal works recalled in Section~\ref{sec:notations} by proving a similar trade-off for the Gibbs classifier.
Then in Section~\ref{sec:second}, we propose a novel trade-off based on the specificities of the Gibbs classifier that come from Equation~\eqref{eq:rde}.
Note that both results relies on the notion of expected disagreement of binary classifiers (Equation~\ref{eq:RGQGQ}). Consequently, our analysis does not directly extend to multi-class prediction and regression frameworks.

\subsection{In the Spirit of the Seminal Works}
\label{sec:first}

In the following, while the domain adaptation bounds presented in Section~\ref{sec:notations} focus on a single classifier, we first define a \mbox{$\posterior$-average} divergence measure to compare the marginals.
This leads us to derive our first domain adaptation bound.

\subsubsection{A Domains' Divergence for PAC-Bayesian Analysis}
\label{sec:distance}

As discussed in Section \ref{sec:necessity_dist}, the derivation of generalization ability in domain adaptation critically needs a divergence measure between the source and target marginals.
For the PAC-Bayesian setting, we propose a \emph{domain disagreement pseudometric}\footnote{A pseudometric $d$ is a metric for which the property  \mbox{$d(x,y)=0\ \Longleftrightarrow\ x=y$} is relaxed to \mbox{$d(x,y)=0\ \Longleftarrow\ x=y$}.} to measure the structural difference between domain marginals in terms of posterior distribution $\posterior$ over~$\Hcal$.
Since we are interested in learning a \mbox{$\posterior$-weighted}
majority vote $B_\posterior$ leading to good generalization
guarantees, we propose to follow the idea spurred by the \mbox{$C$-bound} of Equation~\eqref{eq:C-bound}: 
Given a source domain $\PS$, a target domain $\PT$, and a posterior distribution~$\posterior$, if $\RPS(G_\posterior)$ and $\RPT(G_\posterior)$ are similar, then $\RPS(B_\posterior)$ and $\RPT(B_\posterior)$ are similar when $\dDS(\Q)$ and $\dDT(\Q)$ are also similar.
Thus, the domains $\PS$ and $\PT$  are close according to $\posterior$ if the expected disagreement over the two domains tends to be close.
We then define our pseudometric as follows.
\begin{definition}
\label{def:disagreement}
Let $\Hcal$ be a hypothesis class.
For any marginal distributions $\DS$ and $\DT$ over~$\X$, any distribution $\posterior$ on $\Hcal$, the domain disagreement $\des(\DS,\DT)$ between~$\DS$ and~$\DT$ is defined by
\begin{align*}
\des(\DS,\DT)  \ &\eqdef  \ 
\Big|\,\dDT(\Q)  - \dDS(\Q) \,\Big|\\
&= \ \left| \Esp_{(h,h')\sim \posterior^2 }    \Big[ \RDT(h,h')  - \RDS(h, h') \Big] \right|.
\end{align*}
\end{definition}
Note that $\des$ is symmetric and fulfills the triangle inequality.

\subsubsection{Comparison of the {\small $\hdh$}-divergence and our domain disagreement} 

While the {\small $\hdh$}-divergence of Theorem~\ref{theo:BenDavid} is difficult to jointly optimize with the empirical source error, our empirical disagreement measure is easier to manipulate: We simply need to compute the \mbox{$\posterior$-average} of the classifiers disagreement instead of finding the pair of classifiers that maximizes the disagreement. 
Indeed, $\des(\DS,\DT)$ depends on the majority vote, which suggests that we can  directly minimize it via its empirical counterpart.
This can be done without instance reweighting, space representation changing or family of classifiers modification.
On the contrary,~$\tfrac{1}{2}d_{\hdh}(\DS,\DT)$ is a supremum over all $h \in \Hcal$ and hence, does not depend on the classifier on which the risk is considered.
Moreover, $\des(\DS,\DT)$ (the $\posterior$-average) is lower than the $\tfrac{1}{2}d_{\hdh}(\DS,\DT)$ (the worst-case).
Indeed, for every $\Hcal$ and $\posterior$ over $\Hcal$, we~have
\begin{align*}
\tfrac{1}{2}\, d_{\hdh}(\DS,\DT)  \ & =\ \sup_{\substack{(h,h')\in\mathcal{H}^2}} \left|\RDT(h,h')-\RDS(h,h')\right| \\
&\geq\  \Esp_{(h,h')\sim\posterior^2} \left| \RDT(h,h')-\RDS(h,h')\right|\\
&\geq\   \des(\DS,\DT)\,.
\end{align*}

\subsubsection{A Domain Adaptation Bound for the Stochastic Gibbs Classifier}

We now derive our first main result in the following theorem: A domain adaptation bound relevant in a PAC-Bayesian setting, and that  relies on the domain disagreement of Definition~\ref{def:disagreement}. 
\begin{theorem}
\label{theo:pbda}
Let ${\cal H}$ be a hypothesis class. We have 
\begin{align*}
\nonumber \forall \posterior&\mbox{ on }\Hcal,\  \RPT(G_\posterior)\ \leq \  \RPS(G_\posterior) +  \frac{1}{2}\des(\DS,\DT) + \lambda_\posterior\,, 
\end{align*}
where $\lambda_\rho$ is the deviation between the expected joint errors (Equation~\ref{eq:eP}) of $G_\posterior$ on the target and source domains:
 \begin{eqnarray} \label{eq:lambda_rho}
 \lambda_\posterior
 &=&
 \Big|\, \ePT(\Q) - \ePS(\Q) \,\Big|\,.
\end{eqnarray}
\end{theorem}
\smallskip

\begin{proof}
First, from Equation~\eqref{eq:rde}, we recall that, given a domain $\P$ on $\XY$ and a distribution $\posterior$ over $\Hcal$, we have
\begin{equation*} 
 \RP(G_\posterior) \ = \ \frac{1}{2} \dD(\Q) + \eP(\Q)\,.
 \end{equation*}
Therefore,
\begin{align*}
\nonumber \RPT(G_\posterior)-\RPS(G_\posterior)
& = 
\nonumber \frac{1}{2} \Big(\dDT(\Q)-\dDS(\Q)\Big) \!+\!\Big(\ePT(\posterior)-\ePS(\posterior)\Big) \\
&\leq
\nonumber \frac{1}{2} \Big|\dDT(\Q)-\dDS(\Q)\Big| +\Big|\ePT(\Q)-\ePS(\Q)\Big|  \\
&=
\frac{1}{2} \des(\DS,\DT)  + \lambda_\posterior \,. 
\end{align*}
\end{proof}

\subsubsection{Meaningful Quantities}
\label{sec:pbda_quantities}

Similar to the bounds of Theorems~\ref{theo:BenDavid} and~\ref{theo:Mansour}, our bound can be seen as a trade-off between different quantities. 
Concretely, the terms $\RPS(G_\posterior)$ and $\des(\DS,\DT)$ are akin to the first two terms of the domain adaptation bound of Theorem~\ref{theo:BenDavid}: $\RPS(G_\posterior)$ is the \mbox{$\posterior$-average} risk over $\Hcal$ on the source domain,  and $\des(\DT,\DS)$ measures the \mbox{$\posterior$-average} disagreement between the marginals but is specific to the current model depending on $\posterior$.
The other term $\lambda_\posterior$ measures the deviation between the expected joint target and source errors of $\GQ$.
According to this theory, a good domain adaptation is possible if this deviation is low. 
However, since we suppose that we do not have any label in the target sample, we cannot control or estimate it.
In practice, we suppose that $\lambda_\posterior$ is low and we neglect it.
In other words, we assume that the labeling information between the two domains is related and that considering only the marginal agreement and the source labels is sufficient to find a good majority vote. 
Another important point is that the above theorem improves the one we proposed \mbox{in~\citet{pbda}} with regard to two aspects.\footnote{More details are given in our research report~\citep{pbda_long}.}
On the one hand, this bound is not degenerated when the source and target distributions are the same or close. 
On the other hand, our result contains only half of $\des(\DS,\DT)$, contrary to our first bound proposed in~\citet{pbda}.
Finally, due to the dependence of $\des(\DT,\DS)$ and $\lambda_\posterior$ on the learned posterior, our bound is, in general incomparable with the ones of Theorems~\ref{theo:BenDavid} and~\ref{theo:Mansour}.
 However, it brings the same underlying idea: Supposing that the two domains are sufficiently related, one must look for a model that minimizes a trade-off between its source risk and a distance between the domains' marginal.

\subsection{A Novel Perspective on Domain Adaptation}
\label{sec:second}
	In this section, we introduce an original approach to upper-bound the non-estimable risk of a $\posterior$-weighted majority vote  on a target domain~$\PT$ thanks to a term depending on its marginal distribution~$\DT$, another one on a related source domain $\PS$, and a term capturing the ``volume'' of the source distribution uninformative for the target task.
We base our bound on Equation~\eqref{eq:rde} (recalled below) that decomposes the Gibbs classifier into the trade-off between the  half of the expected disagreement $\dD(\Q)$ of Equation~\eqref{eq:RGQGQ} and the expected joint error $\eP(\Q)$ of Equation~\eqref{eq:eP}:
		\begin{align} \label{eq:GibbsDE}
		 \RP(\GQ) \ & =\ \tfrac12\,\dD(\Q) + \eP(\posterior) \,.\tag{\ref{eq:rde}}
		\end{align}
	A key observation is that the \emph{voters' disagreement does not rely on labels}; we can compute $\dD(\Q)$ using the marginal distribution $\D$.
Thus, in the present domain adaptation context, we have access to $\dDT(\Q)$ even if the target labels are unknown.
However, the expected joint error can only be computed on the labeled source domain, that is what we kept in mind to define our new domain divergence.

\subsubsection{Another Domain Divergence for the PAC-Bayesian Approach}
We design a domains' divergence that allows us to link the target
joint error $\ePT(\Q)$ with the source one $\ePS(\Q)$ by reweighting the latter.
This new divergence is called the \mbox{$\betaq$-divergence} and is parametrized by a real value $q>0$\,: 
	\begin{equation} \label{eq:bq}
	\bq \ = \ \left[\,\Esp_{\exbfs}
	\left(  \frac{\PT(\xbf,y)}{\PS(\xbf,y)} \right)^q\, \right]^{\frac1q}.
	\end{equation}
	It is worth noting that  considering some $q$ values allow us to recover well-known divergences.
For instance,  choosing \mbox{$q\! =\!  2$} relates our result  to the \mbox{$\chi^2$-distance}, between the domains as \mbox{\small $\bq[2]   =  \sqrt{ \chi^2(\PT\|\PS)\!+\!1}\,.$ }
Moreover, we can  link $\bq$ to the R{\'e}nyi divergence,\footnote{For $q\geq 0$, we can easily show \mbox{$\bq  =  2^{\frac{q-1}{q} D_q(\PT\|\PS)}$}, where $D_q(\PT\|\PS)$ is the R{\'e}nyi divergence between $\PT$ and $\PS$.} which has led to generalization bounds in the specific context of importance weighting \citep{CortesMM10}. 
We denote the limit case $q \to \infty$ by
\begin{equation} \label{eq:binf}
\binf   \ =  
	\sup_{(\xbf,y)\in\scriptsupport(\PS)}
	\left(  \frac{\PT(\xbf,y)}{\PS(\xbf,y)} \right) ,
\end{equation}
with $\support(\PS)$ the support of the domain $\PS$.
The \mbox{$\betaq$-divergence} handles the input space areas where the source domain support and the target domain support $\support(\PT)$ intersect.
It seems reasonable to assume that, when adaptation is achievable, such areas are fairly large. 
However, it is likely that $\support(\PT)$ is \emph{not entirely} included in $\support(\PS)$.
We denote  $\TminusS$ the distribution of $(\xbf,y){\sim}\PT$ conditional to $(\xbf,y)\in\support(\PT){\setminus}\support(\PS)$.
Since it is hardly conceivable to estimate the joint error $\e_\TminusS(\posterior)$ without making extra assumptions, we need to define the worst possible risk for this \emph{unknown} area,
	\begin{equation}
	\label{eq:etaTS}
	\ets =    
\Pr_{(\xb,y)\sim \PT}  \Big((\xb,y)\notin \support(\PS)\Big) \ 
	\sup_{h\in\Hcal} \Risk_\TminusS(h)\,.
	\end{equation}
	Even if we cannot evaluate  $\sup_{h\in\Hcal} \Risk_\TminusS(h)$, the value of $\ets$ is necessarily lower than $\Pr_{(\xb,y)\sim \PT}\big((\xb,y)\notin \support(\PS)\big)$. 

\subsubsection{A Novel Domain Adaptation Bound}
Let us state the result underlying the novel domain adaptation perspective of this paper.
	\begin{theorem}\label{theo:new_bound_general}\label{theo:dalc}
		Let $\Hcal$ be a hypothesis space, let $\PS$ and $\PT$ respectively be the source and the target domains on $\XY$. 
		Let $q>0$ be a constant. 
		We have, %for all $\posterior$ on $\Hcal$,
		\begin{align*} 
		\forall \posterior\mbox{ on }\Hcal,\quad \RPT(\GQ) \, \leq \, \frac12 \,\dDT(\Q) +
		\bq{\times }
		\Big[ \ePS(\Q) \Big]^{1-\frac1q}
		 + \eta_{\PT\setminus\PS}\,,
		\end{align*}
		where
		$\dDT(\Q)$, $\ePS(\Q)$, $\bq$ and $\ets$ are respectively defined by Equations~\eqref{eq:RGQGQ}, \eqref{eq:eP}, \eqref{eq:bq} and~\eqref{eq:etaTS}.
	\end{theorem}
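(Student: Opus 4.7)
The plan is to start from the Gibbs-risk decomposition of Equation~\eqref{eq:rde} applied to the target domain,
$$\RPT(\GQ) \ =\ \tfrac12\, \dDT(\Q) + \ePT(\Q),$$
so that it suffices to bound the joint target error $\ePT(\Q)$ by $\bq \cdot [\ePS(\Q)]^{1-1/q} + \ets$. The $\tfrac12 \dDT(\Q)$ term is already estimable from the unlabeled target marginal $\DT$, so no further work is needed on it.

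Next I would split the expectation defining $\ePT(\Q)$ along the partition of $\support(\PT)$ into $\support(\PT)\cap\support(\PS)$ and $\support(\PT)\setminus\support(\PS)$. On the out-of-support piece, since $\zoloss(h(\xb),y)\,\zoloss(h'(\xb),y)\leq \zoloss(h(\xb),y)$ for every pair $(h,h')$, the contribution is upper bounded by
$$\Pr_{(\xb,y)\sim\PT}\!\big((\xb,y)\notin\support(\PS)\big)\ \Esp_{h\sim\posterior}\Risk_{\TminusS}(h) \ \leq\ \ets,$$
which is exactly the definition~\eqref{eq:etaTS}.

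On the in-support piece I would switch from an expectation under $\PT$ to one under $\PS$ via the Radon–Nikodym ratio $\PT(\xb,y)/\PS(\xb,y)$, then apply Hölder's inequality with conjugate exponents $q$ and $p=q/(q{-}1)$:
\begin{align*}
\Esp_{(\xb,y)\sim\PS}\!\left[\frac{\PT(\xb,y)}{\PS(\xb,y)}\,\zoloss(h(\xb),y)\zoloss(h'(\xb),y)\right]
\,\leq\,
\left[\Esp_{\PS}\!\left(\tfrac{\PT}{\PS}\right)^{\!q}\right]^{\!1/q}\!\left[\Esp_{\PS}\big(\zoloss(h,y)\zoloss(h',y)\big)^{\!p}\right]^{\!1/p}\!.
\end{align*}
The first factor is exactly $\bq$ by definition~\eqref{eq:bq}. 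For the second factor I would use that $\zoloss(h,y)\zoloss(h',y)\in\{0,1\}$, so raising it to the power $p$ leaves it unchanged; taking the $\posterior^2$-expectation then yields $\bq\cdot\Esp_{(h,h')\sim\posterior^2}[\Esp_{\PS}\,\zoloss(h,y)\zoloss(h',y)]^{1/p}$. A final application of Jensen's inequality (concavity of $t\mapsto t^{1/p} = t^{\,1-1/q}$) pulls the $\posterior^2$-expectation inside and produces $\bq\cdot[\ePS(\Q)]^{1-1/q}$, completing the bound.

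The main obstacles I anticipate are bookkeeping rather than conceptual: carefully justifying the switch of measures only on $\support(\PS)$ (so that the ratio is well defined) and keeping track of the nested expectations when Hölder is applied \emph{inside} the $\posterior^2$-expectation before Jensen is applied \emph{outside}. The algebraic trick that makes everything line up is the observation that a $\{0,1\}$-valued random variable equals its own $p$-th power, which is what yields the clean exponent $1-\tfrac1q$ on $\ePS(\Q)$.
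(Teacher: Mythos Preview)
Your proposal is correct and follows essentially the same route as the paper: start from the decomposition $\RPT(\GQ)=\tfrac12\dDT(\Q)+\ePT(\Q)$, split $\ePT(\Q)$ along $\support(\PS)$, bound the out-of-support part by $\ets$, and handle the in-support part via the density ratio and H{\"o}lder together with the $\{0,1\}$-power trick.

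The only (minor) difference is in how H{\"o}lder is deployed. You apply it for each fixed pair $(h,h')$ over $\PS$, obtain $\bq\,[\Esp_{\PS}\zoloss(h,y)\zoloss(h',y)]^{1-1/q}$, and then invoke Jensen (concavity of $t\mapsto t^{1-1/q}$) to push the $\posterior^2$-expectation inside. The paper instead applies H{\"o}lder once on the product measure $\PS\times\posterior^2$: since $\PT/\PS$ does not depend on $(h,h')$, the first factor is still $\bq$, while the second factor is directly $[\Esp_{\posterior^2}\Esp_{\PS}(\zoloss\zoloss)^p]^{1/p}$, and the $\{0,1\}$ trick immediately yields $[\ePS(\Q)]^{1-1/q}$ without a separate Jensen step. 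Both arguments are valid (and both implicitly require $q>1$ for H{\"o}lder and for the concavity you use); the paper's is simply one step shorter.
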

\begin{proof}
From Equation~\eqref{eq:GibbsDE}, we know that $\RPT(\GQ) = \tfrac12\,\dDT(\Q) + \ePT(\posterior)$.
Let us split  $\ePT(\posterior)$ in two parts:			
\begin{align} 
\nonumber\ePT(\posterior) 
\, &=  \!\!
\Esp_{\exbft} \Esp_{(h,h')\sim\posterior^2} \zoloss\left(h(\tbf),y\right)\,\zoloss\left(h'(\tbf),y\right)\\
\label{eq:encore_egal}&=\!\!
\Esp_{\exbfs}  \   \frac{\PT(\xbf,y)}{\PS(\xbf,y)}   \Esp_{(h,h')\sim\posterior^2} \zoloss\left(h(\xbf),y\right)\zoloss\left(h'(\xbf),y\right) \\[.5mm]
\label{eq:encore_egal_2} & 
\quad {}+ \!\! 
\Esp_{\exbft}     \I\left[(\tbf,y) \notin \support(\PS)\right]   \Esp_{(h,h')\sim\posterior^2} \zoloss\left(h(\tbf),y\right)\zoloss\left(h'(\tbf),y\right) .
\end{align}
{\bf (i)} On the one hand, we upper-bound the first part (Line~\ref{eq:encore_egal}) using the H{\"o}lder's inequality (see Lemma~\ref{theo:holder} in~\ref{section:tools}), with $p$ such that $\,\tfrac1p{=}1{-}\tfrac1q$\,:
	\begin{align*} 
 & \Esp_{\exbfs}  \    \frac{\PT(\xbf,y)}{\PS(\xbf,y)}   \Esp_{(h,h')\sim\posterior^2} \zoloss\left(h(\xbf),y\right)\zoloss\left(h'(\xbf),y\right) \\
\nonumber &\quad\leq \left[\,\Esp_{\exbfs} \left(  \frac{\PT(\xbf,y)}{\PS(\xbf,y)} \right)^q\, \right]^{\frac1q} \left[\Esp_{(h,h')\sim\posterior^2} \Esp_{\exbfs} \  \left[ \zoloss\left(h(\xbf), y\right)\zoloss\left(h'(\xbf), y\right) \right]^p \right]^{ \frac1p}  \\
 &\quad=	\bq \ \Big[ \ePS(\Q) \Big]^{\frac1p},
	\end{align*}
	where we have removed the exponent from expression $[\zoloss(h(\xbf), y)\zoloss(h'(\xbf), y)]^p$ without affecting its value, which is either $1$ or~$0$.
	\smallskip

\noindent {\bf (ii)} On the other hand, we upper-bound the second part (Line~\ref{eq:encore_egal_2}) by the term~$\eta_\TminusS$\,:
\begin{align*}  
&\Esp_{\exbft}     \left( \I\left[(\tbf,y) \notin \support(\PS)\right]   \Esp_{(h,h')\sim\posterior^2}  \zoloss\left(h(\tbf),y\right)\zoloss\left(h'(\tbf),y\right)  \right) \\
=& \ \left(\Esp_{\exbft}  \I[(\tbf,y) {\notin} \support(\PS)]\right) \Esp_{{(\xbf,y)\sim\TminusS}}  \Esp_{(h,h')\sim\posterior^2} \zoloss\left(h(\xbf),y\right)\,\zoloss\left(h'(\xbf),y\right)\\
=&\ \left(\Esp_{\exbft}  \I[(\tbf,y) {\notin} \support(\PS)]\right)\, \e_\TminusS(\posterior) \\
=& \ \left(\Esp_{\exbft}  \I[(\tbf,y) {\notin} \support(\PS)]\right)\Big(\Risk_\TminusS(G_\posterior) - \tfrac12{\rm d}_{\TminusS}(\posterior) \Big)\\
\leq& \ \left(\Esp_{\exbft}  \I[(\tbf,y) {\notin} \support(\PS)]\right) \sup_{h\in\Hcal} \Risk_\TminusS(h) \ =\ \eta_\TminusS\,.
\end{align*}
\end{proof}

Note that the bound of Theorem~\ref{theo:new_bound_general} is reached whenever the domains are equal ($\PS  =  \PT$).
Thus, when adaptation is not necessary, our analysis is still sound and  non-degenerated: 
\begin{align*} 
\RPS(\GQ) \ =\ \RPT(\GQ) \ &\leq\  \tfrac12\, \dDT(\Q) +  1\times\left[\ePS(\Q)\right]^1+0\\
&=\  \tfrac12\, \dDS(\Q) +  \ePS(\Q) =\ \RPS(\GQ) \,.
		\end{align*}

\subsubsection{Meaningful Quantities and Connection with Some Domain Adaptation Assumptions }
\label{sec:dalc_quantities}
Similarly to the previous results recalled in Section~\ref{sec:notations}, our domain adaptation theorem bounds the target risk by a sum of three terms. 
However, our approach breaks the problem into \emph{atypical} quantities:
\begin{enumerate}[(i)]
\item The expected disagreement $\dDT(\posterior)$ captures \emph{second degree} information about the target domain (without any label).
\item The \mbox{$\betaq$-divergence} $\bq$ is not an additional term: It weighs the influence of the expected joint error $\ePS(\posterior)$ of the source domain; the parameter~$q$ allows us to consider different relationships between $\bq$ and $\ePS(\Q)$. 
\item The term $\ets$ quantifies the worst feasible target error on the regions where the source domain is uninformative for the target one.
In the current work, we assume that this area is small.
\end{enumerate}

We now establish some connections with existing common domain
adaptation assumptions in the literature. Recall that in order to characterize which domain adaptation task may be \emph{learnable}, \citet{bendavid-12} presented three \emph{assumptions that can help domain adaptation}. 
Our Theorem~\ref{theo:new_bound_general} does not rely on these assumptions, and remains valid in the absence of these assumptions, but they can be interpreted in our framework as discussed below. 

\paragraph{On the covariate shift} 
A domain adaptation task fulfills the \emph{covariate shift} assumption \citep{covariateshift} if the source and target domains only  differ in their marginals according to the input space, {\it i.e.,} $\PT_{\Y|\xb}(y) = \PS_{\Y|\xb}(y)$. 
In this scenario, one may estimate the values of $\bqx$, and even $\ets$, by using unsupervised density estimation methods.
Interestingly, with the additional assumption that the domains share the same support, we have \mbox{$\ets = 0$}. Then from Line~\eqref{eq:encore_egal}  we obtain 
$$\displaystyle \RPT(\GQ) \ =\ \tfrac12 \dDT(\Q) + \Esp_{\xbfs} \frac{\DT(\xbf)}{\DS(\xbf)} \Esp_{h\sim\posterior} \Esp_{h' \sim\posterior} \zoloss\left(h(\xbf),y\right)\zoloss\left(h'(\xbf),y\right) ,$$
which suggests a way to correct the \emph{shift} between the domains by reweighting the labeled source distribution, while considering the information from the target disagreement.

\paragraph{On the weight ratio} 
 The \emph{weight ratio} \citep{bendavid-12} of source and target domains, with respect to a collection of input space subsets $\Bcal\subseteq 2^\X$, is given by
$$ C_\Bcal(\PS, \PT) \ = \ \inf_{\substack{b\in\Bcal,\, \DT(b)\neq 0 }} \,\frac{\DS(b)}{\DT(b)}\,.$$
When $C_\Bcal(\PS, \PT)$ is bounded away from $0$, adaptation should be achievable under covariate shift.
In this context, and when \mbox{$\support(\PS) = \support(\PT)$}, the limit case of  $\binf$ is equal to the inverse of the \emph{pointwise weight ratio} obtained by letting $\Bcal = \{\{\xbf\}:\xbf\in\X\}$ in $C_\Bcal(\PS, \PT)$.
Indeed, both $\beta_q$ and $C_\Bcal$ compare the density of source and target domains, but provide distinct strategies to relax the pointwise weight ratio; the former by lowering the value of~$q$ and the latter by considering larger subspaces~$\Bcal$.

\paragraph{On the cluster assumption} 
A target domain fulfills the \emph{cluster assumption} when examples of the same label belong to a common ``area'' of the input space, and the differently labeled ``areas'' are well separated by \emph{low-density regions} (formalized by the \emph{probabilistic Lipschitzness} \cite{urner-11}). 
Once specialized to linear classifiers, $\dDT(\posterior)$ behaves nicely in this context (see Section~\ref{sec:pbda}).

\paragraph{On representation learning}
The main assumption underlying our domain adaptation algorithm exhibited in Section~\ref{sec:pbda} is that the support of the target domain is mostly included in the support of the source domain, \ie, the value of the term $\ets$ is small.
In situations when $\TminusS$ is sufficiently large to prevent proper adaptation, one could try to reduce its volume  while taking care to preserve a good compromise between $\dDT(\Q)$ and $\ePS(\Q)$, using a \emph{representation learning} approach, \ie, by projecting source and target examples into a new common input space, as done for example by \citet{Chen12,ganin-16} (see~\cite{wang2018deep} for a survey of representation learning approaches for domain adaptation vision tasks).

\subsection{Comparison of the Two Domain Adaptation Bounds}
Since they rely on different approximations, the gap between the bounds of Theorems~\ref{theo:pbda} and \ref{theo:new_bound_general} varies according to the context.
As presented in Sections~\ref{sec:pbda_quantities} and~\ref{sec:dalc_quantities}, the main difference between our two bounds lies in the estimable terms, from which we will derive algorithms in Section~\ref{sec:pbda}.
In Theorem~\ref{theo:new_bound_general}, the non-estimable terms are the domains' divergence $\bq$ and the term $\ets$. 
Contrary to the non-controllable term $\lambda_\posterior$ of Theorem~\ref{theo:pbda}, these terms do not depend on the \emph{learned} posterior distribution~$\posterior$: 
For every $\posterior$ on $\Hcal$, $\bq$ and $\ets$ are constant
values measuring the relation between the domains for the considered task. 
Moreover, the fact that the \mbox{$\betaq$-divergence} is not an additive term but a multiplicative one (as opposed to {\small$\dis(\DS,\DT)  +  \lambda_\posterior$} in Theorem~\ref{theo:pbda}) is a contribution of our new perspective.
Consequently, $\bq$ can be viewed as a hyperparameter allowing us to tune the trade-off between the target voters' disagreement and the source joint error. 
Experiments of Section~\ref{sec:expe} confirm that this hyperparameter can be successfully selected.

Note that,
when $\ePT(\Q) \geq \ePS(\Q)$, 
we can upper-bound the term $\lambda_\posterior$ of Theorem~\ref{theo:pbda} by using the same trick as in Theorem~\ref{theo:new_bound_general} proof. 
This leads to
\begin{align*}
\ePT(\Q) \,\geq\, \ePS(\Q)  \quad \Longrightarrow \quad \lambda_\Q\ &=\  \ePT(\Q) - \ePS(\Q) \\
&\leq\ \bq{\times} \big[ \ePS(\Q) \big]^{1-\frac1q}  + \eta_{\PT\setminus\PS} - \ePS(\Q)\,.
\end{align*}
Thus, in this particular case, we can rewrite Theorem~\ref{theo:pbda} statement as for all $\posterior$ on $\Hcal$, we have
\begin{align*}
\RPT(G_\posterior) \leq   \RPS(G_\posterior) + \frac{1}{2}\des(\DS,\DT) +
\bq{\times} \big[ \ePS(\Q) \big]^{1-\frac1q}  - \ePS(\Q) 
{+} \eta_{\PT\setminus\PS}\,.
\end{align*}
It turns out that, if $\dDT(\Q)  \geq \dDS(\Q)$
in addition to $\ePT(\Q) \geq \ePS(\Q)$,
the above statement reduces to the one of Theorem~\ref{theo:new_bound_general}. In words,
this occurs in the very particular case where the target disagreement
and the target expected joint error are both greater than their source counterparts,
which may be interpreted as a rather favorable situation. 
However, Theorem~\ref{theo:new_bound_general} is tighter in all other cases. 
This highlights that introducing absolute values in Theorem~\ref{theo:pbda} proof leads to a crude approximation.
Remember that we have first followed this path to stay aligned with classical domain adaptation analysis, but our second approach leads to a more suitable analysis in a PAC-Bayesian context.
Our experiments of Subsection~\ref{section:dalc_illustration} illustrate this empirically, once the domain adaptation bounds are converted into PAC-Bayesian generalization guarantees for linear classifiers.

\section{PAC-Bayesian Generalization Guarantees}
\label{sec:pb_da_bounds}
To compute our domain adaptation bounds, one needs to know the distributions $\PS$ and~$\DT$, which is never the case in real life tasks. 
 PAC-Bayesian theory provides tools to convert the bounds of Theorems~\ref{theo:pbda} and~\ref{theo:new_bound_general} into generalization bounds on the target risk computable from a pair of source-target samples $(S,T){\sim}(\PS)^\ms {\times} (\DT)^\mt$. 
To achieve this goal, we first provide generalization guarantees for the terms involved in our domain adaptation bounds: $\dDT(\Q)$, $\ePS(\Q)$, and $\des(\DS,\DT)$.
These results are presented as corollaries of Theorem~\ref{theo:catoni_genral} below, that generalizes the PAC-Bayesian of~\citet{catoni2007pac} (see Theorem~\ref{thm:pacbayescatoni} in Section~\ref{sec:threepb}) to arbitrary loss functions.
Indeed, Theorem~\ref{theo:catoni_genral}, with $\ell(h,\xb,y)  = \zoloss\left(h(\xb),y\right)$ and Equation~\eqref{eq:RGQ}, gives the usual bound on the Gibbs risk. \\
	
Note that the proofs of Theorem~\ref{theo:catoni_genral} (deferred in~\ref{proof:catoni_general}) and Corollary~\ref{theo:catoni_new}  (below) reuse techniques from related results presented in \citet{graal-neverending}.
Indeed, PAC-Bayesian bounds on $\dDT(\Q)$ and $\ePS(\Q)$ appeared in the latter, but under different forms. 
	\begin{theorem}
		\label{theo:catoni_genral}
		For any domain $\P$ over $\X\times Y$, for any set of hypotheses $\Hcal$, any prior  $\prior$ over~$\Hcal$, any loss $\ell:\Hcal{\times}\X{\times}\Y \to [0,1]$, any real number $\alpha > 0$, with a probability at least $1{-}\delta$ over the random choice of $\{(\xb_i,y_i)\}_{i=1}^m {\sim} (\P)^m$, we have, for all $\posterior$ on $\Hcal$,
		\begin{align*}
		\Esp_{\exbf\sim\P}  \Esp_{h\sim\posterior} \ell(h, \xb,y)\ \leq\ \frac{\alpha}{1{-}e^{-\alpha}} \left[\frac{1}{m}\sum_{i=1}^m \Esp_{h\sim\posterior}   \ell(h,\xb_i,y_i) + \frac{\KL(\posterior\|\prior) + \ln \frac{1}{\delta}}{ m\times \alpha}\right].
		\end{align*}
	\end{theorem}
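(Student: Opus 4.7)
The plan is to follow the standard Catoni-style PAC-Bayesian derivation, now applied to an arbitrary bounded loss $\ell \in [0,1]$ rather than the $\zo$-loss alone. Write $L(h) = \Esp_{(\xb,y)\sim\P}\,\ell(h,\xb,y)$ and $\hat L(h) = \frac{1}{m}\sum_{i=1}^m \ell(h,\xb_i,y_i)$ so the target inequality reads $\Esp_{h\sim\posterior} L(h) \leq \frac{\alpha}{1-e^{-\alpha}}\bigl[\Esp_{h\sim\posterior}\hat L(h) + \frac{\KL(\posterior\|\prior)+\ln(1/\delta)}{m\alpha}\bigr]$.

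First, exploit convexity of $x\mapsto e^{-\alpha x}$ on $[0,1]$, which yields the chord bound $e^{-\alpha x} \leq 1-(1-e^{-\alpha})x$, and hence, for each hypothesis $h$, $\Esp_{(\xb,y)\sim\P}\,e^{-\alpha\,\ell(h,\xb,y)} \leq 1 - (1-e^{-\alpha})L(h) \leq \exp\bigl(-(1-e^{-\alpha})L(h)\bigr)$. By independence of the $m$ draws this gives
\begin{equation*}
\Esp_{S\sim\P^m}\,\exp\!\bigl(-\alpha\,m\,\hat L(h) + m(1-e^{-\alpha})L(h)\bigr)\ \leq\ 1.
\end{equation*}

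Second, integrate this bound against the data-independent prior $\prior$, swap the expectations over $S$ and $h\sim\prior$ (Fubini), and then apply Markov's inequality: with probability at least $1-\delta$ over $S\sim(\P)^m$,
\begin{equation*}
\Esp_{h\sim\prior}\,\exp\!\bigl(-\alpha\,m\,\hat L(h) + m(1-e^{-\alpha})L(h)\bigr)\ \leq\ \tfrac{1}{\delta}.
\end{equation*}

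Third, invoke the Donsker--Varadhan change-of-measure inequality, $\Esp_{h\sim\posterior}\phi(h) \leq \KL(\posterior\|\prior) + \ln \Esp_{h\sim\prior}e^{\phi(h)}$, simultaneously for every posterior $\posterior$ on $\Hcal$, taking $\phi(h) = -\alpha\,m\,\hat L(h) + m(1-e^{-\alpha})L(h)$. Combining with the previous step and taking logarithms produces $m(1-e^{-\alpha})\Esp_{h\sim\posterior}L(h) \leq \alpha m \Esp_{h\sim\posterior}\hat L(h) + \KL(\posterior\|\prior) + \ln(1/\delta)$; dividing by $m(1-e^{-\alpha})$ and factoring $\alpha/(1-e^{-\alpha})$ gives the claim. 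No step is genuinely hard here—the only subtle points are to make sure $\ell(h,\xb,y)\in[0,1]$ is used at the right place (it is what makes the chord inequality hold) and that Fubini is legitimate (it is, because the integrand is nonnegative); the rest is routine algebra.
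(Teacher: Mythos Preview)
Your proof is correct. It differs from the paper's route in presentation and in one technical device, though both arrive at the same final inequality.

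The paper works inside the general convex-$\Delta$ PAC-Bayesian framework: it first applies Jensen's inequality to a generic convex $\Delta(q,p)$, then the change-of-measure inequality, then Markov, and finally controls $\Esp_{S'}\Esp_{h\sim\prior}\exp\bigl(m\,\Delta(\LSp(h),\LD(h))\bigr)$ by invoking Maurer's lemma to reduce the $[0,1]$-valued losses to Bernoulli variables and evaluate the resulting binomial sum; the Catoni choice $\Delta(q,p)=-\alpha q-\ln[1-p(1-e^{-\alpha})]$ makes that sum equal to~$1$. A last linearization $1-e^{-z}\leq z$ yields the stated bound. Your argument short-circuits all of this: the chord inequality $e^{-\alpha x}\leq 1-(1-e^{-\alpha})x$ on $[0,1]$ (together with $1-u\leq e^{-u}$) gives the per-hypothesis moment bound directly, so you never need Maurer's Bernoulli reduction nor the general $\Delta$ machinery. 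What the paper's route buys is a slightly tighter intermediate statement---namely $\Esp_{h\sim\posterior} L(h)\leq \tfrac{1}{1-e^{-\alpha}}\bigl[1-\exp(-\alpha\,\Esp_{h\sim\posterior}\hat L(h)-\tfrac{\KL(\posterior\|\prior)+\ln(1/\delta)}{m})\bigr]$ before linearizing---and a template that specializes to other PAC-Bayesian bounds by changing~$\Delta$. What your route buys is a shorter, more self-contained derivation tailored to this specific (Catoni) form.
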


	We now exploit Theorem~\ref{theo:catoni_genral} to obtain generalization guarantees on the expected disagreement, the expected joint error, and the domain disagreement.
	In Corollary~\ref{theo:catoni_new} below, we are especially interested in the possibility of controlling the trade-off---between the empirical estimate computed on the samples and the complexity term $\KL(\posterior\|\prior)$---with the help of parameters $a$, $b$ and $c$.	
	\begin{corollary}
		\label{theo:catoni_new}
		For any domains $\PS$ and $\PT$ over $\X\times\Y$, any set of voters $\Hcal$, any prior $\prior$ over~$\Hcal$, any $\delta\in (0,1]$, any real numbers $a>0$, $b > 0$ and $c>0$, we have\\
		--- with a probability at least $1{-}\delta$ over  $T\sim(\DT)^{\mt}$, 
		\begin{align*}
		\forall \posterior \mbox{ on }\Hcal,\ \dDT(\Q)\, \leq\, \frac{c}{1{-}e^{-c}}  \left[\dT(\Q)  +    \frac{2\,\KL(\posterior\|\prior)  +  \ln \frac{1}{\delta}}{\mt\times c}\right] ,
		\end{align*}
		--- with a probability at least $1{-}\delta$ over  $S\sim(\PS)^{\ms}$,
		\begin{align*}
		\forall \posterior \mbox{ on }\Hcal, \	\ePS(\Q)\, \leq\, \frac{b}{1{-}e^{-b}}  \left[\eS(\Q)  +    \frac{2\,\KL(\posterior\|\prior)  +  \ln \frac{1}{\delta}}{\ms\times b}\right] ,
		\end{align*}
		--- with a probability at least $1{-}\delta$ over $S \times  T  \sim (\DS \times  \DT)^m $,
		\begin{align*}
		\forall \posterior \mbox{ on }\Hcal, \
\des(\DS,\DT)\, \leq\,   \frac{2\,a }{1 -e^{-2a}}  \left[ \desST  +  \frac{2\,\KL(\posterior\|\prior)  +  \ln  \frac{2}{\delta}}{m\times a} + 1\right] - 1\,,
		\end{align*}
where $\dT(\Q)$, $\eS(\posterior) $, and $\desST$ are the empirical estimations of the target voters' disagreement, the source joint error, and the domain disagreement.
	\end{corollary}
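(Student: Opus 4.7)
The plan is to derive each of the three inequalities as a specialization of Theorem~\ref{theo:catoni_genral}, but applied to the \emph{product} hypothesis space $\Hcal^2$ equipped with the product prior $\prior^2{=}\prior{\times}\prior$ and product posterior $\posterior^2{=}\posterior{\times}\posterior$. The motivation for moving to pairs of hypotheses is that the three target quantities $\dDT(\Q)$, $\ePS(\Q)$ and $\des(\DS,\DT)$ are all expectations over $(h,h'){\sim}\posterior^2$. The key algebraic identity is $\KL(\posterior^2\,\|\,\prior^2) = 2\,\KL(\posterior\,\|\,\prior)$, which is precisely the source of the factor~$2$ appearing in front of every $\KL$ term of the corollary.

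The first two bounds then fall out by choosing a suitable $[0,1]$-valued pair loss. For the target disagreement, take $\ell\bigl((h,h'),\tbf,y\bigr) = \zoloss\bigl(h(\tbf), h'(\tbf)\bigr)$, which is label-free, whose $\posterior^2$-expectation under $\PT$ equals $\dDT(\Q)$ and whose empirical average on $T$ is $\dT(\Q)$; applying Theorem~\ref{theo:catoni_genral} with $\alpha{=}c$ on $T{\sim}(\DT)^{\mt}$ yields the first inequality. For the source joint error, take $\ell\bigl((h,h'),\sbf,y\bigr) = \zoloss(h(\sbf),y)\,\zoloss(h'(\sbf),y)$, whose $\posterior^2$-expectation under $\PS$ equals $\ePS(\Q)$ by Equation~\eqref{eq:eP}; Theorem~\ref{theo:catoni_genral} with $\alpha{=}b$ on $S{\sim}(\PS)^{\ms}$ gives the second inequality.

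The third inequality is the technically delicate one, because $\des(\DS,\DT)$ is an absolute value rather than a plain expectation. The idea is to consider, for each paired example $(\sbf,\tbf)\sim\DS{\times}\DT$, the two affinely rescaled pair losses
\begin{align*}
\ell_\pm\bigl((h,h'),(\sbf,\tbf)\bigr) \ =\ \tfrac{1}{2}\Bigl[\,1 \,\pm\, \bigl(\zoloss(h(\tbf),h'(\tbf)) - \zoloss(h(\sbf),h'(\sbf))\bigr)\Bigr],
\end{align*}
both of which take values in $[0,1]$. Their $\posterior^2$-expectations under $\DS{\times}\DT$ equal $\tfrac12\bigl[1 \pm (\dDT(\Q){-}\dDS(\Q))\bigr]$, and their empirical averages on $(S,T)$ equal $\tfrac12\bigl[1 \pm (\dT(\Q){-}\dS(\Q))\bigr]$. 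Applying Theorem~\ref{theo:catoni_genral} separately to $\ell_+$ and $\ell_-$ with parameter $\alpha{=}2a$ (the doubling compensating the $\tfrac12$ rescaling so that the prefactor reads $\tfrac{2a}{1-e^{-2a}}$), and combining the two resulting high-probability events by a union bound with confidence $\delta/2$ each (hence the $\ln\tfrac{2}{\delta}$), produces simultaneous upper bounds on $\pm(\dDT(\Q){-}\dDS(\Q))$. Taking the maximum of these two branches bounds $|\dDT(\Q){-}\dDS(\Q)|$ in terms of $|\dT(\Q){-}\dS(\Q)|=\desST$; a short arithmetic manipulation (multiplying through by $2$ to clear the $\tfrac12$, moving the constant out of the bracket, and isolating the $\des$ term on the left) produces the stated ``$+1\ldots-1$'' structure.

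The main obstacle is the third bound: keeping the scaling factors straight through the affine rescaling, verifying that the choice $\alpha{=}2a$ together with the doubling indeed reproduces the prefactor $\tfrac{2a}{1-e^{-2a}}$ exactly as claimed, and ensuring that after combining the two union-bound branches the empirical quantity on the right-hand side is correctly reconstructed as $\desST = |\dT(\Q){-}\dS(\Q)|$ rather than one of its signed variants.
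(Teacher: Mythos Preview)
Your proposal is correct and essentially identical to the paper's own proof: the paper likewise passes to the product space $\Hcal^2$ with $\KL(\posterior^2\|\prior^2)=2\,\KL(\posterior\|\prior)$, uses exactly your pair losses $\ell_d$, $\ell_e$ for the first two bounds, and your two affinely rescaled losses $\ell_\pm$ (there called $\ell_{d^{(1)}}$, $\ell_{d^{(2)}}$) together with a $\delta/2$ union bound and the choice $\alpha=2a$ for the third. There is no substantive difference in route or in the bookkeeping.
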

	
	\begin{proof}
Given $\prior$ and $\posterior$ over $\Hcal$, we consider a new prior $\prior^2$ and a new posterior $\posterior^2$, both over $\Hcal^2$, such that: \mbox{$\forall\, h_{ij}  =  (h_i,h_j) \in  \Hcal^2,\ \prior^2(h_{ij})  =  \prior(h_i)\prior(h_j)$,} and \mbox{$\posterior^2(h_{ij})  =  \posterior(h_i)\posterior(h_j)$}.
Thus, $\KL(\posterior^2\|\prior^2)  =  2\, \KL(\posterior\|\prior)$ (see Lemma~\ref{lem:2KL} in~\ref{section:tools}).
Let us define four new loss functions for a ``paired voter'' $h_{ij}  \in  \Hcal^2$:
\begin{align*}
\ell_d(h_{ij}, \xb, y)\ &= \ \zoloss\left(h_i(\xbf),h_j(\xbf)\right),\\
\ell_e(h_{ij}, \xb, y)\ &= \ \zoloss\left(h_i(\xbf), y\right) \times  \zoloss\left(h_j(\xbf), y\right),\\
 \ell_{d^{(1)}}(h_{ij}, (\xbf^s, \xbf^t), \cdot)\ & =\ \frac{1+\zoloss\left(h_i(\xbf^s),h_j(\xbf^s)\right) - \zoloss\left(h_i(\xbf^t),h_j(\xbf^t)\right)}{2}\,,\\
\ell_{d^{(2)}}(h_{ij}, (\xbf^s, \xbf^t), \cdot)\ & =\ \frac{1+\zoloss\left(h_i(\xbf^t),h_j(\xbf^t)\right) - \zoloss\left(h_i(\xbf^s),h_j(\xbf^s)\right)}{2}\,.
\end{align*}
Thus, from Theorem~\ref{theo:catoni_genral}:
\begin{itemize}
\item The bound on $\dDT(\Q)$ is obtained  with $\ell\eqdots \ell_d$, and Equation~\eqref{eq:RGQGQ};
\item The bound on $\ePS(\Q)$ is similarly obtained with $\ell\eqdots \ell_e$, and  Equation~\eqref{eq:eP};
\item The bound on $\des(\DS,\DT)$ is obtained with  $\ell\eqdots \ell_{d^{(1)}}$, by upper-bounding
\begin{equation*}
d^{(1)} = \dDS(\Q) - \dDT(\Q) = 2 \Esp_{h_{ij}\sim\posterior^2} \Esp_{(\xbf^s, \xbf^t) \sim \DS \times  \DT} \ell_{d^{(1)}} (h_{ij}, (\xbf^s, \xbf^t), \cdot) - 1\,,
\end{equation*}
from its empirical counterpart
\begin{equation*}
\widehat{d}^{(1)} = \dS(\Q) - \dT(\Q) = \frac2m \Esp_{h_{ij}\sim\posterior^2} \sum_{k=1}^m \ell_{d^{(1)}} (h_{ij}, (\xbf_k^s, \xbf_k^t), \cdot) - 1\,.
\end{equation*}
We then have, 
with probability $1 - \frac\delta2$ over the choice of $S\times T\sim(\DS\times \DT)^m$, 
\begin{equation*}
\frac{|d^{(1)}|+1}{2}\ \leq \  \frac{a}{ 1 - e^{-2a} } \left[|\widehat{d}^{(1)}| +1+ \frac{2\,\KL(\posterior\|\prior)  +  \ln  \frac{2}{\delta}}{m\times a} \right] \,.
\end{equation*}
In turn, with $\ell\eqdots \ell_{d^{(2)}}$ we bound $d^{(2)} = \dDT(\Q) - \dDS(\Q)$ by its empirical counterpart $\widehat{d}^{(2)} = \dT(\Q) - \dS(\Q)$, with probability $1 -\frac\delta2$ over the choice of $S\times T\sim(\DS\times \DT)^m$, 
\begin{equation*}
\frac{|d^{(2)}|+1}{2}\ \leq \  \frac{a}{ 1 - e^{-2a} } \Bigg[|\widehat{d}^{(2)}| +1+ \frac{2\,\KL(\posterior\|\prior)  +  \ln  \frac{2}{\delta}}{m\times a} \Bigg] \,.
\end{equation*}
Finally, by the union bound, with probability $1 - \delta$, we have
 \begin{align*}
\des(\DS,\DT)\, &= \, \max\Big\{d^{(1)}, d^{(2)}\Big\}\\
&\leq\,  
  \frac{2a}{1 -e^{-2a}} \left[ \desST  +  \frac{2\,\KL(\posterior\|\prior)  +  \ln  \frac{2}{\delta}}{m\times a} + 1\right]\!{-} 1\,,
\end{align*}
\end{itemize}
 and we are done.
\end{proof}

\medskip

The following bound is based on the above Catoni's approach for our domain adaptation bound of Theorem~\ref{theo:pbda} and corresponds to the one from which we derive---in Section~\ref{sec:pbda}---\PBDA our first algorithm for PAC-Bayesian domain adaptation.

\begin{theorem}
 \label{theo:pacbayesdabound_catoni_bis}
 For any domains $\PS$ and $\PT$ (resp.~with marginals $\DS$ and $\DT$) over $\X \times   Y$, any set of hypotheses  $\Hcal$,  any prior distribution $\prior$ over $\Hcal$, any $\delta \in (0,1]$, any real numbers $\omega > 0$ and $a > 0$,  with a probability at least $1-\delta$ over the choice of $S \times  T  \sim (\PS {\times}  \DT)^m $, for every posterior distribution $\posterior$ on $\Hcal$, we have
 \begin{align*} 
\RPT(G_\posterior)   \leq  \omega' \RS(G_\posterior)  {+}  a'\tfrac{1}{2} \desST {+}  \left( \tfrac{\omega'}{\omega} {+} \tfrac{a'}{a} \right)  \frac{\KL(\posterior\|\prior{+}\ln\frac{3}{\delta}}{m} + \lambda_\posterior + \tfrac{1}{2} (a' {-}  1)
   \,,
 \end{align*}
where  $\RS(G_\posterior)$ and $\desST$ are the empirical estimates of the target risk and the domain disagreement;
$\lambda_\rho$ is defined by Equation~\eqref{eq:lambda_rho}; $\displaystyle \omega'\eqdef\tfrac{\omega}{1 -e^{-\omega}}$ \, and \, $\displaystyle a'\eqdef \tfrac{2a}{1 -e^{-2a}}$\,.
\end{theorem}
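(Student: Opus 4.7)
The overall strategy is straightforward: take the domain adaptation inequality of Theorem~\ref{thm:pacbayesdabound}, namely $\RPT(G_\posterior) \leq \RPS(G_\posterior) + \tfrac{1}{2}\des(\DS,\DT) + \lambda_\posterior$, and replace each of the two data-dependent (but distribution-level) quantities on the right-hand side by an empirical PAC-Bayesian upper bound. The term $\lambda_\posterior$ cannot be controlled from unlabeled target data, so it is left untouched. Since the domain adaptation inequality holds deterministically for every $\posterior$, stochasticity enters only through the two PAC-Bayesian relaxations, and a union bound finishes the argument.

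First I would bound $\RPS(G_\posterior)$ by applying Theorem~\ref{thm:pacbayescatoni} (Catoni's form) with parameter $\omega$ and confidence $1-\delta/3$ on the source sample $S\sim(\PS)^{\ms}$, giving
$\RPS(G_\posterior) \leq \omega'\,\RS(G_\posterior) + \tfrac{\omega'}{\omega}\cdot\tfrac{\KL(\posterior\|\prior)+\ln(3/\delta)}{m}$.
Next I would bound $\des(\DS,\DT)$ by invoking the third item of Corollary~\ref{theo:catoni_new} with parameter $a$, applied with confidence $1-\delta/3$ on each of the two signed-deviation losses $\ell_{d^{(1)}}$ and $\ell_{d^{(2)}}$ used in the corollary's proof (so that the internal union bound there consumes the remaining $2\delta/3$). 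This yields
$\des(\DS,\DT) \leq a'\,\desST + \tfrac{a'}{a}\cdot\tfrac{2\KL(\posterior\|\prior)+\ln(3/\delta)}{m} + (a'-1)$, after tracking the fact that the prior and posterior over $\Hcal^2$ satisfy $\KL(\posterior^2\|\prior^2)=2\KL(\posterior\|\prior)$ (Lemma~\ref{lem:2KL}).

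Finally I would substitute both bounds into Theorem~\ref{thm:pacbayesdabound}. The $a'\desST$ term picks up the factor $\tfrac{1}{2}$ from the domain adaptation inequality, the constant offset gives $\tfrac{1}{2}(a'-1)$, and the two KL contributions can be merged into the form $\bigl(\tfrac{\omega'}{\omega}+\tfrac{a'}{a}\bigr)\tfrac{\KL(\posterior\|\prior)+\ln(3/\delta)}{m}$, which is a (mild) upper bound on the tighter expression one gets by keeping the two contributions separate. The union bound over the two PAC-Bayesian relaxations gives the overall confidence $1-\delta$, and the statement follows. The only genuine difficulty is accounting: one must split the total $\delta$ among the underlying Catoni applications so that the logarithmic factor collapses to $\ln(3/\delta)$, and then verify that the remaining algebra consolidates cleanly into the stated compact form; no new inequalities beyond Theorem~\ref{thm:pacbayescatoni}, Theorem~\ref{thm:pacbayesdabound}, and Corollary~\ref{theo:catoni_new} are required.
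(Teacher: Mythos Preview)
Your proposal is correct and follows essentially the same route as the paper: start from Theorem~\ref{thm:pacbayesdabound}, bound $\RPS(G_\posterior)$ via Theorem~\ref{thm:pacbayescatoni} with confidence $1-\delta/3$, bound $\des(\DS,\DT)$ via Corollary~\ref{theo:catoni_new} with the remaining $2\delta/3$ (so that its internal union bound yields $\ln\tfrac{3}{\delta}$), and then use the slack $2\KL(\posterior\|\prior)+\ln\tfrac{3}{\delta}\leq 2\big(\KL(\posterior\|\prior)+\ln\tfrac{3}{\delta}\big)$ to merge the complexity terms into the compact $\big(\tfrac{\omega'}{\omega}+\tfrac{a'}{a}\big)$ form. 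The paper's proof is exactly this, stated more tersely.
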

\begin{proof}
In Theorem~\ref{theo:pbda}, we replace $\RPS(G_\posterior)$ and $\des(\DS,\DT)$ by their upper bound, obtained from Theorem~\ref{thm:pacbayescatoni} and Corollary~\ref{theo:catoni_new}, with $\delta$ chosen respectively as $\frac{\delta}{3}$ and $\frac{2\delta}{3}$.
In the latter case, we use 
$$2\,\KL(\posterior\|\prior) + \ln\tfrac{2}{2\delta/3} \,=\, 2\,\KL(\posterior\|\prior) +\ln\tfrac{3}{\delta} \,<\,  2\left( \KL(\posterior\|\prior) +\ln\tfrac{3}{\delta} \right).$$
\end{proof}

\medskip

We now derive a PAC-Bayesian generalization bound for our second domain adaptation bound of Theorem~\ref{theo:new_bound_general} from which we derive---in Section~\ref{sec:pbda}---our second algorithm for PAC-Bayesian domain adaptation \algo. 
For algorithmic simplicity, we deal with Theorem~\ref{theo:new_bound_general} when $q{\to}\infty$. 
Thanks to Corollary~\ref{theo:catoni_new}, we obtain the following generalization bound defined with respect to	the empirical estimates of the target disagreement and the source joint error.
	\begin{theorem}
		\label{theo:catoni_new_general}
		For any domains $\PS$ and $\PT$ over $\X\times \Y$, any set of voters $\Hcal$, any prior $\prior$ over $\Hcal$, any $\delta\in (0,1]$, any real numbers $b  >  0$ and $c > 0$, with a probability at least $1{-}\delta$ over the choices of $S{\sim}(\PS)^\ms$ and $ T {\sim} (\DT)^\mt$, for every posterior distribution $\posterior$ on $\Hcal$, we have
		\begin{align*}
		\RPT(\GQ)\ \leq\ c'\,\tfrac12\,\dT(\Q)  + b'\,\eS(\posterior) + \ets + \left(\frac{c'}{\mt\, c} {+} \frac{b'}{\ms\, b} \right)     \left(2\,\KL(\posterior\|\prior) {+} \ln \tfrac{2}{\delta}\right) ,
		\end{align*}
where $\dT(\Q)$ and $\eS(\posterior) $ are the empirical estimations of the target voters' disagreement and the source joint error, and $\displaystyle b'=\tfrac{b}{1-e^{-b}}\,\binf$, and $\displaystyle c'=\tfrac{c}{1-e^{-c}}$.
	\end{theorem}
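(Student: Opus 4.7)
The plan is to combine the $q\to\infty$ instance of Theorem~\ref{theo:new_bound_general} with the PAC-Bayesian estimates already proved as Corollary~\ref{theo:catoni_new}, glued together by a union bound. The non-estimable bound we start from reads, for every $\posterior$ on $\Hcal$,
\begin{equation*}
\RPT(\GQ) \ \leq \ \tfrac12\,\dDT(\posterior) + \binf\cdot\ePS(\posterior) + \ets\,,
\end{equation*}
obtained from Theorem~\ref{theo:new_bound_general} by letting $q\to\infty$, since $1-\tfrac1q\to 1$ and $\bq\to\binf$. Note that the latter limit is allowed here because $\binf$ is a constant independent of~$\posterior$, so it does not interact with the PAC-Bayes bounds we will invoke next.

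Next I would upper bound the two estimable quantities $\dDT(\posterior)$ and $\ePS(\posterior)$ by invoking, respectively, the first and the second inequalities of Corollary~\ref{theo:catoni_new}, each instantiated at confidence level $\tfrac{\delta}{2}$ so that a union bound over the independent draws $T\sim(\DT)^\mt$ and $S\sim(\PS)^\ms$ yields the joint statement with probability at least $1-\delta$. This replaces $\ln\tfrac{1}{\delta}$ by $\ln\tfrac{2}{\delta}$ in both bounds. Multiplying the $\dDT$ bound by $\tfrac12$ and the $\ePS$ bound by $\binf$, then summing with $\ets$, produces
\begin{equation*}
\RPT(\GQ) \,\leq\, \tfrac{c}{1-e^{-c}}\cdot\tfrac12\,\dT(\posterior)
\,+\, \binf\cdot\tfrac{b}{1-e^{-b}}\,\eS(\posterior)
\,+\, \ets
\,+\, \tfrac{c'}{\mt\,c}\Lambda \,+\, \tfrac{b'}{\ms\,b}\Lambda\,,
\end{equation*}
where $\Lambda\eqdef 2\,\KL(\posterior\|\prior)+\ln\tfrac{2}{\delta}$, and where I have used that, upon identifying $c'=\tfrac{c}{1-e^{-c}}$ and $b'=\tfrac{b}{1-e^{-b}}\binf$, the leading coefficients of $\tfrac12\dT(\posterior)$ and $\eS(\posterior)$ become exactly $c'$ and $b'$, while the complexity prefactors become $\tfrac{c'}{\mt c}$ and $\tfrac{b'}{\ms b}$ (the $\binf$ factor cancels in the latter since $\binf\cdot\tfrac{b/(1-e^{-b})}{\ms b} = \tfrac{b'}{\ms b}$). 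Collecting the complexity terms gives exactly the statement of Theorem~\ref{theo:catoni_new_general}.

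No substantial obstacle is expected: the derivation is mechanical once one has chosen the right scaling constants $c'$ and $b'$. The only thing to double-check is that the limit $q\to\infty$ commutes with the PAC-Bayesian estimation step, which it does because $\binf$ depends only on the fixed pair of domains $(\PS,\PT)$ and not on $\posterior$ or on the samples, and thus can be pulled outside of the high-probability inequality applied to $\ePS(\posterior)$. Likewise, one must be careful to apply Corollary~\ref{theo:catoni_new} with confidence $\tfrac{\delta}{2}$ on each side so that the union bound delivers the announced $1-\delta$ overall.
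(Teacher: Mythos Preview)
Your proposal is correct and follows exactly the same approach as the paper: apply Corollary~\ref{theo:catoni_new} to bound $\dDT(\posterior)$ and $\ePS(\posterior)$ separately, each with confidence $1-\tfrac{\delta}{2}$, and plug these into the $q\to\infty$ case of Theorem~\ref{theo:new_bound_general} via a union bound. The only minor slip is that the disagreement complexity prefactor actually comes out as $\tfrac{c'}{2\,\mt\,c}$ (because of the $\tfrac12$ in front of $\dDT$), which is dominated by the stated $\tfrac{c'}{\mt\,c}$, so the inequality still holds.
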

	\begin{proof}
We bound  separately $\dDT(\Q)$ and $\ePS(\Q)$ using Corollary~\ref{theo:catoni_new} (with probability $1{-}\frac{\delta}{2}$ each), and then combine the two upper bounds according to Theorem~\ref{theo:new_bound_general}.
	\end{proof}

From an optimization perspective, the problem suggested by the bound of Theorem~\ref{theo:catoni_new_general} is much more convenient to minimize than the PAC-Bayesian bound derived in Theorem~\ref{theo:pacbayesdabound_catoni_bis}.	
The former is \emph{smoother} than the latter: The absolute value related to the domain disagreement $\dis(\DS,\DT)$ disappears in benefit of the domain divergence $\binf$, which is constant and can be considered as an hyperparameter of the algorithm.
Additionally, Theorem~\ref{theo:pacbayesdabound_catoni_bis} requires equal source and target sample sizes while Theorem~\ref{theo:catoni_new_general} allows \mbox{$\ms\neq\mt$}.
Moreover, for algorithmic purposes, we ignore the \mbox{$\posterior$-dependent} non-constant term $\lambda_\posterior$ of Theorem~\ref{theo:pacbayesdabound_catoni_bis}.
In our second analysis, such compromise is not mandatory in order to apply the theoretical result to real problems, since the non-estimable term $\ets$ is constant and does not depend on the learned $\posterior$.
Hence, we can neglect $\ets$ without any impact on the optimization problem described in the next section. 
Besides, it is realistic to consider $\ets$ as a small quantity in situations where the source and target supports are similar.

\section[PAC-Bayesian DA Learning of Linear Classifiers]{PAC-Bayesian Domain Adaptation Learning of Linear Classifiers} 
\label{sec:dapbgd}
\label{sec:pbda}

In this section,  we design two learning algorithms for domain adaptation\footnote{The code of our algorithms are available on-line. More details are given in Section~\ref{sec:expe}.} inspired by the PAC-Bayesian learning algorithm of~\citet{germain2009pac}.
That is, we adopt the specialization of the PAC-Bayesian theory to linear classifiers described in Section~\ref{sec:pbgd}. 
The taken approach is the one privileged in numerous PAC-Bayesian works \citep[\eg,][]{Langford02,AmbroladzePS06,mcallester-keshet-11,Parrado-Hernandez12,germain2009pac,pbda}, as it makes the risk of the linear classifier $h_\wb$ and the risk of a (properly parametrized) majority vote coincide, while in the same time promoting large margin classifiers.

\subsection{Domain and Expected Disagreement, Joint Error of Linear Classifiers}

Let us consider a prior $\prior_\mathbf{0}$ and a posterior $\posterior_\wb$ that are spherical Gaussian distributions over a space of linear classifiers, exactly as defined in Section~\ref{sec:pbgd}.
We seek to express the \emph{domain disagreement} $\desw(\DS,\DT)$, \emph{expected disagreement} $\dD(\posterior_\wb)$ and the \emph{expected joint error}~$\eP(\posterior_\wb)$.\\
\noindent First, for any marginal $\D$, the expected disagreement for linear classifiers is
\begin{align}
\nonumber \dD(\posterior_\wb)
\nonumber =& \Esp_{\xbf\sim \D}  \Esp_{(h,h')\sim \posterior_\wb^2} \zoloss\big( h(\xb), h'(\xb) \big) \\
\nonumber =&  \Esp_{\xbf\sim \D}  \Esp_{(h,h')\sim \posterior_\wb^2} \I [h(\xb)\neq h'(\xb)]\\
\nonumber =&  \Esp_{\xbf\sim \D}  \Esp_{(h,h')\sim \posterior_\wb^2} \!\! \Big(\I [h(\xb)\!=\!1] \, \I [h'(\xb)\!=\!-1] + \I [h(\xb)\!=\!-1] \, \I [h'(\xb)\!=\!1]\Big)\\
\nonumber =&  \ 2 \Esp_{\xbf\sim \D}\  \Esp_{(h,h')\sim \posterior_\wb^2} \I [h(\xb)=1] \, \I [h'(\xb)=-1]\\
\nonumber =&  \ 2 \Esp_{\xbf\sim \D}\  \Esp_{h\sim \posterior_\wb} \I [h(\xb)=1] \,  \Esp_{h'\sim \posterior_\wb}\I [h'(\xb)=-1]\\
\nonumber =& \ 2  \Esp_{\xbf\sim \D} \Phirisk\left( \frac{\wb \cdot \xb}{\|\xb\|}  \right) \  \Phirisk\left(  - \frac{\wb \cdot \xb}{\|\xb\|}  \right)\\
\label{eq:RGQGQ_lin} =&  \Esp_{\xbf\sim \D} \Phidis\left( \frac{\wb \cdot \xb}{\|\xb\|}\right), 
\end{align}
\noindent where
\begin{equation} \label{eq:phidis}
\Phidis(x)\ \eqdef \ 2\,\Phirisk(x)\,\Phirisk(-x)\,.
\end{equation}
Thus, the domain disagreement for linear classifiers is
\begin{eqnarray}
\nonumber\desw(\DS,\DT)  &=& \Big| \,  \dDS(\posterior_\wb) - \dDT({\posterior_\wb})\,\Big| \\
&=& 
\left| 
\Esp_{\sbf\sim \DS} \Phidis \left(  \frac{\wb \cdot \sbf}{\|\sbf\|}  \right)    -  \Esp_{\tbf\sim \DT}
\Phidis \left(  \frac{\wb \cdot \tbf}{\|\tbf\|}  \right)  \right|. 
\label{eq:dis_lin}
\end{eqnarray}

\noindent Following a similar approach, the expected joint error is, for all $\wb\in\Rbb$,
\begin{eqnarray}
\nonumber	\ep(\posterior_\wb) &=&	\Esp_{\exbf\sim\P} \Esp_{h\sim\posterior_\wb} \Esp_{h'\sim\posterior_\wb}  \zoloss\left(h(\xbf),y\right)\times \zoloss\left(h'(\xbf),y\right)\\
\nonumber &=& \Esp_{\exbf\sim\P}  \Esp_{h\sim\posterior_\wb}   \zoloss\left(h(\xbf), y\right) \Esp_{h'\sim\posterior_\wb} \zoloss\left(h'(\xbf), y\right)\\[-1.5mm]
	&=& \Esp_{\exbf\sim\P} \Phierr \left(  y\, \frac{\wb \cdot \xbf}{\|\xbf\|}  \right),
	\label{eq:eP_lin}
\end{eqnarray}
with  
\begin{equation} \label{eq:phierr}
\Phierr(x) \ =\  \big[\Phirisk(x)\big]^2\,.
\end{equation}
Functions $\Phierr$ and $\Phidis$ defined above can be interpreted as loss functions for linear classifiers (illustrated by Figure~\ref{fig:phida_a}).

\subsection{Domain Adaptation Bounds}
Theorems~\ref{theo:pbda} and~\ref{theo:new_bound_general} (when $q{\to}\infty$) specialized to linear classifiers give the two following corollaries. We recall that  $\RPT(h_\wb) = \RPT(B_{\posterior_\wb})  \leq 2\,\RPT(G_{\posterior_\wb})$. 
\begin{corollary}\label{cor:new_bound_pbda_linear}
 Let $\PS$ and $\PT$ respectively be the source and the target domains on $\XY$. For all $\wb\in \Rbb$, we have
\begin{align*}
 \RPT(h_\wb) \, \leq \,  2 \,\RPS(G_{\posterior_\wb}) + \desw(\DS,\DT) + 2 \lambda_{\posterior_\wb}  \,,	
\end{align*}
where $\desw(\DS,\DT)$ and $\lambda_{\posterior_\wb}$ are respectively defined by Equations~\eqref{eq:dis_lin} and~\eqref{eq:lambda_rho}.
\end{corollary}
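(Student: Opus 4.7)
The plan is to obtain this corollary as an essentially immediate specialization of Theorem~\ref{thm:pacbayesdabound} to the Gaussian posterior $\posterior_\wb$ over linear classifiers. The three ingredients are already in place in the paper: the general domain adaptation bound on $\RPT(G_\posterior)$, the identity $h_\wb = B_{\posterior_\wb}$ valid when $\posterior_\wb = \Ncal(\wb, \I)$ (stated in Section~\ref{sec:pbgd}), and the standard relation $\RP(B_\posterior) \leq 2\,\RP(G_\posterior)$.

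First, I instantiate Theorem~\ref{thm:pacbayesdabound} with the posterior $\posterior = \posterior_\wb$, which directly gives
\begin{equation*}
\RPT(G_{\posterior_\wb}) \ \leq\ \RPS(G_{\posterior_\wb}) + \tfrac{1}{2}\dis_{\posterior_\wb}(\DS,\DT) + \lambda_{\posterior_\wb}.
\end{equation*}
Here $\dis_{\posterior_\wb}(\DS,\DT)$ matches $\desw(\DS,\DT)$ from Equation~\eqref{eq:dis_lin} by Definition~\ref{def:disagreement} applied with $\posterior = \posterior_\wb$, and $\lambda_{\posterior_\wb}$ is the specialization of Equation~\eqref{eq:lambda_rho}. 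Multiplying both sides by $2$ rescales the inequality into the form displayed in the corollary, up to replacing $\RPT(G_{\posterior_\wb})$ on the left by $\RPT(h_\wb)$.

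To swap the Gibbs risk for the deterministic risk $\RPT(h_\wb)$, I invoke the two facts recalled in Section~\ref{sec:pbgd}: (i) for the isotropic Gaussian $\posterior_\wb$ the $\posterior_\wb$-weighted majority vote coincides with $h_\wb$, namely $h_\wb(\xb) = B_{\posterior_\wb}(\xb)$ for every $\xb$; and (ii) the classical bound $\RP(B_\posterior) \leq 2\,\RP(G_\posterior)$ recalled right after Equation~\eqref{eq:RGQ}. Combining these yields $\RPT(h_\wb) = \RPT(B_{\posterior_\wb}) \leq 2\,\RPT(G_{\posterior_\wb})$, and chaining with the doubled inequality above finishes the proof.

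Since every piece is already established earlier in the paper, I do not anticipate any real obstacle; the only care needed is to make explicit that $\posterior_\wb$ is admissible in Theorem~\ref{thm:pacbayesdabound} (it is an arbitrary distribution over $\Hcal$, so there is nothing to check) and to align the symbols $\dis_{\posterior_\wb}$ and $\desw$ via Equation~\eqref{eq:dis_lin}. The proof should fit in two or three lines.
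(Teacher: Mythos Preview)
Your proposal is correct and matches the paper's own approach exactly: the paper simply states that Corollary~\ref{cor:new_bound_pbda_linear} is Theorem~\ref{thm:pacbayesdabound} specialized to linear classifiers, together with the recalled fact $\RPT(h_\wb) = \RPT(B_{\posterior_\wb}) \leq 2\,\RPT(G_{\posterior_\wb})$. There is nothing to add.
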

\medskip

\begin{corollary}\label{cor:new_bound_linear}
 Let $\PS$ and $\PT$ respectively be the source and the target domains on $\XY$. For all $\wb\in \Rbb$, we have
\begin{align*}
 \RPT(h_\wb) \, \leq \, \dDT(\Q_\wb) + 2\,\binf \times  \ePS(\Q_\wb) + 2\,\ets  \,,	
\end{align*}
where $\dDT(\posterior_\wb)$, $\ePS(\posterior_\wb)$, $\binf$ and $\ets$ are respectively defined by Equations~\eqref{eq:RGQGQ_lin}, \eqref{eq:eP_lin}, \eqref{eq:binf} and~\eqref{eq:etaTS}.
\end{corollary}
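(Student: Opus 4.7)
The proof is essentially a direct specialization of Theorem~\ref{theo:new_bound_general} to the linear classifier setting via the Gaussian posterior $\posterior_\wb$ and $\prior_\mathbf{0}$ introduced in Section~\ref{sec:pbgd}, combined with the standard Bayes-to-Gibbs relation $\RPT(B_\posterior) \le 2\,\RPT(G_\posterior)$ noted just above the statement.

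The plan is as follows. First, I would take Theorem~\ref{theo:new_bound_general} and pass to the limit $q \to \infty$. Since $1 - \tfrac{1}{q} \to 1$ and $\beta_q(\PT\|\PS) \to \binf$ (by definition of the limit case in Equation~\eqref{eq:binf}), the theorem yields, for every posterior $\posterior$ on $\Hcal$,
\begin{equation*}
\RPT(G_\posterior) \ \leq\  \tfrac{1}{2}\,\dDT(\posterior) + \binf \cdot \ePS(\posterior) + \ets\,.
\end{equation*}
Specializing this inequality to the Gaussian posterior $\posterior_\wb$ (and using the rewritings of $\dDT(\posterior_\wb)$ and $\ePS(\posterior_\wb)$ in terms of the loss functions $\Phidis$ and $\Phierr$ already obtained in Equations~\eqref{eq:RGQGQ_lin} and~\eqref{eq:eP_lin}) gives the same bound specialized to linear classifiers.

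Second, I would use the fact recalled in Section~\ref{sec:pbgd} that the spherical Gaussian parametrization of $\posterior_\wb$ makes the majority-vote prediction coincide with the linear classifier itself: $h_\wb(\xb) = B_{\posterior_\wb}(\xb)$ for every $\xb\in\X$, and therefore $\RPT(h_\wb) = \RPT(B_{\posterior_\wb})$. Then the general Bayes-to-Gibbs inequality $\RPT(B_{\posterior_\wb}) \le 2\,\RPT(G_{\posterior_\wb})$ (derived in Section~\ref{sec:pacbayes} from the observation that, whenever $B_\posterior$ errs on $\xb$, at least half of the $\posterior$-mass of voters also errs) allows me to multiply the displayed inequality above by $2$ and obtain
\begin{equation*}
\RPT(h_\wb) \ \leq\ 2\,\RPT(G_{\posterior_\wb}) \ \leq\ \dDT(\posterior_\wb) + 2\,\binf \cdot \ePS(\posterior_\wb) + 2\,\ets\,,
\end{equation*}
which is exactly the claimed bound.

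There is no real obstacle here: the only content is the limit $q\to\infty$ in Theorem~\ref{theo:new_bound_general} and the factor-of-two Bayes-vs-Gibbs inflation. The substantive work (the decomposition of $\dDT(\posterior_\wb)$ and $\ePS(\posterior_\wb)$ into the integrals of $\Phidis$ and $\Phierr$ against the margin $\tfrac{\wb\cdot\xb}{\|\xb\|}$) has already been carried out in Equations~\eqref{eq:RGQGQ_lin}--\eqref{eq:phierr}, so the corollary only needs to invoke them.
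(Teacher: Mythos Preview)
Your proposal is correct and follows exactly the approach indicated in the paper: the sentence preceding the two corollaries states that they are obtained by specializing Theorems~\ref{thm:pacbayesdabound} and~\ref{theo:new_bound_general} (with $q\to\infty$) to linear classifiers and recalling $\RPT(h_\wb) = \RPT(B_{\posterior_\wb}) \leq 2\,\RPT(G_{\posterior_\wb})$. You have simply unpacked this sentence into its constituent steps.
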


For fixed values of $\binf$ and $\ets$, the target risk $\RPT(h_\wb)$ is upper-bounded by a \mbox{$\betainf$-weighted} sum of two losses.
 The  expected \mbox{$\Phierr$-loss} (\ie, the joint error) is computed on the (labeled) source domain; it aims to label the source examples correctly, but is more permissive on the required margin than the $\Phi$-loss (\ie, the Gibbs risk).
The expected \mbox{$\Phidis$-loss} (\ie, the disagreement) is computed on the target (unlabeled) domain; it promotes large \emph{unsigned} target margins.
Thus, if a target domain fulfills the \emph{cluster assumption}
(described in Section~\ref{sec:dalc_quantities}), 
  $\dDT(\posterior_\wb)$ will be low when the decision boundary crosses a low-density region between the homogeneous labeled clusters. 
Hence, Corollary~\ref{cor:new_bound_linear} reflects that some source errors may be allowed if, doing so, the separation of the target domain is improved.
Figure~\ref{fig:phida_a} leads to an insightful geometric interpretation of the two domain adaptation trade-off
promoted by Corollaries~\ref{cor:new_bound_pbda_linear} and \ref{cor:new_bound_linear}.

\subsection{Generalization Bounds and Learning Algorithms}

\subsubsection{First Domain Adaptation Learning Algorithm ({\small PBDA}).}

\noindent
Theorem~\ref{theo:pacbayesdabound_catoni_bis} specialized to linear classifiers gives the following.
\begin{corollary}
\label{cor:gen_bound linear}
For any domains $\PS$ and $\PT$ over $\X \times\Y$, any $\delta\in(0,1]$, any  $\omega > 0$ and $a>0$, with a probability at least $1{-}\delta$ over the choices of $S\sim(\PS)^m$ and $ T\sim(\DT)^m$, we have, for all $\wb\in\Rbb$\,,
		\begin{align*}
		\RPT(h_\wb) & \leq\, 2 \omega' \RS(G_{\posterior_\wb}) {+}  a' \deswST {+} 2 \lambda_{\posterior_\wb} {+} 2 \left( \tfrac{\omega'}{\omega} {+} \tfrac{a'}{a} \right)  \frac{\|\wb\|^2{+}\ln\frac{3}{\delta}}{m}  +\! (a' {-}  1)\,,
		\end{align*}
where $\RS(G_{\posterior_\wb})$ and $\deswST$, are the empirical estimates of the target risk and the domain disagreement\,; $\lambda_{\rho_\wb}$ is obtained using Equation~\eqref{eq:lambda_rho}; $\omega'\eqdef\frac{\omega}{1 -e^{-\omega}}$, \, and \, $ a'\eqdef \frac{2a}{1 -e^{-2a}}$\,.
	\end{corollary}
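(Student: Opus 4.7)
The plan is to specialize Theorem~\ref{theo:pacbayesdabound_catoni_bis} to the isotropic Gaussian prior $\prior_{\zerobf}$ and posterior $\posterior_{\wb}$ over linear classifiers introduced in Section~\ref{sec:pbgd}. Two facts established there will be the workhorses: first, the Bayes classifier $B_{\posterior_{\wb}}$ coincides pointwise with the deterministic linear classifier $h_{\wb}$, so that $\RPT(h_{\wb}) = \RPT(B_{\posterior_{\wb}})$; second, the KL divergence admits the closed form $\KL(\posterior_{\wb}\,\|\,\prior_{\zerobf}) = \tfrac{1}{2}\|\wb\|^2$. I will also need the generic sandwich $\RPT(B_{\posterior}) \leq 2\,\RPT(G_{\posterior})$, which holds because any example misclassified by the majority vote must be misclassified by at least half the voters drawn from $\posterior$.

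From there, the derivation proceeds in three routine steps. First, I apply Theorem~\ref{theo:pacbayesdabound_catoni_bis} with $\posterior = \posterior_{\wb}$ and $\prior = \prior_{\zerobf}$: the resulting inequality, holding simultaneously for every $\wb\in\R^d$ with probability at least $1{-}\delta$ over the draw of $(S,T)$, upper-bounds $\RPT(G_{\posterior_{\wb}})$ by a combination of $\RS(G_{\posterior_{\wb}})$, $\deswST$, $\lambda_{\posterior_{\wb}}$, the constant $\tfrac{1}{2}(a'{-}1)$, and a complexity term involving $\KL(\posterior_{\wb}\,\|\,\prior_{\zerobf})$. Second, I multiply both sides of this bound by $2$ and use $\RPT(h_{\wb}) \leq 2\,\RPT(G_{\posterior_{\wb}})$ on the left. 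The factor $2$ distributes through the right-hand side, absorbing the leading $\tfrac{1}{2}$ in front of $\deswST$ to produce the coefficient $a'$, doubling $\omega'$ and $\lambda_{\posterior_{\wb}}$, and turning $\tfrac{1}{2}(a'{-}1)$ into $(a'{-}1)$, exactly as in the statement.

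Third and last, I substitute the closed form $\KL(\posterior_{\wb}\,\|\,\prior_{\zerobf}) = \tfrac{1}{2}\|\wb\|^2$ into the complexity term; after doubling, this becomes $2\bigl(\tfrac{\omega'}{\omega} + \tfrac{a'}{a}\bigr)\,\tfrac{\tfrac{1}{2}\|\wb\|^2 + \ln\frac{3}{\delta}}{m}$, which I further upper-bound via the trivial slack $\tfrac{1}{2}\|\wb\|^2 \leq \|\wb\|^2$ by the expression appearing in the corollary. Collecting the pieces gives the claim. Honestly, no step is a real obstacle: the content is a pure specialization of the previous general theorem, and the only care needed is bookkeeping of the numerical constants introduced by the $\times 2$ conversion from the Gibbs risk to the majority-vote risk, and the harmless slack absorbing the $\tfrac12$ factor in front of $\|\wb\|^2$ so that the final form matches the one written in the corollary.
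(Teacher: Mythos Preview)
Your proposal is correct and follows exactly the approach implicit in the paper, which does not spell out a separate proof for this corollary: it is a direct specialization of Theorem~\ref{theo:pacbayesdabound_catoni_bis} to the Gaussian prior/posterior pair, using $\RPT(h_\wb)=\RPT(B_{\posterior_\wb})\le 2\,\RPT(G_{\posterior_\wb})$ and $\KL(\posterior_\wb\|\prior_{\zerobf})=\tfrac12\|\wb\|^2$. Your observation that a harmless slack $\tfrac12\|\wb\|^2\le\|\wb\|^2$ is needed to match the exact constants written in the corollary is accurate and is the only bookkeeping subtlety.
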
 

Given a source sample $S = \{(\xb^s_i, y^s_i)\}_{i=1}^m$ and a target sample $T = \{(\xb^t_i)\}_{i=1}^m$, we focus on the minimization of the bound given by Corollary~\ref{cor:gen_bound linear}.
We work under the assumption that the term $\lambda_{\posterior_\wb}$ of the bound is negligible.
Thus, the posterior distribution $\posterior_\wb$ that minimizes the bound on $\RPT(h_\wb)$ is the same that minimizes
\begin{align} \label{eq:probpbda2}
&\Omega\,m\, \RS(G_{\posterior_\wb})  +  A \,m\, \deswST +  \KL(\posterior_\wb \| \prior_\mathbf{0})\\
&\quad=\, \Omega \sum_{i=1}^m  \Phirisk\!\left(  y^s_i \frac{\wb \cdot \xb^s_i}{\|\xb^s_i\|}  \right)   +  
A  \left| \sum_{i=1}^m  \left[ \Phidis\! \left(  \frac{\wb \cdot \xb^s_i}{\|\xb^s_i\|}  \right)    -  \Phidis \!\left(  \frac{\wb \cdot \xb^t_i}{\|\xb^t_i\|}  \right) \right] \right|
  +   \frac{1}{2}\|\wb\|^2\,. \nonumber
\end{align}
The values $\Omega>0$ and $A>0$ are hyperparameters of the algorithm.
Note that the constants $\omega$ and $a$ of Theorem~\ref{theo:pacbayesdabound_catoni_bis} can be recovered from any $\Omega$ and $A$.

Equation~\eqref{eq:probpbda2} is difficult to minimize by gradient descent, as it contains an absolute value and it is highly non-convex.
To make the optimization problem more tractable, we replace the loss function $\Phirisk$ by its convex relaxation $\Phic$ (as in Section~\ref{section:pbgd3_convex}).
Even if this optimization task is still not convex ($\Phidis$ is quasiconcave), our empirical study shows no need to perform many restarts while performing gradient descent to find a suitable solution.\footnote{We observe empirically that a good strategy is to first find the vector $\wb$ minimizing the convex problem of \PBGD described in Section~\ref{section:pbgd3_convex}, and then use this $\wb$ as a starting point for the gradient descent of \PBDA.}  
We name this domain adaptation algorithm \PBDA.

To sum up, given a source sample $S = \{(\xb^s_i, y^s_i)\}_{i=1}^m$, a target sample \mbox{$T = \{(\xb^t_i)\}_{i=1}^m$}, and hyperparameters $\Omega$ and $A$, the algorithm \PBDA performs gradient descent to minimize the following objective function:
\begin{equation} \label{eq:probpbda3}
G(\wb) = \Omega \sum_{i=1}^m  \Phic \left(  y^s_i \tfrac{\wb \cdot \xb^s_i}{\|\xb^s_i\|}  \right)   +  
A  \left| \sum_{i=1}^m \left[ \Phidis \left(  \tfrac{\wb \cdot \xb^s_i}{\|\xb^s_i\|}  \right)    -  \Phidis \left(  \tfrac{\wb \cdot \xb^t_i}{\|\xb^t_i\|}  \right)  \right]  \right|
  +   \tfrac{1}{2}\|\wb\|^2\,,
\end{equation}
where
$\Phic(x)  \eqdef  \max \big\{\Phirisk(x),\, \frac{1}{2} {-} \frac{x}{\sqrt{2\pii}} \big\}$
and
$\Phidis(x)\ \eqdef \ 2\,\Phirisk(x)\,\Phirisk(-x)$
have been defined by Equations~\eqref{eq:Phic} and~\eqref{eq:phidis}.
Figure~\ref{fig:phida_a} illustrates these three functions.

The gradient $\nabla G (\wb)$ of the Equation~\eqref{eq:probpbda3} is then given by
\begin{align*}
\nabla G (\wb)\, =\,&\Omega \sum_{i=1}^m 
\Phic' \LP\frac{y^s_i\wb\cdot\xb^s_i}{\|\xb^s_i\|} \RPP  \frac{y^s_i\xb^s_i}{\|\xb^s_i\|}\\
&+ s{\times}A\left(\sum_{i=1}^m \left[\Phidis' \LP\frac{\wb\cdot\xb^t_i}{\|\xb^t_i\|} \RPP  \frac{\xb^t_i}{\|\xb^t_i\|} -
\Phidis' \LP\frac{\wb\cdot\xb^s_i}{\|\xb^s_i\|} \RPP  \frac{\xb^s_i}{\|\xb^s_i\|} \right]\right) +\wb\,,
\end{align*}
where $\Phic'(x)$ and $\Phidis'(x)$ are respectively the derivatives of functions $\Phic$ and $\Phidis$ evaluated at point $x$, and 
$$s = \sgn \left(\ \displaystyle\sum_{i=1}^m  \left[ \Phidis \left(  \frac{\wb \cdot \xb^s_i}{\|\xb^s_i\|}  \right)    -  \Phidis \left(   \displaystyle\frac{\wb \cdot \xb^t_i}{\|\xb^t_i\|}  \right)  \right] \right). $$
We extend these equations to kernels in Section~\ref{section:kernelDA} below.

\subsubsection{Second Domain Adaptation Learning Algorithm ({\small DALC}).}
Now, Theorem~\ref{theo:catoni_new_general} specialized to linear classifiers gives the following.
	\begin{corollary}
		\label{cor:gen_bound linear2}
		For any domains $\PS$ and $\PT$ over $\X{\times}\Y$, any $\delta{\in}(0,1]$, any $b {>} 0$ and $c{>}0$, with a probability at least $1{-}\delta$ over the choices of $S{\sim}(\PS)^\ms$ and $ T {\sim} (\DT)^\mt$, we have, for all $\wb\in\Rbb$,
		\begin{align*}
		\RPT(h_\wb) & \leq\, c'\,\dT(\posterior_\wb)  +2\,b'\,\eS(\posterior_\wb)  + 2\,\ets  + 2\left(\tfrac{c'}{\mt\times c} + \tfrac{b'}{\ms\times b} \right)   \Big(\|\wb\|^2 + \ln \tfrac{2}{\delta}\Big)\,,
                \end{align*}
where $\dT(\Q_\wb)$ and $\eS(\posterior_\wb) $ are the empirical estimations of the target voters' disagreement and the source joint error, $b'=\frac{b}{1-e^{-b}}\,\binf$, and $c'=\frac{c}{1-e^{-c}}$.
	\end{corollary}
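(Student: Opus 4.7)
The corollary is a direct specialization of Theorem~\ref{theo:catoni_new_general} to Gaussian posteriors over linear classifiers, combined with the majority-vote/Gibbs relationship for linear classifiers. My plan has three short steps.

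First, I instantiate Theorem~\ref{theo:catoni_new_general} with the prior $\prior_{\mathbf{0}} = \Ncal(\zerobf,\Ibf)$ and the posterior $\posterior_{\wb} = \Ncal(\wbf,\Ibf)$. Since the PAC-Bayesian statement is uniform in $\posterior$, it holds simultaneously for every $\wb \in \Rbb$. This yields, with probability at least $1{-}\delta$ over $S\sim(\PS)^{\ms}$ and $T\sim(\DT)^{\mt}$,
\begin{align*}
\RPT(G_{\posterior_\wb}) \,\leq\, c'\,\tfrac{1}{2}\,\dT(\posterior_\wb) + b'\,\eS(\posterior_\wb) + \ets + \left(\tfrac{c'}{\mt\, c} + \tfrac{b'}{\ms\, b}\right)\Big(2\,\KL(\posterior_\wb\|\prior_{\mathbf{0}}) + \ln\tfrac{2}{\delta}\Big),
\end{align*}
where $c'=\tfrac{c}{1-e^{-c}}$ and $b'=\tfrac{b}{1-e^{-b}}\,\binf$, exactly as defined in the statement.

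Second, I use the two facts already recorded in Section~\ref{sec:pbgd}: the Gaussian posterior $\posterior_\wb$ makes the \mbox{$\posterior_\wb$-weighted} majority vote $B_{\posterior_\wb}$ coincide with the deterministic linear classifier $h_\wb$, and the KL-divergence evaluates to $\KL(\posterior_\wb\|\prior_{\mathbf{0}})=\tfrac{1}{2}\|\wb\|^2$. Combining $\RPT(h_\wb) = \RPT(B_{\posterior_\wb}) \leq 2\,\RPT(G_{\posterior_\wb})$ with the bound above and multiplying through by $2$ gives
\begin{align*}
\RPT(h_\wb) \,\leq\, c'\,\dT(\posterior_\wb) + 2\,b'\,\eS(\posterior_\wb) + 2\,\ets + 2\left(\tfrac{c'}{\mt\, c}+\tfrac{b'}{\ms\, b}\right)\Big(\|\wb\|^2 + \ln\tfrac{2}{\delta}\Big),
\end{align*}
which is exactly the stated inequality. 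Note that $\dT(\posterior_\wb)$ and $\eS(\posterior_\wb)$ admit the explicit closed forms given by Equations~\eqref{eq:RGQGQ_lin} and~\eqref{eq:eP_lin}, so the bound is fully computable from samples once the hyperparameters $b,c$ and the constant $\binf$ are fixed.

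There is no real technical obstacle here: the proof is bookkeeping. The only points that require minor care are (i) tracking the factor $2$ coming from $B_\posterior \to G_\posterior$, which absorbs the $\tfrac{1}{2}$ on the empirical disagreement and doubles the other terms, and (ii) remembering that $\binf$ is already baked into $b'$ in Theorem~\ref{theo:catoni_new_general}, so no additional multiplicative constant appears in front of the source joint error beyond $2b'$. No new probabilistic union bound is needed since Theorem~\ref{theo:catoni_new_general} already handles the joint probability over $S$ and $T$.
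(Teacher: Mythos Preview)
Your proposal is correct and follows exactly the approach the paper intends: specialize Theorem~\ref{theo:catoni_new_general} to the Gaussian prior/posterior pair, use $\KL(\posterior_\wb\|\prior_{\mathbf 0})=\tfrac12\|\wb\|^2$, and multiply through by $2$ via $\RPT(h_\wb)=\RPT(B_{\posterior_\wb})\leq 2\,\RPT(G_{\posterior_\wb})$. The paper merely states ``Theorem~\ref{theo:catoni_new_general} specialized to linear classifiers gives the following'' without spelling out the arithmetic, so your write-up is in fact more detailed than the original.
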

	For a source $S  {=} \{(\xbf^s_i, y^s_i)\}_{i=1}^{\ms}$ and a target $T  {=}  \{(\xbf^t_i)\}_{i=1}^{\mt}$ samples of potentially \emph{different size},  and some hyperparameters $B{>}0$, $C{>}0$, 	minimizing the next objective function {\it w.r.t} $\wb{\in}\Rbb$ is equivalent to minimize the above bound.
	\begin{align}
		\nonumber &C\,\dT(\posterior_\wb) + B\,\eS(\posterior_\wb) 	+  \|\wb\|^2 \\ 
		= \  &C \sum_{i=1}^{\mt} \Phidis \left(  \frac{\wb \cdot \xbf^t_i}{\|\xbf^t_i\|}  \right)  \!+ B\sum_{i=1}^\ms \Phierr \left(  y^s_i\, \frac{\wb \cdot \xbf^s_i}{\|\xbf^s_i\|}  \right) +\|\wb\|^2\,. \label{eq:ca}
	\end{align}
	We call the optimization of Equation~\eqref{eq:ca} by gradient descent the \algo algorithm, for Domain Adaptation of Linear Classifiers.
The gradient of Equation~\eqref{eq:ca} is 
	\begin{align*}
 C \sum_{i=1}^{\mt} \Phidis' \left(  \frac{\wb \cdot \xbf^t_i}{\|\xbf^t_i\|}  \right)    \frac{\xbf^t_i}{\|\xb^t_i\|}
	\!+ B\sum_{i=1}^\ms \Phierr' \left(  y^s_i\, \frac{\wb \cdot \xbf^s_i}{\|\xbf^s_i\|}  \right)   \frac{y^s_i\xb^s_i}{\|\xb^s_i\|}
	+\frac12 \,\wb\,.
	\end{align*}
Contrary to the algorithm \PBDA described above, our empirical study shows that there is no need to convexify any component of Equation~\eqref{eq:ca}: We obtain as good prediction accuracy when we initialize gradient descent to a uniform vector ($w_i=\frac{1}{d}$ for $i\in\{1,\ldots,d\}$) and when we perform multiple restarts from random initializations.
Even if the objective function is not convex, the gradient descent is easy to perform.
Indeed, $\Phidis$ is smooth and its derivative is continuous, in contrast with the absolute value of $\deswST$ in Equation~\eqref{eq:probpbda2} (see also the forthcoming toy experiment of Figure~\ref{fig:phida_d}). 
Thus, the actual optimization problem of \DALC is closer to the theoretical analysis than the \PBDA one.

\subsubsection{Using a Kernel Function}
\label{section:kernelDA}

Like the algorithm \PBGD in Subsection~\ref{section:pbgd_kernel}, the kernel trick applies to \PBDA and \algo.
Given a kernel $k\!:\!\R^d {\times} \R^d{\rightarrow}\R$, one can express a linear classifier in an \emph{RKHS} by a dual weight vector $\alphab\in \R^{\ms+\mt}$,
\begin{equation*}
h_\wb(\xb) \ =\ 
\sgn\left[
\sum_{i=1}^m \alpha_i k(\sbf^s_i, \xb) +  \sum_{i=1}^m \alpha_{i+m} k(\tbf^t_i, \xb)
\right].
\end{equation*}
Let
$S = \{(\xbf^s_i,y^s_i)\}_{i=1}^{\ms}$,\,\, $T = \{\xbf^t_i\}_{i=1}^{\mt}$\,and\, $\m=\ms+\mt$.
We denote $K$ the kernel matrix of size $\m\times\m$ such as $K_{i,j} \eqdef k(\xb_i, \xb_j)\,,$ where
$$\xb_\# \, =\, 
\begin{cases}
\xbf^s_i & \mbox{if } \# \leq \ms  \quad\mbox{ (source examples)} \\
\xbf^t_{\#-\ms} &\mbox{otherwise.} \quad\mbox{ (target examples)}
\end{cases} 
$$

On the one hand, in that case, with $\ms=\mt=m$, the objective function of \PBDA (Equation~\ref{eq:probpbda3}) is rewritten in terms of the vector $\ab = (\alpha_1,\alpha_2, \ldots\alpha_{2m})$ as
\begin{align*} 
G(\ab) &=  
 \Omega \sum_{i=1}^m  \Phic \left(  y_i \tfrac{\sum_{j=1}^{2m} \alpha_j K_{i,j}}{ \sqrt{K_{i,i}} }  \right) \\ 
 &{}+ 
A  \left| \sum_{i=1}^m  \left[\Phidis\! \left(  \tfrac{\sum_{j=1}^{2m} \alpha_j K_{i,j}}{ \sqrt{K_{i,i}} }  \right) \! -  \Phidis\! \left(  \tfrac{\sum_{j=1}^{2m} \alpha_j K_{i+m,j}}{ \sqrt{K_{i+m,i+m}} }  \right) \right] \right| 
+ \tfrac{1}{2} \sum_{i=1}^{2m} \sum_{j=1}^{2m} \alpha_i \alpha_j K_{i,j} \,.
\end{align*}

On the other hand,  the objective function of \DALC (Equation~\ref{eq:ca}) can be rewritten in terms of the vector 
$\alphab = (\alpha_1,\alpha_2, \ldots\alpha_{\m})$
 as
\begin{align*} 
C  \, \sum_{\mathclap{i=\ms+1}}^\m 
\Phidis\left(  \tfrac{\sum_{j=1}^{\m} \alpha_j K_{i,j}}{ \sqrt{K_{i,i}} }  \right)  
+B \sum_{i=1}^\ms 
 \Phierr\left(  y_i \tfrac{\sum_{j=1}^{\m} \alpha_j K_{i,j}}{ \sqrt{K_{i,i}} }  \right)
+\sum_{i=1}^{\m} \sum_{j=1}^{\m} \alpha_i \alpha_j K_{i,j} \,.
\end{align*}
To perform the gradient descent in terms of dual weights $\alphab$, we start the gradient descent from the point $\alpha_i = \frac{y_i}{\m}$ for $i\in\{1,\ldots,\ms\}$, and $\alpha_i = \frac1\m$ for $i\in\{\ms+1,\ldots,\m\}$.

\subsection{Illustration on a Toy Dataset}
\label{section:dalc_illustration}

\begin{figure}[h!]
\subfloat[Loss functions given by the specialization to linear classifiers.]{ \label{fig:phida_a} 
 \raisebox{-.5\height}{\includegraphics[width=0.4\textwidth]{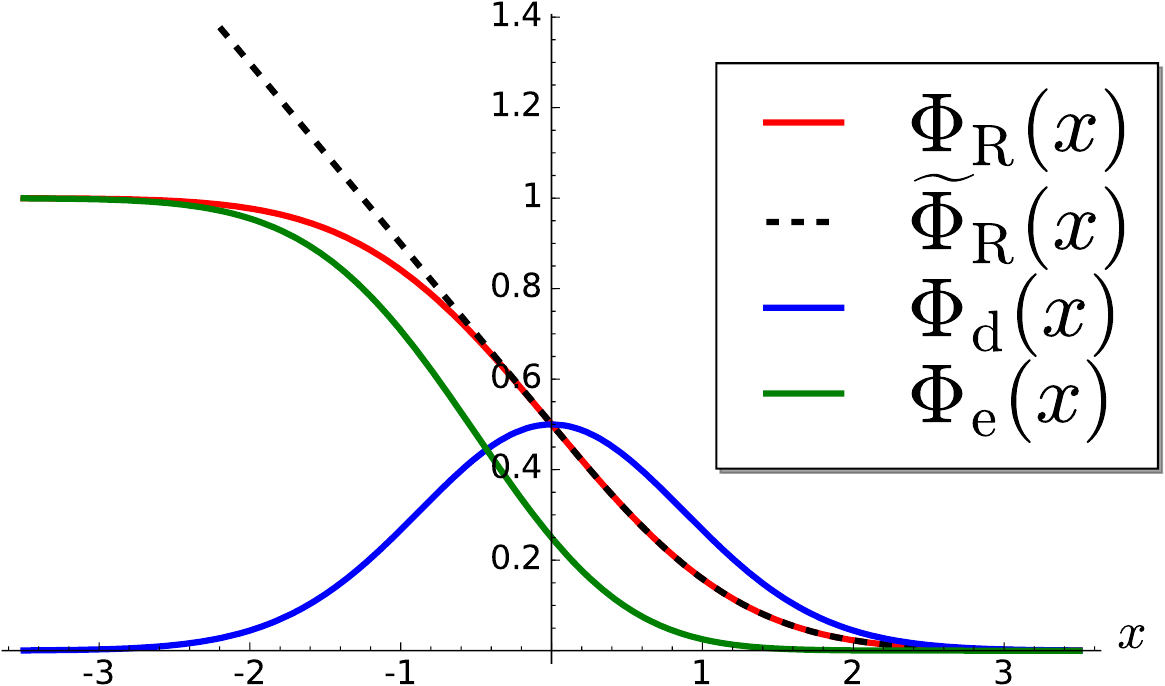}}
}
\subfloat[Loss functions definitions and their derivatives.]{ \label{fig:phida_b} \footnotesize
	\def\arraystretch{1.9}
\begin{tabular}{|l@{\,}|l@{\,}|}
\hline	
\hfill\sc function & \hfill\sc derivative \\\hline
 $\Phirisk(x)=
 \frac{1}{2}  \big[1 - \Erf\big(\frac{x}{\sqrt{2}}\big) \big]$ 
 & 
  $\Phirisk'(x) = {-}\frac{1}{\sqrt{2\pii}} e^{{-}\frac{1}{2} x^2}$ 
   \\\hline
   \begin{minipage}{3cm}$\Phic(x) =$\\
   \phantom.~~~$ \max \big\{\Phirisk(x), \frac{1}{2} {-} \frac{x}{\sqrt{2\pii}} \big\}$
   \end{minipage}
  &
  $ \Phirisk'\big(\max\{0,x\}\,\big)$
\\\hline
$\Phidis(x) = 2\,\Phirisk(x)\,\Phirisk(-x)$
&
  ${-}\sqrt{\frac{2}{\pii}} \Erf\big(\frac{x}{\sqrt{2}}\big)  e^{{-}\frac{1}{2} x^2}$ 
\\\hline
  $\Phierr(x)= \big[\Phirisk(x)\big]^2$
  &
 $2\,\Phirisk(x)\,\Phirisk'(x)$
 \\\hline
\end{tabular}
}\\
\subfloat[Toy dataset, and the decision boundary for $\theta=\frac{\pi}{4}$
(matching the vertical line of Fig.(d)). Source points are red and green, target points are black.]{ \label{fig:phida_c} 

\raisebox{-.5\height}{\includegraphics[width=0.4\textwidth]{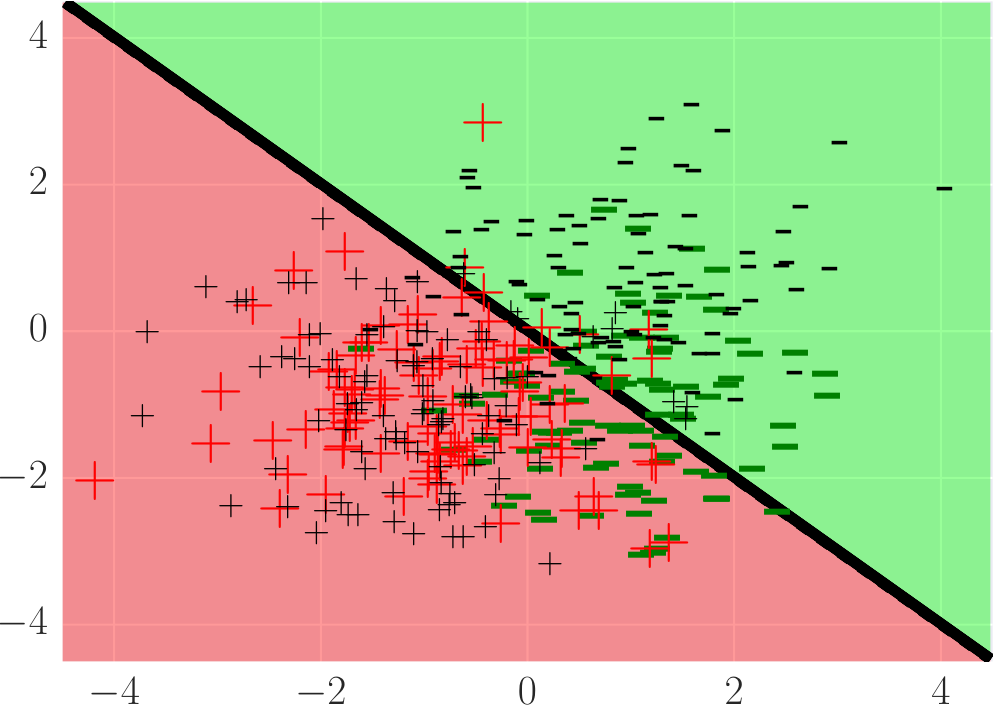}}
}\quad
\subfloat[\PBDA and \DALC loss values, according to $\theta$. \PBDA performs a trade-off between $\deswST$ and $\RS(G_{\posterior_\wb})$ (or its convex surrogate, see the red dashed line); \DALC's  trade-off is between $\dT(\Q_\wb)$ and $\eS(\posterior_\wb)$.]{ \label{fig:phida_d} 
\raisebox{-.5\height}{\includegraphics[width=0.55\textwidth]{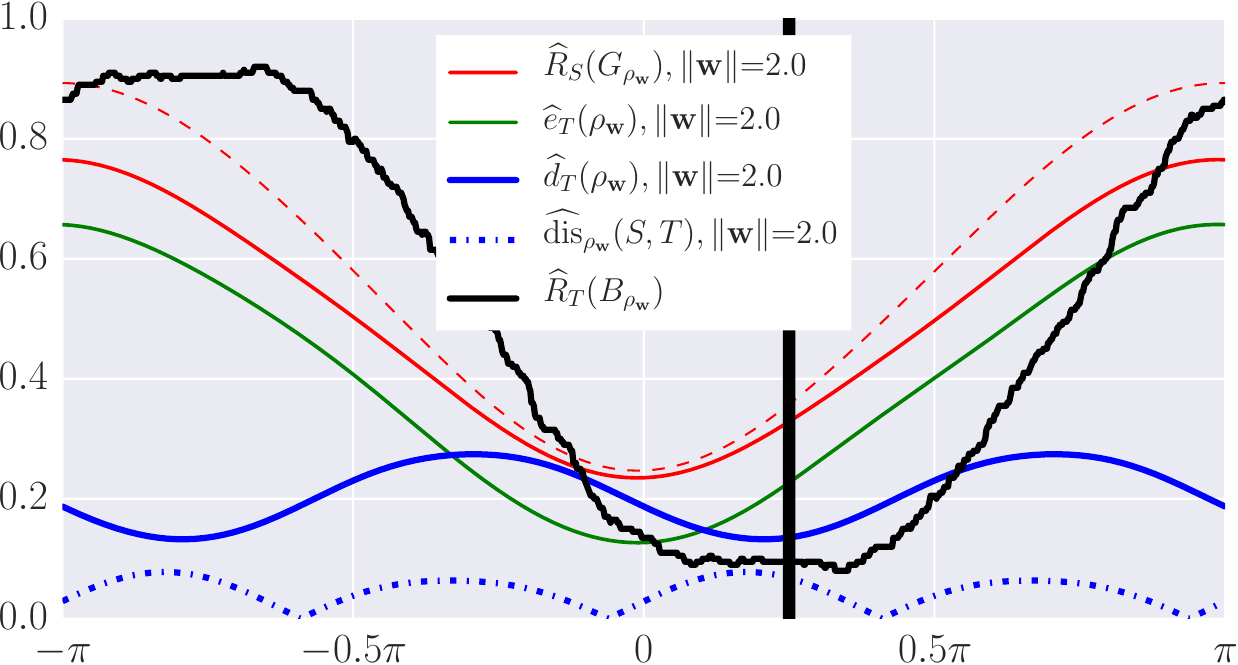}}
}
	\caption{Understanding \PBDA and \DALC domain adaptation learning algorithms in terms of loss functions. Upper Figures (a-b) show the loss functions, and lower Figures (c-d) illustrate the behavior on a toy dataset. }
\label{fig:phida} 
\end{figure}

To illustrate and compare the trade-offs of both algorithms \PBDA and \DALC, we extend the toy experiment of Subsection~\ref{section:pbgd_illustration} (see Figure~\ref{fig:phiclassic}). 
To obtain the two-dimensional dataset illustrated by
Figure~\ref{fig:phida_c}, we use, as the source sample, the $200$
examples of the supervised experiment---generated by Gaussians of mean
$(-1,-1)$ for the positives and $(-1,1)$ for the negatives (see Figure~\ref{fig:phiclassic_c}). 
Then, we generate $100$ positive target examples according to a Gaussian of mean $(-1,-1)$ and $100$ negative target examples according to a Gaussian of mean $(1,1)$.
All Gaussian distributions have unit variance. 
Note that positive source and target examples are generated by the same distribution. 

 We study linear classifiers $h_\wb$, with $\wb=2\,(\cos\theta, \sin\theta)\in\Rbb^2$.
That is, we fix the norm value $\|\wb\|=2$. 
Figure~\ref{fig:phida_d} shows the quantities varying in our two domain adaptation approaches while rotating the decision boundary $\theta\in[-\pi,\pi]$ around the origin. 
On the one hand, \PBDA algorithm minimizes a trade-off between the domain disagreement $\deswST$ and the source Gibbs risk $\RS(G_{\posterior_\wb})$ convex surrogate given by Equation~\eqref{eq:Phic}. 
On the other hand, \DALC minimizes a trade-off between the target disagreement $\dT(\Q_\wb)$ and source joint error $\eS(\posterior_\wb) $. 
From both Figures~\ref{fig:phida_a} and~\ref{fig:phida_d}, we see that the Gibbs risk, its convex surrogate, and the joint error behave similarly; they are following the linear classifier accuracy on the source sample.
However, the domains' divergence and the target joint error values notably differ for the experiment of Figure~\ref{fig:phida_d}\,:  When the target accuracy is optimal (\ie, $\theta\approx\frac{\pi}{4}$) the target disagreement is close to its lowest value, while it is the opposite for the domain divergence. 
Thus, provided that the hyperparameters handling the trade-off between $\dT(\Q_\wb)$ and $\eS(\posterior_\wb) $ are well chosen, the \DALC minimization procedure is able to find a solution \emph{close to} the one minimizing the target risk.  
On the contrary, for all hyperparameters, \PBDA will prefer the solution that minimizes the source risk ($\theta\approx0$), as it minimizes $\deswST$ and $\RS(G_{\posterior_\wb})$ simultaneously.

\section{Experiments}
\label{sec:expe}

Our domain adaptation algorithms \PBDA\footnote{\PBDA's code is available here: \url{https://github.com/pgermain/pbda}} and \DALC\footnote{\DALC's code is available here: \url{https://github.com/GRAAL-Research/domain_adaptation_of_linear_classifiers}} have been evaluated on a toy problem and a sentiment dataset. In both cases,
we minimize the objective function using a \emph{Broyden-Fletcher-Goldfarb-Shanno method (BFGS)} implemented in the \emph{scipy} python library.

\subsection{Toy Problem: Two Inter-Twinning Moons}

Figure~\ref{fig:moons} illustrates the behavior of the decision boundary of our algorithms \PBDA and \algo on an intertwining moons toy problem,\footnote{We generate each pair of moons with the {\tt make\_moons} function provided in {\tt scikit-learn}~\citep{scikit-learn}.} where each moon corresponds to a label. The target domain, for which we have no label, is a rotation of the source one. The figure shows clearly that \PBDA and \algo succeed to adapt to the target domain, even for a rotation angle of $50\degree$.
We see that our algorithms do not rely on the restrictive \emph{covariate shift} assumption, as some source examples are misclassified.
This behavior illustrates the \PBDA and \algo trade-off in action, that concede some errors on the source sample to lower the disagreement on the target sample.

\subsection{Sentiment Analysis Dataset}
\label{sec:sentiments}
We consider the popular {\it Amazon reviews} dataset \citep{BlitzerMP06} composed of reviews of four types of {\it Amazon.com}$^{\copyright}$~products (books, DVDs, electronics, kitchen appliances). Originally, the reviews are encoded in a bag-of-words representation (unigrams and bigrams) of approximately $100, 000$ features, and the labels are user rating between one and five stars. 
For sake of simplicity, we adopt the pre-processing proposed by \citet{ChenWB11}.
Hence, we tackle a binary classification task: a review is labeled $+1$ for a rank higher than $3$ stars, and $-1$ for a rank lower or equal to $3$ stars.
Also, the feature space dimensionality is reduced as follows: \citet{ChenWB11} only kept the features that appear at least ten times in a particular domain adaptation task (about $40, 000$ features remain), and pre-processed the data with a standard tf-idf re-weighting.
Considering each type of product as a domain, we perform twelve domain adaptation tasks. For instance, ``books$\rightarrow$DVDs'' corresponds to the task for which books is the source domain and DVDs the target one.
The learning algorithms are trained with $2, 000$ labeled source examples and $2, 000$ unlabeled target examples, and we evaluate them on the same separate target test sets proposed by \citet{ChenWB11} (between $3, 000$ and $6, 000$ examples).

\subsubsection{Experiment Details}

\PBDA and \DALC with a linear kernel have been compared with:  
\begin{itemize}
	\item \SVM learned only from the source domain without adaptation.
	We made use of the SVM-light library~\citep{Joachims99}.
	\item \PBGD, presented in Section~\ref{sec:pbgd}, and learned only from the source domain \mbox{without} adaptation.
	\item \DASVM of \citet{BruzzoneM10S}, an iterative domain adaptation algorithm which aims to maximize iteratively a notion of margin on self-labeled target examples. 
	We implemented DASVM with the LibSVM library \citep{libsvm}.
	\item \CODA of \citet{ChenWB11}, a co-training domain adaptation algorithm, which looks iteratively for target features related to the training set. We used the implementation provided by the authors. 
	Note that \citet{ChenWB11} have shown best results on the dataset considered in Section~\ref{sec:sentiments}.
\end{itemize}
Each parameter is selected with a grid search via a  classical \mbox{$5$-folds} cross-validation (${}^{CV}$) on the source sample for \PBGD and \SVM, and via a \mbox{$5$-folds} reverse/circular validation (${}^{RCV}$) on the source and the (unlabeled) target samples for \CODA, \DASVM, \PBDA, and \DALC.
We describe this latter method in the following subsection.
For \PBDA, respectively \DALC, we search on a \mbox{$20\times 20$} parameter grid for a $\Omega$, respectively $C$, between $0.01$ and $10^6$ and a parameter~$A$, respectively $B$, between $1.0$ and $10^8$, both on a logarithm scale.

Note that we did not compare the performance of our algorithms to state-osuf-the art (deep) representation learning techniques \citep{Chen12,ganin-16,DingF18,ShuBNE18,LiLHZS19,SebagHSSWA19}, but to methods that learn a predictor directly on the original input space. Both strategies are not to be opposed, as one can learn a common representation space for the source and the target domains, and afterwards learn a predictor that optimizes an adaptation criteria within that new space.

\subsubsection{A Note about the Reverse Validation}

\begin{figure}[t]
	\centering \includegraphics[width=0.75\textwidth]{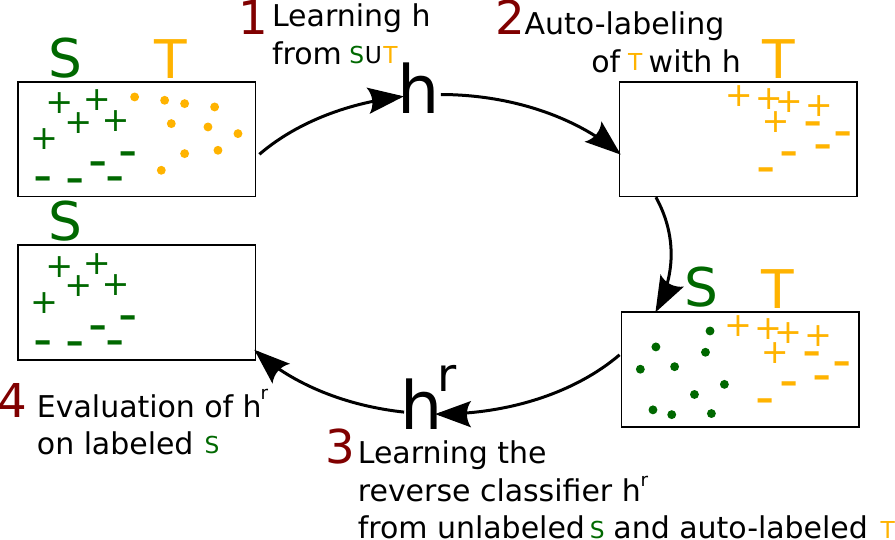}
	\caption{\label{fig:rev}The principle of the reverse/circular validation in our setting. 
	} 
\end{figure}

A crucial question in domain adaptation is the validation of the hyperparameters.
One solution is to follow the principle proposed by \citet{Zhong-ECML10} which relies on the use of a reverse validation approach.
This approach is based on a so-called reverse classifier evaluated on the source domain.
We propose to follow it for tuning the parameters of \DALC, \PBDA, \DASVM and \CODA.
Note that \citet{BruzzoneM10S} have proposed a similar method, called circular validation, in the context of \DASVM.

Concretely, in the current setting, given \mbox{$k$-folds} on the source labeled sample, $S=S_1\cup\ldots\cup S_k$, \mbox{$k$-folds} on the unlabeled target sample, $T=T_1\cup\ldots\cup T_k$) and a learning algorithm (parametrized by a fixed tuple of hyperparameters),  the reverse cross-validation risk on the $i^{\rm th}$ fold is computed as follows.
Firstly, the source set $S\!\setminus\! S_i$ is used as a labeled sample and the target set  $T\!\setminus\! T_i$ is used as an unlabeled sample for learning a classifier $h'$.
Secondly, using the same algorithm, a reverse classifier $h'^r$ is learned using the \emph{self-labeled} sample $\{(\xbf,  h'(\xbf))\}_{\xbf\in T\!\setminus\! T_i}$ as the source set and the unlabeled part of $S\!\setminus\! S_i$ as target sample.
Finally, the reverse classifier $h'^r$ is evaluated on $S_i$. 
We summarize this principle on Figure~\ref{fig:rev}. The process is repeated $k$ times to obtain the reverse cross-validation risk averaged across all folds.

\subsubsection{Empirical Results}
Table \ref{tab:res_sentiments} contains the test accuracies on the sentiment analysis dataset.
We make the following observations.
Above all,  the domain adaptation approaches provide the best average results, implying that tackling this problem with a domain adaptation method is reasonable. Then, our method \algo~based on the novel domain adaptation analysis is the best algorithm overall on this task. 
	Except for the two adaptive tasks between ``electronics'' and ``DVDs'', \algo~is either the best one (five times), or the second one (five times). 
	Moreover, according to a Wilcoxon signed rank test with a $5\%$ significance level, we obtain a probability of $89.5\%$ that \DALC is better than \PBDA. This test tends to confirm that the analysis with the new perspective improves the analysis based on a domains' divergence point of view. 
Moreover, \PBDA is on average better than \CODA, but less accurate than \DASVM. However, \PBDA is competitive: the results are not significantly different from \CODA and \DASVM. 
It is important to notice that \DALC and \PBDA are significantly faster than \CODA and \DASVM: These two algorithms are based on costly iterative procedures increasing the running time by at least a factor of five in comparison of \DALC and \PBDA. 
In fact, the clear advantage of the PAC-Bayesian approach is that we jointly optimize the terms of our bounds in one step.

\begin{figure}[t]
\centering
\includegraphics[width=0.3\textwidth,trim=20mm 10mm 13mm 14mm]{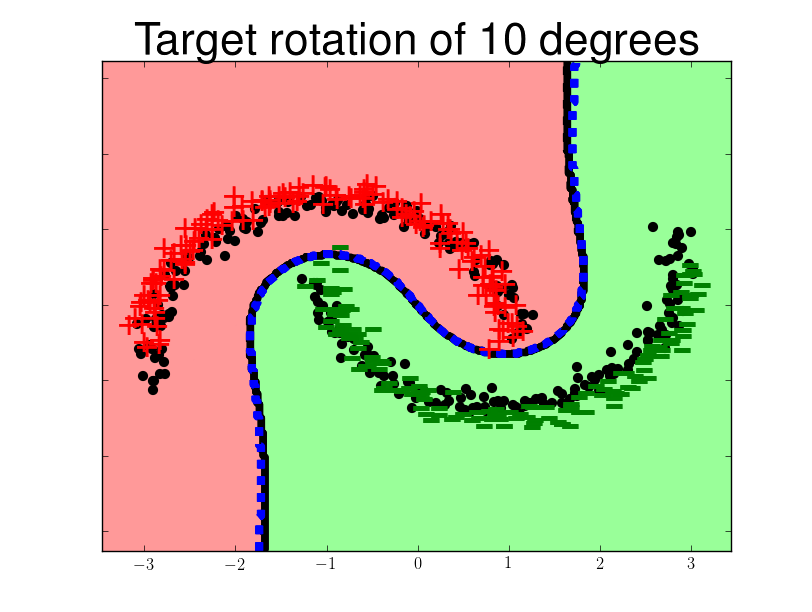}\hfill
\includegraphics[width=0.3\textwidth,trim=17mm 10mm 17mm 14mm]{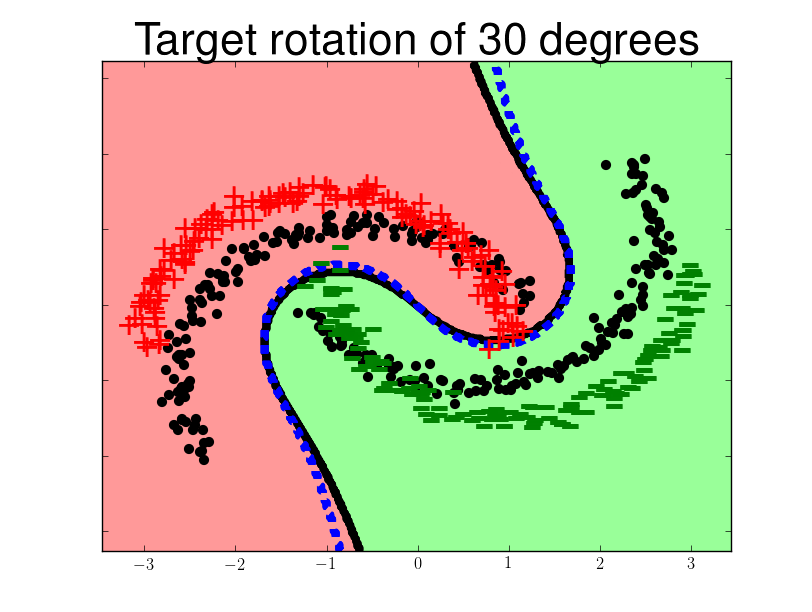}\hfill
\includegraphics[width=0.3\textwidth,trim=17mm 10mm 17mm 14mm]{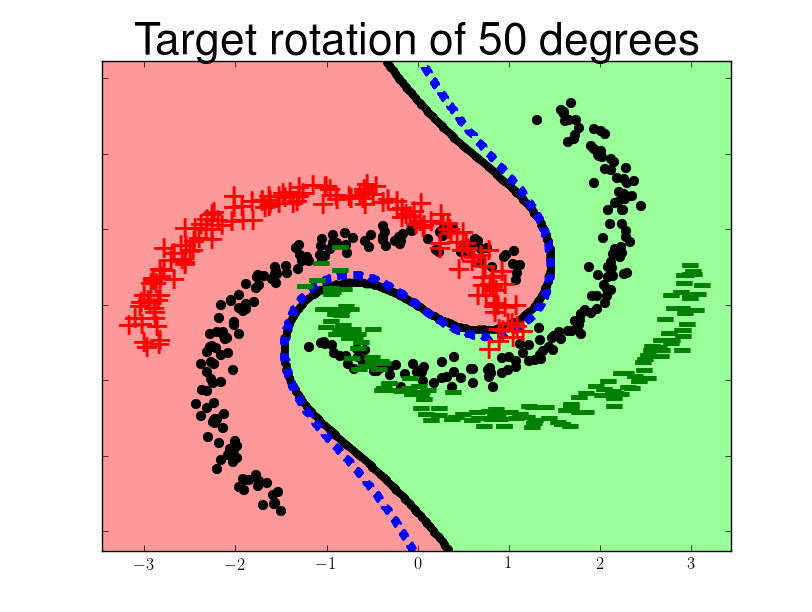}
\caption{Decision boundaries of \PBDA~(in blue dashed) and \algo~(in black) on the \emph{intertwining moons} toy problem, for fixed parameters $\alpha=A=1$ and $B=C=1$, and an RBF kernel $k(\xb,\xb')=\exp({- \|\xb-\xb'\|^2})$. The target points are black. The positive, respectively negative, source points are red, respectively green.
  \label{fig:moons}}
\end{figure}

\begin{table}[t]
\centering\small
\caption{Error rates for the sentiment analysis dataset. B, D, E, K respectively denotes  Books, DVDs,  Electronics,  Kitchen. In {\bf bold} are highlighted the best results, in {\it italic} the second ones. 
\label{tab:res_sentiments}}
\rowcolors{2}{}{black!10}
\begin{tabular}{|c||cc|cc|cc|}
\toprule
$ $ & \PBGD$^{\!CV}\!\!\! $ & \SVM$^{\!CV}\! $ & \DASVM$^{\!RCV}\!\!\! $ & \CODA$^{\!RCV}\! $ & \PBDA$^{\!RCV}\!\!\! $ & \DALC$^{\!RCV}\!$ \\ 
\midrule
B$\rightarrow$D & $ {\bf 0.174}$ & $0.179$ & $0.193$ & $0.181$ & $0.183$ &  ${\it 0.178}$ \\ 
B$\rightarrow$E & $0.275$ & $0.290$ & ${\it 0.226}$ & $0.232$ & $0.263$  & ${\bf 0.212}$\\ 
B$\rightarrow$K & $0.236$ & $0.251$ & ${\bf 0.179}$ & ${ 0.215}$ & $0.229$ & ${\it 0.194}$\\  
D$\rightarrow$B & ${\it 0.192}$ & $0.203$ & $0.202$ & $0.217$ & $0.197$ & ${\bf 0.186}$ \\ 
D$\rightarrow$E & $0.256$ & $0.269$ & ${\bf 0.186}$ & ${\it 0.214}$ & $0.241$ & $0.245$\\ 
D$\rightarrow$K & $0.211$ & $0.232$ & $0.183$ & ${\it 0.181}$ & $0.186$ & ${\bf 0.175}$\\ 
E$\rightarrow$B & $0.268$ & $0.287$ & $0.305$ & $0.275$ & ${\bf 0.232}$ & ${\it 0.240}$\\ 
E$\rightarrow$D & $0.245$ & $0.267$ & ${\bf 0.214}$ & $0.239$ & ${\it 0.221}$ & $0.256$\\ 
E$\rightarrow$K & ${\it 0.127}$ & $0.129$ & $0.149$ & $0.134$ & $0.141$ & ${\bf 0.123}$\\
K$\rightarrow$B & $0.255$ & $0.267$ & $ 0.259$ & ${\it 0.247}$ & ${\it 0.247}$ & ${\bf 0.236}$ \\ 
K$\rightarrow$D & $0.244$ & $0.253$ & ${\bf 0.198}$ & $0.238$ & ${0.233}$ & ${\it 0.225}$ \\ 
K$\rightarrow$E & $0.235$ & $0.149$ & $0.157 $ & $0.153$ & ${\bf 0.129}$ & ${\it 0.131}$\\ 
\midrule
Average & $0.226 $ & $0.231 $ & ${\it 0.204} $ & $0.210 $ & ${0.208} $ & ${\bf 0.200}$\\ 
\bottomrule
\end{tabular}
\end{table}

\section{Conclusion and Future Work}
\label{sec:conclu}

In this paper, we present two domain adaptation analyses for the PAC-Bayesian framework that focuses on models that take the form of a majority vote over a set of classifiers: The first one is based on a common principle in domain adaptation, and the second one brings a novel perspective on domain adaptation.

To begin, we follow the underlying philosophy of the seminal works of \citet{BenDavid-NIPS06,BenDavid-MLJ2010} and \citet{Mansour-COLT09}; in other words, we derive an upper bound on the target risk (of the Gibbs classifier) thanks to a domains' divergence measure suitable for the PAC-Bayesian setting. 
We define this divergence as the average deviation between the disagreement over a set of classifiers on the source and target domains. 
This leads to a bound that takes the form of a trade-off between the source risk, the domains' divergence and a term that captures the ability to adapt for the current task.
Then, we propose another domain adaptation bound while taking advantage of the inherent behavior of the target risk in the PAC-Bayesian setting. 
We obtain a different upper bound that is expressed as a trade-off between the disagreement only on the target domain, the joint errors of the classifiers only on the source domain, and a term reflecting the worst-case error in regions where the source domain is non-informative.
To the best of our knowledge, a crucial novelty of this contribution is that the trade-off is controlled by a domains' divergence: Contrary to our first bound, the divergence is not an  additive term (as in many domain adaptation bounds) but is a factor weighing the importance of the source information. 

Our analyses, combined with PAC-Bayesian generalization bounds, lead to two new domain adaptation algorithms for linear classifiers: \PBDA associated with the previous works philosophy, and \DALC associated with the novel perspective.
The empirical experiments show that the two algorithms are competitive with other approaches, and that \DALC outperforms significantly \PBDA.
We believe that our PAC-Bayesian analyses open the door to develop new domain adaptation methods by making use of the possibilities offered by the PAC-Bayesian theory, and give rise to new interesting directions of research, among which the following ones.

Firstly, the PAC-Bayesian approach allows one to deal with an {\it a priori} belief about the classifiers accuracy; in this paper we opted for a non-informative prior that is a Gaussian centered at the origin of the linear classifier space. 
The question of finding a relevant prior in a domain adaptation situation is an exciting direction which could also be exploited when some few target labels are available.
Moreover, this notion of prior distribution could model information learned from previous tasks as pointed out by~\citet{pentina14}, or from other views/representations of the data~\cite{goyal2017pac}.
This suggests that we can extend our analyses to multisource domain adaptation \citep{Crammer2007learning,mansour2009domain,BenDavid-MLJ2010,HoffmanMohri2018} and lifelong learning where the objective is to perform well on future tasks, for which no data has been observed so far~\citep{ThrunM95}.

Another promising issue is to address the problem of the hyperparameter selection. 
Indeed, the adaptation capability of our algorithms \PBDA and \DALC could be even put further with a specific PAC-Bayesian validation procedure. 
An idea would be to propose a (reverse) validation technique that takes into account some particular prior distributions. 
Another solution could be to explicitly control the neglected terms in the domain adaptation bound.
This is also linked with model selection for domain adaptation tasks.

Besides, deriving a result similar to Equation~\eqref{eq:C-bound} (the $C$-bound) for domain adaptation could be of high interest. 
Indeed, such an approach considers the first two moments of the margin of the weighted majority vote. This could help to take into account both margin information over unlabeled data and the distribution disagreement (these two elements seem of crucial importance in domain adaptation).

Concerning \DALC, we would like to investigate the case where the domains' divergence can be estimated, {\it i.e.}, when the covariate shift assumption holds or when some target labels are available.
In this scenario, the domains' divergence might not be considered as a hyperparameter to tune.

The results obtained in this paper are dedicated to binary classification, one another interesting issue to tackle could be the case of multiclass or multilabel classification.

Last but not least, the non-estimable term of  \DALC---suggesting that the domains should live in the same regions---can be dealt with representation learning approach.
This could be an incentive to combine \DALC with existing representation learning techniques.

\section*{Acknowledgements}
This work was supported in part by the French projects LIVES {\scriptsize ANR-15-CE23-0026-03}, and in part by NSERC discovery grant {\scriptsize 262067}, and by the European Research Council under the European Unions Seventh Framework Programme (FP7/2007-2013)/ERC grant agreement no 308036.
 Computations were performed on Compute Canada and Calcul Qu\'ebec infrastructures (founded by CFI, NSERC and FRQ). We thank Christoph Lampert and Anastasia Pentina for helpful discussions. A part of the work of this paper was carried out while E. Morvant was affiliated with IST Austria, and while P. Germain was affiliated with D{\'e}partement d'informatique et de g\'enie logiciel, Universit{\'e} Laval, Qu{\'e}bec, Canada.

\appendix
\section{Some Tools} \label{section:tools}

\begin{lemma}[H\"older's Inequality]
\label{theo:holder}
Let $S$ be a measure space and let $(p, q)\in[1,\infty]^2$ with $\tfrac1p + \tfrac1q = 1$. Then, for all measurable real-valued functions $f$ and $g$ on $S$,
$$
 \|fg\|_{1}\leq \|f\|_{p}\|g\|_{q}.$$
\end{lemma}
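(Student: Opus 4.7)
The plan is to prove H\"older's inequality by the classical normalization-plus-Young's-inequality route, which is the shortest self-contained argument.

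First I would dispose of the trivial and boundary cases. If either $\|f\|_p = 0$ or $\|g\|_q = 0$, then $f$ or $g$ vanishes almost everywhere, so $fg = 0$ a.e.\ and the inequality holds with $0$ on both sides. If either norm is $+\infty$, the right-hand side is $+\infty$ and nothing needs to be shown. The endpoint cases $(p,q) = (1,\infty)$ or $(\infty,1)$ follow at once from $|f(x)g(x)| \le \|g\|_\infty |f(x)|$ a.e., integrated over $S$. So I may henceforth assume $1 < p, q < \infty$, $\tfrac1p+\tfrac1q = 1$, and $0 < \|f\|_p, \|g\|_q < \infty$.

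The key auxiliary step is Young's inequality in the form: for all $a, b \ge 0$ and conjugate exponents $p, q$,
\[
 ab \ \le \ \frac{a^{p}}{p} + \frac{b^{q}}{q}.
\]
I would derive this from the concavity of $\log$ (equivalently, the weighted AM--GM inequality): writing $ab = \exp\bigl(\tfrac1p \log a^p + \tfrac1q \log b^q\bigr)$ for $a,b>0$, concavity of $\log$ (or convexity of $\exp$ applied to the exponents) gives Young's inequality directly, and the $a=0$ or $b=0$ cases are immediate.

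Then comes the normalization trick. Set
\[
 F(x) \ = \ \frac{|f(x)|}{\|f\|_{p}}, \qquad G(x) \ = \ \frac{|g(x)|}{\|g\|_{q}},
\]
so that $\int F^p = \int G^q = 1$. Applying Young pointwise to $F(x), G(x)$ and integrating over $S$ yields
\[
 \int_S F(x)\,G(x)\,d\mu(x) \ \le \ \frac{1}{p}\int_S F(x)^p\,d\mu(x) + \frac{1}{q}\int_S G(x)^q\,d\mu(x) \ = \ \frac{1}{p} + \frac{1}{q} \ = \ 1.
\]
Multiplying through by $\|f\|_p \|g\|_q$ gives $\int_S |f(x)g(x)|\,d\mu(x) \le \|f\|_p\|g\|_q$, which is exactly $\|fg\|_1 \le \|f\|_p\|g\|_q$.

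There is no real obstacle here; this is a textbook argument. The only place one must be a bit careful is in handling the degenerate/boundary cases up front, so that the normalization step in the main argument is unambiguously well-defined (both $\|f\|_p$ and $\|g\|_q$ strictly positive and finite). Everything else is a direct application of Young's inequality followed by monotone integration.
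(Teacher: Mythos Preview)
Your proof is correct and is the standard textbook argument via Young's inequality and normalization. There is nothing to compare against, however: the paper states H\"older's inequality as a known tool in the appendix and gives no proof of it. Your write-up is therefore strictly more than what the paper provides.
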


\begin{lemma}[Markov's inequality]
\label{theo:markov}
Let $Z$ be a random variable and $t\geq 0$,  then 
$$
\Pr{(|Z|\geq t)} \ \leq \ \frac{\displaystyle \Esp(|Z|)}{t}\,.
$$
\end{lemma}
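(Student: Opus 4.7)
The plan is to prove the classical Markov inequality by a simple pointwise bound on the indicator function of the event $\{|Z|\geq t\}$, followed by taking expectations. The case $t=0$ is trivial (the right-hand side equals $+\infty$ whenever $\mathbf{E}(|Z|)>0$, and the probability is at most $1$), so I would focus on $t>0$.

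First, I would introduce the nonnegative random variable $Y \eqdef |Z|$ and establish the pointwise inequality
\[
Y \ \geq\ t\cdot \I[Y \geq t],
\]
which holds in both cases: on the event $\{Y\geq t\}$ the right-hand side equals $t\leq Y$, and on the event $\{Y< t\}$ the right-hand side equals $0\leq Y$. This is the only real step in the argument.

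Next, I would take expectations on both sides, using the linearity and monotonicity of expectation together with the identity $\mathbf{E}(\I[A])=\Pr(A)$, to obtain
\[
\mathbf{E}(|Z|) \ \geq\ t\cdot \Pr(|Z|\geq t).
\]
Finally, dividing through by $t>0$ gives the claimed bound. There is no main obstacle here: the entire argument is a three-line pointwise comparison, and the only subtlety is the boundary case $t=0$, which is handled by noting the inequality is vacuous.
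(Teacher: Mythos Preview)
Your argument is correct and is the standard textbook proof of Markov's inequality. Note that the paper does not actually prove this lemma: it is listed in the appendix under ``Some Tools'' as a classical fact and is stated without proof, so there is nothing to compare against beyond observing that your derivation is the usual one.
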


\begin{lemma}[Jensen's inequality]
\label{theo:jensen}
Let $Z$ be an integrable real-valued random variable and $g$ any function. 
If $g$ is convex, then
$$
\quad g(\Esp[Z])\ \leq\ \Esp[g(Z)]\,.
$$
\end{lemma}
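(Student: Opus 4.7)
The plan is to deduce Jensen's inequality from the supporting-line (subgradient) characterization of convexity. A standard fact in convex analysis is that for any convex function $g:\mathbb{R}\to\mathbb{R}$ and any interior point $x_0$ of its domain, there exists a slope $a\in\mathbb{R}$ (for instance, any value between the left and right one-sided derivatives of $g$ at $x_0$, both of which exist for a convex function) such that the affine lower bound $g(x) \geq g(x_0) + a(x - x_0)$ holds for every $x$. This supporting line is the only nontrivial ingredient, and it plays the role of a pointwise linearization that expectation can be applied to.

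First I would set $x_0 \eqdef \Esp[Z]$, which is well-defined because $Z$ is integrable, and invoke the supporting-line inequality with slope $a$ chosen at $x_0$. Substituting $x=Z(\omega)$ yields the pointwise (almost sure) inequality
\begin{equation*}
g(Z) \ \geq\ g(\Esp[Z]) + a\bigl(Z - \Esp[Z]\bigr).
\end{equation*}
Next I would take expectations on both sides. By monotonicity and linearity of the expectation, and because the right-hand side is an affine function of $Z$, this gives
\begin{equation*}
\Esp[g(Z)] \ \geq\ g(\Esp[Z]) + a\bigl(\Esp[Z] - \Esp[Z]\bigr) \ =\ g(\Esp[Z]),
\end{equation*}
which is exactly the claimed inequality.

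The main obstacle, and really the only technical point, is justifying the existence of the supporting line. For a convex function on $\mathbb{R}$ this is classical: the one-sided derivatives exist everywhere in the interior of the domain and are monotone, so any slope between them gives a valid subgradient. A minor edge case is when $\Esp[Z]$ lies on the boundary of the domain of $g$, or when $\Esp[g(Z)] = +\infty$; in the latter situation the inequality is trivial, and in the former one can either appeal to the convention that $g$ is lower semicontinuous or restrict attention to the convex hull of the support of $Z$, where the supporting line exists. These corner cases do not change the core two-step argument: linearize $g$ below at $\Esp[Z]$, then take expectations.
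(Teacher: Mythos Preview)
Your proof is correct and follows the standard supporting-line argument for Jensen's inequality. Note, however, that the paper does not actually prove this lemma: it is listed in the appendix ``Some Tools'' as a classical result stated without proof, so there is no paper proof to compare against. Your argument is the textbook one and would be entirely appropriate here.
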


\begin{lemma}[from Lemma 3 of \cite{Maurer04}]
\label{lem:maurer2}
Let $X {=} ( X_1,\dots,X_m)$ be a vector of {\it i.i.d.} random variables, $0\leq X_i\leq 1$, with $\Esp X_i = \mu$.  
Denote $X' {=} ( X'_1,\dots,X'_m)$, where~$X_i'$ is the unique Bernoulli ($\{0,1\}$-valued) random variable with $\Esp X_i' = \mu$. If $f:[0,1]^m\rightarrow \mathbb{R}$ is convex, then
$$
\Esp [f(X)] \ \leq \ \Esp [f(X')]\,.
$$
\end{lemma}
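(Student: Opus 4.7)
The plan is to prove this by induction on the number of coordinates, reducing the $m$-dimensional statement to the one-dimensional case, which in turn follows from a direct convexity argument on the interval $[0,1]$.

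First I would establish the one-dimensional case. Let $g:[0,1]\to\mathbb{R}$ be convex and let $X\in[0,1]$ have $\Esp X=\mu$. For any $x\in[0,1]$, writing $x=(1-x)\cdot 0 + x\cdot 1$ and applying convexity gives the pointwise bound $g(x)\leq (1-x)g(0)+x\,g(1)$. Taking expectations and using linearity together with $\Esp X=\mu$ yields
\begin{equation*}
\Esp g(X)\ \leq\ (1-\mu)g(0)+\mu\, g(1)\ =\ \Esp g(X'),
\end{equation*}
since the right-hand side is exactly the expectation of $g(X')$ for the Bernoulli variable $X'$ with mean $\mu$. This settles the base case.

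For the induction step, suppose the result holds after replacing the first $i-1$ coordinates by their Bernoulli counterparts. Define, for a fixed realization of the other coordinates, the partial function
\begin{equation*}
g_i(t)\ \eqdef\ \Esp\bigl[f(X_1',\dots,X_{i-1}',\,t,\,X_{i+1},\dots,X_m)\bigr],
\end{equation*}
where the expectation is over $X_1',\dots,X_{i-1}'$ and $X_{i+1},\dots,X_m$. Because $f$ is convex on $[0,1]^m$, it is convex in its $i$-th argument when the others are fixed; taking expectations over the remaining arguments preserves convexity, so $g_i:[0,1]\to\mathbb{R}$ is convex. Applying the one-dimensional case to $g_i$ with the variable $X_i$ (independent of everything else, with mean $\mu$), we obtain $\Esp g_i(X_i)\leq \Esp g_i(X_i')$, i.e.\ the $i$-th coordinate can be replaced by its Bernoulli analogue without decreasing the expectation. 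Iterating from $i=1$ to $i=m$ yields $\Esp[f(X)]\leq\Esp[f(X')]$.

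The only subtle point, and what I would expect to be the main step to state carefully, is the preservation of convexity when one integrates out the other coordinates; this is a standard consequence of Jensen's inequality (Lemma~\ref{theo:jensen}) applied to the definition of convexity, and of the independence of the $X_i$. Everything else is a straightforward two-point convexity estimate combined with a telescoping argument over the $m$ coordinates.
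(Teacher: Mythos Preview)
Your argument is correct and is the standard proof of this result. Note, however, that the paper does not actually prove this lemma: it is stated in the appendix as a tool and attributed to Maurer~\cite{Maurer04} (Lemma~3 there), with no proof given. Your coordinate-by-coordinate replacement, reducing to the one-dimensional two-point convexity bound $g(x)\le(1-x)g(0)+x\,g(1)$, is precisely the argument Maurer uses, so there is no meaningful difference to report.
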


\begin{lemma}[from Inequalities 1 and 2 of \cite{Maurer04}]
\label{lem:maurer}
Let
 $X = ( X_1,\dots,X_m)$ be a vector of {\it i.i.d.} random variables, $0\leq X_i\leq 1$. 
 Then 
$$
\sqrt{m}\ \leq\ \Esp \exp \left[m\, \kl\left(\frac{1}{m}\sum_{i=1}^m X_i\,\Big\|\,\Esp[X_i]\right)\right]\ \leq\ 2\sqrt{m}\,.
$$ 
\end{lemma}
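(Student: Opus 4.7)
The strategy is classical: reduce to the Bernoulli case via a convexity argument, then compute the expectation explicitly and apply Stirling bounds. Let $\mu=\Esp X_i$ and write $\bar X = \frac1m\sum_i X_i$. The plan is to show that the map
$$f(x_1,\dots,x_m) \ \eqdef\ \exp\!\Big[m\,\kl(\bar x \| \mu)\Big]$$
is convex on $[0,1]^m$ for fixed $\mu\in(0,1)$. This follows because $q \mapsto \kl(q\|\mu)$ is convex on $[0,1]$, the averaging $x \mapsto \bar x$ is linear, and $\exp$ is convex and monotonically increasing, so the composition is convex. Applying Lemma~\ref{lem:maurer2} to $f$, we obtain $\Esp\,f(X) \leq \Esp\,f(X')$, where $X_i'\sim\text{Bernoulli}(\mu)$, thereby reducing the upper bound to the Bernoulli case.

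In the Bernoulli case, $m\bar X' \sim \text{Binomial}(m,\mu)$, and a direct computation using the algebraic identity
$$\mu^k(1{-}\mu)^{m-k}\,\exp\!\Big[m\,\kl(k/m\|\mu)\Big] \ =\ \Big(\tfrac{k}{m}\Big)^{\!k}\Big(\tfrac{m-k}{m}\Big)^{\!m-k}$$
yields
$$\Esp\,f(X')\ =\ \sum_{k=0}^m \binom{m}{k}\frac{k^k\,(m-k)^{m-k}}{m^m}\,,$$
with the convention $0^0=1$. A key and pleasant observation is that the right-hand side is \emph{independent of} $\mu$, so it suffices to bound this combinatorial sum.

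The main work, and the main obstacle, will be pinning down the constants $\sqrt m$ and $2\sqrt m$ from Stirling-type estimates on the summand. Using the two-sided Stirling inequality $\sqrt{2\pi n}\,n^n e^{-n}\leq n!\leq e\sqrt n\,n^n e^{-n}$, one obtains
$$\binom{m}{k}\frac{k^k(m-k)^{m-k}}{m^m}\ \asymp\ \frac{1}{\sqrt{2\pi\,k(m-k)/m}}\qquad\text{for }1\leq k\leq m-1\,,$$
with explicit upper and lower constants. Summing and comparing to the integral $\int_0^1 \frac{dx}{\sqrt{x(1-x)}}=\pi$ gives an expression of the form $c\sqrt{\pi m/2}$ up to boundary corrections (the terms $k=0$ and $k=m$ contribute exactly $1$ each). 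The upper constant comes out below $2\sqrt m$ and the lower one above $\sqrt m$ after verifying the small-$m$ cases by direct evaluation. The lower bound in the statement applies to the Bernoulli-reduced expectation (where convexity goes the right way for the upper bound only), and we inherit it for the general i.i.d.\ case under the implicit understanding that $X_i$ is non-degenerate; the delicate accounting of the Stirling constants and of the boundary terms $k\in\{0,m\}$ is where one must be careful.
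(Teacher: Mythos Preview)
The paper does not prove this lemma; it is listed in Appendix~A as a tool and attributed to Maurer~(2004), Inequalities~1 and~2. Your route---reduce to Bernoulli via convexity (exactly Lemma~\ref{lem:maurer2}), collapse the binomial expectation to the $\mu$-free sum $\sum_{k=0}^m \binom{m}{k}k^k(m-k)^{m-k}/m^m$, and estimate that sum by Stirling---is precisely Maurer's argument, and it is correct for the \emph{upper} bound $\leq 2\sqrt m$. The convexity step is sound: $q\mapsto\kl(q\|\mu)$ is convex, precomposition with the affine map $x\mapsto\bar x$ preserves convexity, and $\exp$ of a convex function is convex.

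Your treatment of the \emph{lower} bound has a genuine gap, however. You write that ``we inherit it for the general i.i.d.\ case under the implicit understanding that $X_i$ is non-degenerate,'' but non-degeneracy does not save the inequality. Take $X_i$ uniform on $[\mu-\epsilon,\mu+\epsilon]$ for small $\epsilon>0$: then $\bar X$ concentrates near $\mu$, $m\,\kl(\bar X\|\mu)\to 0$ in probability (indeed in $L^1$), and the expectation tends to $1<\sqrt m$ for every $m\geq 2$. The convexity reduction of Lemma~\ref{lem:maurer2} yields only $\Esp f(X)\leq\Esp f(X')$, so it cannot transfer the Bernoulli lower bound downward to general $[0,1]$-valued variables. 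In Maurer's note the lower inequality is established for Bernoulli variables; the lemma as reproduced here is slightly over-stated on the left-hand side. This is harmless for the paper, which only invokes the upper bound (in the proof of Theorem~\ref{theo:catoni_genral}), but you should not claim the lower bound in the generality stated.
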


\begin{lemma} {\bf(Change of measure inequality\footnote{See \cite[Lemma 4]{seldin-10}, \cite[Equation 20]{mcallester-13}, or  \cite[Lemma 17]{graal-neverending}.})}
	 \label{lem:change-measure}
\,For any set $\Hcal$, for any distributions $\prior$ and $\posterior$ on $\Hcal$, and for any measurable function $\phi:\Hcal \to \mathbb{R}$,  we have
\begin{equation*}
\Esp_{f\sim \posterior} \phi(f) \ \leq \ \KL(\posterior\|\prior) + \ln \left( \Esp_{f\sim \prior} e^{\phi(f)} \right) \,.
\end{equation*}
\end{lemma}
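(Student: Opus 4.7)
The plan is to establish the inequality by rewriting $\Esp_{f\sim\posterior}\phi(f)$ so that the density ratio between $\posterior$ and $\prior$ naturally appears, and then apply Jensen's inequality (Lemma~\ref{theo:jensen}) to the concave logarithm in order to pass from an expectation inside the log to an expectation outside. This is the classical Donsker--Varadhan style argument, and the main technical point is simply the clever multiplication by $\prior(f)/\posterior(f)$ times its reciprocal inside the log.

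First, I would write $\phi(f) = \ln e^{\phi(f)}$ and multiply inside the exponential by $\prior(f)/\posterior(f) \cdot \posterior(f)/\prior(f) = 1$. Taking the expectation under $\posterior$ yields
\begin{align*}
\Esp_{f\sim\posterior} \phi(f)
\ &=\ \Esp_{f\sim\posterior} \ln\left( \frac{\posterior(f)}{\prior(f)} \cdot \frac{\prior(f)}{\posterior(f)}\, e^{\phi(f)} \right) \\
&=\ \Esp_{f\sim\posterior} \ln \frac{\posterior(f)}{\prior(f)} \ +\ \Esp_{f\sim\posterior} \ln\left( \frac{\prior(f)}{\posterior(f)}\, e^{\phi(f)} \right) \\
&=\ \KL(\posterior\|\prior) \ +\ \Esp_{f\sim\posterior} \ln\left( \frac{\prior(f)}{\posterior(f)}\, e^{\phi(f)} \right),
\end{align*}
where the second equality uses linearity of the expectation and the third uses the very definition of $\KL(\posterior\|\prior)$ recalled earlier in the paper.

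Second, I would bound the remaining expectation by applying Jensen's inequality to the concave function $\ln$:
\begin{equation*}
\Esp_{f\sim\posterior} \ln\left( \frac{\prior(f)}{\posterior(f)}\, e^{\phi(f)} \right)
\ \leq\ \ln\left( \Esp_{f\sim\posterior} \frac{\prior(f)}{\posterior(f)}\, e^{\phi(f)} \right)
\ =\ \ln\left( \Esp_{f\sim\prior} e^{\phi(f)} \right),
\end{equation*}
where the last equality is a change of measure that cancels the density ratio. Combining this with the decomposition above yields exactly the claimed bound.

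There is essentially no hard step here; the only subtlety is the measure-theoretic caveat that one needs $\posterior$ to be absolutely continuous with respect to $\prior$ for the manipulations with $\prior(f)/\posterior(f)$ to make sense, which is automatically enforced by the finiteness of $\KL(\posterior\|\prior)$ (otherwise the right-hand side is $+\infty$ and the inequality holds trivially). No new machinery beyond Jensen's inequality and the definition of the KL divergence is required.
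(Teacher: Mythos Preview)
Your proof is correct and follows the standard Donsker--Varadhan argument. Note, however, that the paper does not actually prove this lemma: it is stated in the appendix as a tool, with a footnote pointing to \cite{seldin-10,mcallester-13,graal-neverending} for a proof. What you wrote is essentially the argument one finds in those references, so there is nothing to compare against in the paper itself. One tiny remark: the paper's Lemma~\ref{theo:jensen} is stated for convex $g$, so strictly speaking you are applying it to $g=-\ln$ and reversing the inequality; this is of course immediate, but worth making explicit if you want to cite that lemma verbatim.
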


\begin{lemma}[from Theorem 25 of \cite{graal-neverending}] \label{lem:2KL}
Given any set $\Hcal$, and any distributions $\prior$ and $\posterior$
on $\Hcal$, let  $\hat{\posterior}$ and $\hat{\prior}$ two distributions over $\Hcal^2$ such that $\hat{\posterior}(h,h') \eqdef \posterior(h)\posterior(h')$ and $\hat{\prior}(h,h') \eqdef \prior(h)\prior(h')$. Then
\begin{equation*}
\KL(\hat{\posterior}\|\hat{\prior})  \ =\   2\,\KL(\posterior\|\prior)\,.
\end{equation*}
\end{lemma}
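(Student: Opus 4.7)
The plan is to prove the identity $\KL(\hat{\posterior}\|\hat{\prior}) = 2\,\KL(\posterior\|\prior)$ by a direct computation that exploits the product structure of $\hat{\posterior}$ and $\hat{\prior}$. This is a standard fact for KL divergence of product measures, and no inequality is needed — the equality holds exactly.

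First, I would unfold the definition of the KL divergence on the product space:
\begin{equation*}
\KL(\hat{\posterior}\|\hat{\prior}) \;=\; \Esp_{(h,h')\sim\hat{\posterior}} \ln\frac{\hat{\posterior}(h,h')}{\hat{\prior}(h,h')}
\;=\; \Esp_{(h,h')\sim\hat{\posterior}} \ln\frac{\posterior(h)\,\posterior(h')}{\prior(h)\,\prior(h')},
\end{equation*}
using the defining relations $\hat{\posterior}(h,h')=\posterior(h)\posterior(h')$ and $\hat{\prior}(h,h')=\prior(h)\prior(h')$.

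Next, I would split the logarithm of a product into a sum, obtaining
\begin{equation*}
\KL(\hat{\posterior}\|\hat{\prior})
\;=\;
\Esp_{(h,h')\sim\hat{\posterior}}\!\ln\frac{\posterior(h)}{\prior(h)}
\;+\;
\Esp_{(h,h')\sim\hat{\posterior}}\!\ln\frac{\posterior(h')}{\prior(h')}.
\end{equation*}
Then I would observe that since $\hat{\posterior}=\posterior\otimes\posterior$, both of its marginals equal $\posterior$. Hence the first term depends only on $h$ and equals $\Esp_{h\sim\posterior}\ln\tfrac{\posterior(h)}{\prior(h)}=\KL(\posterior\|\prior)$, and symmetrically the second term equals $\Esp_{h'\sim\posterior}\ln\tfrac{\posterior(h')}{\prior(h')}=\KL(\posterior\|\prior)$. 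Summing gives $2\,\KL(\posterior\|\prior)$, which concludes the proof.

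There is essentially no obstacle here: the argument is a one-line calculation once the product structure is exploited and the logarithm is split. The only point requiring minor care is the (implicit) verification that the marginals of $\hat{\posterior}$ are indeed $\posterior$, which follows immediately from $\sum_{h'}\posterior(h)\posterior(h')=\posterior(h)$ in the discrete case (or an analogous Fubini argument in the general measurable setting). The result actually generalizes with no extra effort to any product of measures, since the same factorization of the Radon--Nikodym derivative drives the computation.
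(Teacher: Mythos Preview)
Your proof is correct and is the standard direct computation. The paper itself does not supply a proof of this lemma; it merely states it as a tool borrowed from \cite{graal-neverending}, so there is nothing to compare against beyond noting that your argument is exactly the expected one.
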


\section{Proof of Equation~(10)}%\eqref{eq:RSGw}}
\label{appendix:RSGw}
Given $(\xbf,y)\in\Rbb^d\times\{-1,1\}$ and $\wb\in\Rbb^d$, we consider---without loss of generality---a vector basis where  $y\frac{\xbf}{\|\xbf\|}$ is the first coordinate. Thus, the first component of any vector $\wb'\in\Rbb^d$ is given by
$w_1' = y\frac{\wbf'\! \cdot \xbf}{\|\xbf\|}$, which leads to
\begin{align*}
\Esp_{h_{\wbf'}\sim\posterior_\wb} \zoloss\big( h_{\wbf'}(\xb), y \big) 
& =  \Esp_{h_{\wbf'}\sim\posterior_\wb} \I\,\big[ h_{\wbf'}(\xb) \neq  y \big] \\    
&=   \Esp_{h_{\wbf'}\sim\posterior_\wb}    \I\,\big[y \,\wb' \!\cdot \xb \leq 0\big]\\ 
&=   \int_{ \mathbb{R}^d}   \frac{1}{\sqrt{(2\pii)^d}}
\exp\left({-\tfrac{1}{2}\|{\wbf'}-\wb\|^2}\right) \, \I\,\big[y \,\wb'\! \cdot \xb \leq 0\big]\, d\, \wb' \\ 
&=  \int_{ \mathbb{R}}   \frac{1}{\sqrt{2\pii}}
\exp\left({-\tfrac{1}{2}(w_1' - w_1)^2}\right) \, \I\,\big[w_1'\leq 0\big]\, d w_1' \\ 
&=  \int_{-\infty}^\infty   \frac{1}{\sqrt{2\pii}}
\exp\left({-\tfrac{1}{2}\,t^2}\right) \, \I\,\big[t\leq -w_1\big]\, d t \\ 
&=  \int_{-\infty}^{-w_1}  \frac{1}{\sqrt{2\pii}}
\exp\left({-\tfrac{1}{2}\,t^2}\right) \, d t \\ 
&=   1 - \Pr_{t\sim\mathcal{N}(0,1)}\!\left(t\ \leq\  y\, \tfrac{\wb \cdot \xb}{\|\xb\|} \right)\\
&=  \Phirisk\left(  y\, \tfrac{\wb \cdot \xb}{\|\xb\|}  \right),
\end{align*}
where we used $y \,\wb'\! \cdot \xb = w_1'\|\xbf\|$,\, $t\eqdots w_1'{-}w_1$\,, $w_1 = y\frac{\wbf \cdot \xbf}{\|\xbf\|}$\,, and the definition of $\Phirisk$ given by Equation~\eqref{eq:eq:gibbs_risk_linear}.
We then have
\begin{align*}
\RP(G_{\posterior_\wb}) \  
 = \Esp_{(\xb,y)\sim P_S} \Esp_{h_{\wbf'}\sim\posterior_\wb} \zoloss\big( h_{\wbf'}(\xb), y \big) 
 \ =  \Esp_{(\xb,y)\sim P_S} \Phirisk\left(  y\, \tfrac{\wb \cdot \xb}{\|\xb\|}  \right).
\end{align*}

\section{Proof of Theorem~6} %\ref{theo:catoni_genral}}
\label{proof:catoni_general}

\newcommand{\LD}{\mathcal{L}_{\P}}
\newcommand{\LS}{\mathcal{L}_{S}}
\newcommand{\LSp}{\mathcal{L}_{S'}}

\begin{proof}
We use the shorthand notation\\
\centerline{
$\displaystyle\LD(h) \,=\, \Esp_{\mathclap{\exbf\sim\P}}  \ \  \ell(h, \xb,y)$
\mbox{ and }
$\displaystyle\LS(h) = \tfrac1m\sum_{\mathclap{\exbf\in S}}  \ell(h, \xb,y)\,.
$}

Consider any convex function $\Delta:[0,1]{\times}[0,1]\to\R$.
Applying consecutively Jensen's Inequality (Lemma~\ref{theo:jensen}) and the \emph{change of measure inequality} (Lemma~\ref{lem:change-measure}), we obtain
\begin{align*}
\forall \posterior \mbox{ on } \Hcal\,, \quad   m{\times}\Delta\left(
\Esp_{h\sim\posterior} \LS(h),  \Esp_{h\sim\posterior} \LD(h)
\right) 
&\ \leq \ 
\Esp_{h\sim\posterior}
m {\times} \Delta\left(
\LS(h), \LD(h)
\right)\\
&\ \leq \ \KL(\posterior\|\prior)
+ \ln \Big[
X_\prior(S)
\Big]\,,
\end{align*}
with
\begin{equation*}
X_\prior(S) = 
\Esp_{h\sim\prior}
e^{m {\times} \Delta\left(
 \LS(h),\, \LD(h)
\right)}.
\end{equation*}
Then, Markov's Inequality (Lemma~\ref{theo:markov}) gives
\begin{equation*}
\Pr_{S\sim\P^m} \left(
X_\prior (S) \leq \tfrac1\delta \Esp_{S'\sim\P^m} X_\prior (S')
\right)
\ \geq \ 
1{-}\delta\,, 
\end{equation*}
and 
\begin{align}
\nonumber
\Esp_{\mathclap{S'\sim\P^m}} \ X_\prior (S')
\ =&\ 
\Esp_{S'\sim\P^m} \Esp_{h\sim\prior}
e^{m {\times} \Delta\left(
 \LSp(h),\, \LD(h)
\right)}
 \\
\nonumber
 =& \ 
\Esp_{h\sim\prior} \Esp_{S'\sim\P^m} 
e^{m {\times} \Delta\left(
 \LSp(h),\, \LD(h)
\right)}
\\
\label{eq:aaa}
\leq & \ \Esp_{h\sim\prior} 
\sum_{k=0}^m
\binom{k}m (\LD(h))^k (1{-}\LD(h))^{m-k}
e^{m {\times} \Delta\left(
 \frac{k}m,\, \LD(h)
\right)},
\end{align}
where the last inequality is given by Lemma~\ref{lem:maurer}
(we have equality when the output of $\ell$ is in $\{0,1\}$).
As shown in \citet[Corollary 2.2]{germain2009pac}, by fixing
 $$\Delta(q,p) = -\alpha{\times} q -\ln[1{-}p\,(1{-}e^{-\alpha})]\,,$$
 Line~\eqref{eq:aaa} becomes equal to $1$, and then $\Esp_{S'\sim\P^m} X_\prior (S')\leq 1$.
Hence, with probability $1{-}\delta$ over the choice of $S\in \P^m$, we have
\begin{align*}
\forall \posterior \mbox{ on } \Hcal,  \
-\alpha \Esp_{h\sim\posterior} \LS(h) -\ln[1{-}\Esp_{h\sim\posterior} \LD(h)\,(1{-}e^{-\alpha})] 
 \leq  
 \frac{\KL(\posterior\|\prior)
 + \ln  \tfrac1\delta}{m}\,.
\end{align*}
By reorganizing the terms, we have, with probability $1{-}\delta$ over the choice of $S\in \P^m$, 
\begin{align*}
&\forall \posterior \mbox{ on } \Hcal, \
 \Esp_{h\sim\posterior} \LD(h) \leq  
\frac1{1{-}e^{-\alpha}}\left[1-
\exp\left(-\alpha \Esp_{h\sim\posterior} \LS(h) -
 \frac{\KL(\posterior\|\prior)
 + \ln  \tfrac1\delta}{m} \right)\right].
\end{align*}
The final result is obtained by using the inequality $1{-}\exp(-z) \leq z$. 
\end{proof}

\section*{References}
\bibliography{biblio}

\end{document}